\newcommand{\bX}{\text{\boldmath{$X$}}}
\newcommand{\bw}{\text{\boldmath{$w$}}}
\newcommand{\bB}{\text{\boldmath{$B$}}}
\newcommand{\bz}{\boldsymbol{z}}
\newcommand{\bx}{\boldsymbol{x}}
\newcommand{\ba}{\boldsymbol{a}}
\newcommand{\bb}{\boldsymbol{b}}
\newcommand{\by}{\boldsymbol{y}}
\newcommand{\bY}{\boldsymbol{Y}}
\newcommand{\bU}{\boldsymbol{U}}
\newcommand{\bzero}{\boldsymbol{0}}
\newcommand{\bone}{\boldsymbol{1}}
\newcommand{\bI}{\boldsymbol{I}}
\newcommand{\cE}{\mathcal{E}}
\newcommand{\cD}{\mathcal{D}}
\newcommand{\bh}{\boldsymbol{h}}
\newcommand{\bq}{\boldsymbol{q}}
\newcommand{\bA}{\boldsymbol{A}}
\newcommand{\bM}{\boldsymbol{M}}
\newcommand{\bbP}{\mathbb{P}}
\newcommand{\bbR}{\mathbb{R}}
\newcommand{\bv}{\boldsymbol{v}}
\newcommand{\median}{\mathsf{med}}
\newcommand{\sgn}{\text{sgn}}
\newcommand{\erf}{\text{erf}}
\newcommand{\dist}{\text{dist}}
\newtheorem{definition}{\textbf{Definition}}
\newtheorem{corollary}{\textbf{Corollary}}
\newtheorem{lemma}{\textbf{Lemma}}
\newtheorem{theorem}{\textbf{Theorem}}
\newtheorem{proposition}{\textbf{Proposition}}
\newtheorem{remark}{\textbf{Remark}}
\newcommand{\nn}{\nonumber}
\newcommand{\mE}{\mathbb{E}}
\newcommand{\Var}{\mathsf{Var}}
\newcommand{\cU}{\mathcal{U}}
\newcommand{\cR}{\mathcal{R}}
\newcommand{\cN}{\mathcal{N}}
\newcommand{\cB}{\mathcal{B}}
\newcommand{\cP}{\mathcal{P}}
\newcommand{\cA}{\mathcal{A}}
\newcommand{\cT}{\mathcal{T}}
\newcommand{\cO}{\mathcal{O}}
\DeclareMathAlphabet{\matheuf}{U}{euf}{m}{n}
\title{Median-Truncated Nonconvex Approach \\ for Phase Retrieval with Outliers\footnote{The work of H. Zhang and Y. Liang is supported in part by the grants AFOSR FA9550-16-1-0077 and NSF ECCS 16-09916. The work of Y. Chi is supported in part by the grants NSF ECCS-1650449, AFOSR FA9550-15-1-0205 and ONR N00014-15-1-2387.}
\footnote{The material in this paper was presented in part at the International Conference of Machine Learning (ICML), New York, USA, 2016.}}
\author{
Huishuai Zhang$^s$, Yuejie Chi$^o$ and Yingbin Liang$^s$\\
$^s$Department of EECS, Syracuse University, Syracuse, NY 13244 \\
$^o$Department of ECE, Ohio State University, Columbus, OH 43210}
\begin{document}

\maketitle

\vskip 0.2in

\begin{abstract}
This paper investigates the phase retrieval problem, which aims to recover a signal from the magnitudes of its linear measurements. We develop statistically and computationally efficient algorithms for the situation when the measurements are corrupted by sparse outliers that can take arbitrary values. We propose a novel approach to robustify the gradient descent algorithm by using the sample median as a guide for pruning spurious samples in initialization and local search. Adopting the Poisson loss and the reshaped quadratic loss respectively, we obtain two algorithms termed \emph{median-TWF} and \emph{median-RWF}, both of which provably recover the signal from a near-optimal number of measurements when the measurement vectors are composed of i.i.d. Gaussian entries, up to a logarithmic factor, even when a constant fraction of the measurements are adversarially corrupted. We further show that both algorithms are stable in the presence of additional dense bounded noise. Our analysis is accomplished by developing non-trivial concentration results of median-related quantities, which may be of independent interest. We provide numerical experiments to demonstrate the effectiveness of our approach.
\end{abstract}

\section{Introduction}\label{Introduction}

Phase retrieval is a classical problem in signal processing, optics and machine learning that has a wide range of applications such as X-ray crystallography \cite{drenth2007x}, astronomical imaging, and diffraction imaging. Mathematically, it is formulated as recovering a signal $\bx\in\mathbb{R}^n/\mathbb{C}^n$ from the magnitudes of its linear measurements:
\begin{flalign}
y_i = |\langle \ba_i, \bx\rangle|^2, \quad i=1,\ldots, m, \label{eq:problem}
\end{flalign}
where $m$ is the total number of measurements, and $\ba_i\in \mathbb{R}^n/\mathbb{C}^n$ is the $i$th known measurement vector, $i=1, \ldots, m$. Phase retrieval is known to be notoriously difficult due to the quadratic form of the measurements. Classical methods based on alternating minimization between the signal of interest and the phase information \cite{fienup1982phase}, though computationally simple, are often trapped at local minima and lack rigorous performance guarantees. 

There has been, however, a recent line of work that successfully develops provably accurate algorithms for phase retrieval, in particular for the case when the measurement vectors $\ba_i$'s are composed of \emph{independent and identically distributed} (i.i.d.) Gaussian entries. 
Broadly speaking, two classes of approaches have been proposed based on convex and nonconvex optimization techniques, respectively. Using the lifting trick, the phase retrieval problem can be reformulated as estimating a rank-one positive semidefinite (PSD) matrix $\bX=\bx\bx^T$ from linear measurements \cite{balan2006signal}, for which convex relaxations into semidefinite programs have been studied \cite{waldspurger2015phase,candes2013phaselift,chen2015exact,demanet2012stable,candes2012solving,li2012sparse}. In particular, Phaselift \cite{candes2013phaselift} perfectly recovers the signal with high probability as long as the number of measurements $m$ is on the order of $n$. However, the computational complexity of Phaselift is at least cubic in $n$, which becomes expensive when $n$ is large. Very recently, another convex relaxation has been proposed in the natural parameter space without lifting \cite{goldstein2016phasemax,bahmani2016phase,hand2016elementary}, resulting in a linear program that can handle large problem dimensions as long as $m$ is on the order of $n$.

Another class of approaches aims to find the signal that minimizes a loss function based on certain postulated noise model, which often results in a nonconvex optimization problem due to the quadratic measurements. Despite nonconvexity, it is demonstrated in \cite{candes2015phase,soltanolkotabi2014algorithms} that the so-called Wirtinger flow (WF) algorithm, based on gradient descent, works remarkably well: it converges to the global optima when properly initialized using the spectral method. Several variants of WF have been proposed thereafter to further improve its performance, including the truncated Wirtinger flow (TWF) algorithm \cite{chen2015solving}, 
the reshaped Wirtinger flow (RWF) algorithm \cite{zhang2016reshaped}, 
and the truncated amplitude flow (TAF) algorithm \cite{wang2016solving} 
Notably, TWF, RWF and TAF are shown to converge globally at a linear rate as long as $m$ is on the order of $n$, and attain $\epsilon$-accuracy within $\cO(mn\log(1/\epsilon))$ flops using a constant step size.\footnote{Notation $f(n) = \cO(g(n))$ or $f(n)\lesssim g(n)$ means that there exists a  constant $c>0$ such that $|f(n)|\le c|g(n)|$.} 

\subsection{Outlier-Robust Phase Retrieval}
The aforementioned algorithms are evaluated based on their {\em statistical} and {\em computational} performances: statistically, we wish the sample complexity $m$ to be as small as possible; computationally, we wish the run time to be as fast as possible. As can be seen, existing WF-type algorithms are already near-optimal both statistically and computationally. This paper introduces a third consideration, which is the {\em robustness to outliers}, where we wish the algorithm continues to work well even in the presence of outliers that may take arbitrary magnitudes. This bears great importance in practice, because outliers arise frequently from the phase imaging applications \cite{weller2015undersampled} due to various reasons such as detector failures, recording errors, and missing data. Specifically, suppose the set of $m$ measurements are given as
\begin{flalign}\label{eq:outliermodel}
	y_i=\left|\langle \ba_i,\bx \rangle\right|^2+\eta_i, \quad i=1,\cdots,m,
\end{flalign}
where $\eta_i \in \mathbb{R}/\mathbb{C}$ for $i=1,\ldots,m$ are outliers that can take arbitrary values. We assume that outliers are sparse with no more than $s m$ nonzero values, i.e., $\|\boldsymbol{\eta}\|_0\le s m$, where $\boldsymbol{\eta}=\{\eta_i\}_{i=1}^m\in\mathbb{R}^m/\mathbb{C}^m$. Here, $s$ is a nonzero constant, representing the faction of measurements that are corrupted by outliers.

The goal of this paper is to develop phase retrieval algorithms with both statistical and computational efficiency, and provable robustness to even a constant proportion of outliers. None of the existing algorithms meet all of the three considerations simultaneously. The performance of WF-type algorithms is very sensitive to outliers which introduce anomalous search directions when their values are excessively deviated. While a form of Phaselift \cite{hand2015phaselift} is robust to a constant portion of outliers, it is computationally too expensive.



\subsection{Median-Truncated Gradient Descent}

A natural idea is to recover the signal as a solution to the following loss minimization problem:
\begin{flalign}\label{eq:loss}
  \min_{\bz}  \frac{1}{2m} \sum_{i=1}^m\ell(\bz;y_i)
\end{flalign}
where $\ell(\bz,y_i)$ is postulated using the negative likelihood of Gaussian or Poisson noise model. Since the measurements are quadratic in $\bx$, the objective function is nonconvex. We consider two choices of $\ell(\bz;y_i)$ in this paper. The first one is the Poisson loss function of $|\ba_i^T\bz|^2$ employed in TWF \cite{chen2015solving}, which is given by 
\begin{flalign}
\ell(\bz; y_i)=  |\ba_i^T\bz|^2-y_i\log |\ba_i^T\bz|^2 . \label{eq:TWFloss_sample}
\end{flalign}
The second one is the {\em reshaped}\footnote{It is called ``reshaped'' in order to distinguish it from the quadratic loss of $|\ba_i^T\bz|^2$ used in \cite{candes2015phase}.} quadratic loss of $|\ba_i^T\bz|$ employed in RWF \cite{zhang2016reshaped}, which is given by
\begin{flalign}
\ell(\bz; y_i) =  \left(|\ba_i^T\bz|-\sqrt{y_i}\right)^2. \label{eq:rushloss_sample}
\end{flalign}
Though \eqref{eq:rushloss_sample} is not smooth everywhere, it resembles more closely the least-squares loss as if the phase information is available than the quadratic loss of $|\ba_i^T\bz|^2$, resulting in a more amenable curvature. 

In the presence of outliers, the signal of interest may no longer be the global optima of \eqref{eq:loss}. Therefore, we wish to only include the clean samples that are not corrupted in the optimization \eqref{eq:loss}, which is, however, impossible as we do not assume any {\em a priori} knowledge of the outliers. Our key strategy is to prune the bad samples {adaptively and iteratively}, using a gradient descent procedure that proceeds as follows:
\begin{flalign}\label{eq:TWF_gradient_update}
\bz^{(t+1)}=\bz^{(t)}- \frac{\mu}{m}\sum_{i\in T_{t+1}} \nabla \ell(\bz^{(t)};y_i).
\end{flalign}
where $\bz^{(t)}$ denotes the $t$th iterate of the algorithm, $ \nabla \ell(\bz^{(t)};y_i)$ is the gradient of $ \ell(\bz^{(t)};y_i)$, and $\mu$ is the step size, for $t=0,1,\ldots$. In each iteration, only a subset $T_{t+1}$ of data-dependent and iteration-varying samples  contributes to the search direction. But how to select the set $T_{t+1}$? Note that the gradient of the loss function typically contains the term $\left|y_i-|\ba_i^T\bz^{(t)}|^2\right|$ (for TWF) or $\left|\sqrt{y_i}-|\ba_i^T\bz^{(t)} | \right|$ (for RWF), which measures the residual using the current iterate. With $y_i$ being corrupted by arbitrarily large outliers, the gradient can deviate the search direction from the signal arbitrarily. 

Inspired by the utility of {\em median} to combat outliers in robust statistics \cite{huber2011robust}, we prune samples whose gradient components $\nabla \ell(\bz^{(t)};y_i)$ are much larger than the {\em sample median} to control the search direction of each update. Hiding some technical details, this gives the main ingredient of our {\em median-truncated gradient descent} update rule\footnote{Please see the exact form of the algorithms in Section~\ref{sec:medianApproach}.}, i.e., for each iterate $t\geq 0$:
\begin{align}
T_{t+1} &:= \{i:  |y_i-|\ba_i^T\bz^{(t)}|^2| \lesssim \mathsf{med}(\{|y_i-|\ba_i^T\bz^{(t)}|^2\}_{i=1}^m) \}, \quad \mbox{for TWF}, \\
T_{t+1} &:= \{i:   | \sqrt{y_i}-|\ba_i^T\bz^{(t)} |  | \lesssim \mathsf{med}(\{|\sqrt{y_i}-|\ba_i^T\bz^{(t)} |\}_{i=1}^m) \}, \quad \mbox{for RWF}, 
\end{align}
where $\mathsf{med}(\cdot)$ denotes the sample median. The robust property of median
lies in the fact that the median cannot be arbitrarily perturbed unless the outliers dominate the inliers \cite{huber2011robust}. This is in sharp contrast to the sample mean, which can be made arbitrarily large even by a single outlier. Thus, using the sample median in the truncation rule can effectively remove the impact of outliers. Finally, there still left the question of initialization, which is critical to the success of the algorithm. We use the spectral method, i.e., initialize $\bz^{(0)}$ by a proper rescaling of the top eigenvector of a surrogate matrix
\begin{equation}\label{eq:TWF_initialization}
\bY=\frac{1}{m}\sum_{i\in T_0 } y_i\ba_i \boldsymbol{a}_i^T ,
\end{equation}
where again $T_0$  includes only a subset of samples whose values are not excessively large compared with the sample median of the measurements, given as
\begin{equation}
T_0  =\{ i: y_i \lesssim  \mathsf{med}(\{ y_i \}_{i=1}^m)  \}.
\end{equation}
Putting things together (the update rule \eqref{eq:TWF_gradient_update} and the initialization \eqref{eq:TWF_initialization}), we obtain two new median-truncated gradient descent algorithms, median-TWF and median-RWF, based on applying the median truncation strategy for the loss functions used in TWF and RWF, respectively. The median-TWF and median-RWF algorithms do not assume a priori knowledge of the outliers, such as their existence or the number of outliers, and therefore can be used in an oblivious fashion. Importantly, we establish the following performance guarantees.

\vspace{0.05in}
\textbf{\em Main Result (informal):} For the Gaussian measurement model, with high probability, median-TWF and median-RWF recover all signal $\bx$ up to the global sign at a linear rate of convergence, even with a constant fraction of outliers, as long as the number of measurements $m$ is on the order of $n\log n$. Furthermore, the reconstruction is stable in the presence of additional bounded dense noise.
\vspace{0.05in}
 

Statistically, the sample complexity of both algorithms is near-optimal up to a logarithmic factor, and to reassure, they continue to work even when outliers are absent. Computationally, both algorithms converge linearly, requiring a mere computational cost of $\mathcal{O}(mn\log 1/\epsilon)$ to reach $\epsilon$-accuracy. More importantly, our algorithms now tolerate a constant fraction of arbitrary outliers, without sacrificing performance otherwise. To the best of our knowledge, this is the first application of the median to robustify high-dimensional statistical estimation in the presence of arbitrary outliers with rigorous non-asymptotic performance guarantees. 

To establish the performance guarantees, we first show that the initialization is close enough to the ground truth, and then that within the neighborhood of the ground truth, the gradients satisfy certain {\em Regularity Condition} \cite{candes2015phase,chen2015solving} that guarantees linear convergence of the descent rule, as long as the fraction of outliers is small enough and the sample complexity is large enough. As a nonlinear operator, the sample median is much more difficult to analyze than the sample mean, which is a linear operator and many existing concentration inequalities are readily applicable. Therefore, considerable technical efforts are devoted to develop novel non-asymptotic concentrations of the sample median, and various statistical properties of the sample median related quantities, which may be of independent interest. 

Finally, we note that while median-TWF and median-RWF share similar theoretical performance guarantees, their empirical performances vary under different scenarios, due to the use of different loss functions. Their theoretical analyses also have significant difference that worth separate treatments. While we only consider the loss functions used in TWF and RWF in this paper, we believe the median-truncation technique can be applied to gradient descent algorithms for solving other problems as well.

\subsection{Related Work}

Our work is closely related to the TWF algorithm \cite{chen2015solving}, which is also a truncated gradient descent algorithm for phase retrieval. However, the truncation rule in TWF is based on the sample mean, which is very sensitive to outliers. In \cite{li2017lowrank,hand2015phaselift,weller2015undersampled}, the problem of phase retrieval under outliers is investigated, but the proposed algorithms either lack performance guarantees or are computationally too expensive. 
 

The adoption of median in machine learning is not unfamiliar, for example, $K$-median clustering \cite{chen2006k} and resilient data aggregation for sensor networks \cite{wagner2004resilient}. Our work here further extends the applications of median to robustifying high-dimensional estimation problems with theoretical guarantees. Another popular approach in robust estimation is to use the trimmed mean \cite{huber2011robust}, which has found success in robustifying sparse regression \cite{chen2013robust}, subspace clustering \cite{NIPS2015_5707}, etc. However, using the trimmed mean requires knowledge of an upper bound on the number of outliers, whereas median does not require such information.

Developing non-convex algorithms with provable global convergence guarantees has attracted intensive research interest recently. A partial list of these studies include phase retrieval \cite{sujay2013phase, candes2015phase, chen2015solving, wang2016solving,sun2016geometric}, matrix completion \cite{keshavan2010matrix, jain2013low, sun2015guaranteed, hardt2014understanding, de2015global, zheng2016convergence, jin2016provable, ge2016matrix}, low-rank matrix recovery \cite{bhojanapalli2016global, chen2015fast, tu2015low,zheng2015convergent, park2016provable, wei2015guarantees,li2016nonconvex,li2016symmetry}, robust PCA \cite{netrapalli2014non,yi2016fast}, robust tensor decomposition \cite{anandkumar2015tensor}, dictionary learning \cite{arora2015simple,sun2015complete}, community detection \cite{bandeira2016low}, phase synchronization\cite{boumal2016nonconvex}, blind deconvolution \cite{lee2015blind, li2016rapid}, joint alignment \cite{chen2016projected}, etc. Our algorithm provides a new instance in this list that emphasizes robust high-dimensional signal estimation under minimal assumptions of outliers. 


\subsection{Paper Organization and Notations}
The rest of this paper is organized as follows. Section~\ref{sec:medianApproach} describes the proposed two algorithms, median-TWF and median-RWF, in details and their performance guarantees. Section~\ref{sec:numerical} presents numerical experiments. Section~\ref{sec:proof} provides the preliminaries and the proof road map. Section~\ref{sec:proof:median_twf} provides the proofs for median-TWF and Section~\ref{sec:proof:median_rwf} provides the proofs of median-RWF, respectively. Finally, we conclude in Section~\ref{sec:conclusion}. Supporting proofs are given in the Appendix.

We adopt the following notations in this paper. Given a set of numbers $\{y_i\}_{i=1}^m$, the sample median is denoted as $\mathsf{med}(\{y_i\}_{i=1}^m)$. The indicator function $\bone_{A}=1$ if the event $A$ holds, and $\bone_{A}=0$ otherwise. For a vector $\by$, $\|\by\|$ denotes the $l_2$ norm. For two matrices, $\bA\preceq \bB$ if $\bB-\bA$ is a positive semidefinite matrix.  

\section{Algorithms and Performance Guarantees}\label{sec:medianApproach}



We consider the following model for phase retrieval, where the measurements are corrupted by not only sparse arbitrary outliers but also dense bounded noise. Under such a model, the measurements are given as
\begin{flalign}
y_i=\left|\langle \ba_i,\bx \rangle\right|^2+w_i+\eta_i,  \quad i=1,\cdots,m, \label{eq:twonoisesmodel}
\end{flalign}
where $\bx \in \mathbb{R}^n$ is the unknown signal\footnote{We focus on real signals here, but our analysis can be extended to complex signals.}, $\ba_i \in \mathbb{R}^n$ is the $i$th measurement vector composed of $i.i.d.$ Gaussian entries distributed as $\cN(0,1)$, and $\eta_i \in \mathbb{R}$ for $i=1,\ldots,m$ are outliers with arbitrary values satisfying $\|\boldsymbol{\eta}\|_0\le sm$, where $s$ is the fraction of outliers, and $\bw=\{w_i\}_{i=1}^m$ is the bounded noise satisfying $\|\bw\|_\infty\le c \|\bx\|^2$ for some universal constant $c$.

It is straightforward that changing the sign of the signal does not affect the measurements. The goal is to recover the signal $\bx$, up to a global sign difference, from the measurements $\by=\{y_i\}_{i=1}^m$ and the measurement vectors $\{\ba_i\}_{i=1}^m$. To this end, we define the Euclidean distance between two vectors up to a global sign difference as the performance metric,
\begin{flalign}
\dist(\bz,\bx):=\min \{\|\bz + \bx\|, \|\bz-\bx\|\}. \label{eq:defdis}
\end{flalign}




We propose two median-truncated gradient descent algorithms, median-TWF in  Section~\ref{subsec:medianTWF} and median-RWF in Section~\ref{subsec:medianRWF}, based on different choices of the loss functions. This leads to applying the truncation based on the sample median of $\left\{\left|y_i-|\ba_i^T\bz|^2\right|\right\}_{i=1}^m$ in median-TWF, and the sample median of$\left\{\left|y_i-|\ba_i^T\bz|^2\right|\right\}_{i=1}^m$ in median-RWF. Section~\ref{subsec:guarantees} provides the theoretical performance guarantees of median-TWF and median-RWF, which turn out to be almost the same at the order level except the choice of constants. The empirical comparisons of median-TWF and median-RWF are demonstrated in Section~\ref{sec:numerical}. 



\subsection{Median-TWF Algorithm}\label{subsec:medianTWF}

In median-TWF, we adopt the Poisson loss function of $|\ba_i^T\bz|^2$ employed in TWF \cite{chen2015solving}, given as
\begin{flalign}\label{eq:Poissonloss}
	\ell(\bz):=\frac{1}{2m} \sum_{i=1}^m \left(|\ba_i^T\bz|^2-y_i\log|\ba_i^T\bz|^2\right).
\end{flalign}

\begin{algorithm}[t]
\caption{Median Truncated Wirtinger Flow (Median-TWF)}\label{alg:mtwf}

\textbf{Input}: $\by=\{y_i\}_{i=1}^m$, $\{\ba_i\}_{i=1}^m$;\\
\textbf{Parameters:} thresholds $\alpha_y$, $\alpha_h$, $\alpha_{l}$, and $\alpha_u$, stepsize $\mu$;

\textbf{Initialization}: Let $\bz^{(0)}=\lambda_0 \tilde{\bz}$, where $\lambda_0=\sqrt{\median(\by)/0.455}$ and $\tilde{\bz}$ is the leading eigenvector of
\begin{equation}\label{eq:init_medianTWF}
\bY:=\frac{1}{m}\sum_{i=1}^m y_i\ba_i \boldsymbol{a}_i^T \bone_{\{|y_i|\le \alpha_y^2 \lambda_0^2\}}.
\end{equation}

 \textbf{Gradient loop}: for $t=0:T-1$ do
  \begin{flalign}\label{eq:loop_medianTWF}
		\bz^{(t+1)}=\bz^{(t)}- \frac{\mu}{m}\sum_{i=1}^{m}\frac{|\ba_i^T\bz^{(t)}|^2-y_i}{\ba_i^T\bz^{(t)} }\ba_i \bone_{\mathcal{E}_1^i\cap \mathcal{E}_2^i},
\end{flalign}
where
\begin{flalign}
&\cE_1^i:=\left\{\alpha_l\|\bz^{(t)}\|\le|\ba^T_i \bz^{(t)}|\le \alpha_u \|\bz^{(t)}\|\right\}, \nonumber \\
& \cE_2^i:= \left\{|y_i-|\ba_i^T\bz^{(t)}|^2|\le \alpha_h K_t \frac{|\ba_i^T\bz^{(t)}|}{\|\bz^{(t)}\|}\right\} , \quad \text{and }\quad
K_t:=\median\Big(\{|y_i-|\ba^T_i\bz^{(t)}|^2|\}_{i=1}^m\Big). \nonumber
\end{flalign}

\textbf{Output} $\bz_T$.
\end{algorithm}

The median-TWF algorithm, as described in Algorithm~\ref{alg:mtwf}, gradually eliminates the  influence of outliers on the way of minimizing \eqref{eq:Poissonloss}. Specifically, it comprises an initialization step and a truncated gradient descent step. 

1. \textbf{Initialization}: As in \eqref{eq:init_medianTWF}, we initialize $\bz^{(0)}$ by the spectral method using a truncated set of samples, where the threshold is determined by $\mathsf{med}(\{y_i\}_{i=1}^m)$.
As will be shown in Section~\ref{sec:initialization}, as long as the fraction of outliers is not too large and the sample complexity is large enough, our initialization is guaranteed to be within a small neighborhood of the true signal.


2. \textbf{Gradient loop}: for each iteration $0\le t\leq T-1$, median-TWF uses an iteration-varying truncated gradient given as
\begin{flalign}\label{trimmed_loss}
\nabla \ell_{tr}(\bz^{(t)})=\frac{1}{m} \sum_{i=1}^{m}\frac{|\ba_i^T\bz^{(t)}|^2-y_i}{\ba_i^T\bz^{(t)} }\ba_i \bone_{\mathcal{E}_1^i\cap \mathcal{E}_2^i}.
\end{flalign}
In order to remove the contribution of corrupted samples, from the definition of the set $\cE_2^i$ (see Algorithm~\ref{alg:mtwf}), it is clear that samples are truncated if their measurement residuals evaluated using the current iterate are much larger than the sample median. Samples are also truncated according to the set $\cE_1^i$, which removes the contribution of samples outside some confidence interval to better control the search direction, since $\mathbb{E}[|\ba_i^T\bz|]=\sqrt{2/\pi}\|\bz\|$. The median-TWF algorithm closely resembles the TWF algorithm, except that the truncation is guided by the sample median, rather than the sample mean.

We set the step size in median-TWF to be a fixed small constant, i.e., $\mu=0.4$. The rest of the parameters $\{\alpha_y, \alpha_h, \alpha_{l}, \alpha_u\}$ are set to satisfy
\begin{flalign}
& \zeta_1 :=\max \Big\{\mE\left[\xi^2 \bone_{\left\{|\xi|<\sqrt{1.01}\alpha_l\text{ or } |\xi|> \sqrt{0.99}\alpha_u\right\}}\right], \mE\left[\bone_{\left\{|\xi|<\sqrt{1.01}\alpha_l\text{ or }|\xi|> \sqrt{0.99}\alpha_u\right\}}\right]\Big\},\nn\\
&\zeta_2:=\mE\left[\xi^2\bone_{\{|\xi|>0.248\alpha_h}\}\right],\label{eq:parameters}\\
& 2(\zeta_1+\zeta_2)+\sqrt{8/\pi}\alpha_h^{-1}<1.99\nn\\
& \alpha_y\ge 3, \nn
\end{flalign}
where $\xi\sim \cN(0,1)$. For example, we can set $\alpha_l=0.3, \alpha_u=5, \alpha_y=3$ and  $\alpha_h=12$, and consequently $\zeta_1\approx 0.24$ and $\zeta_2\approx 0.032$.



\begin{algorithm}[t]
	\caption{Median Reshaped Wirtinger Flow (median-RWF)}\label{alg:RWF_mtwf}
	
	\textbf{Input}: $\by=\{y_i\}_{i=1}^m$, $\{\ba_i\}_{i=1}^m$;\\
	\textbf{Parameters:} threshold $\alpha'_h$, and step size $\mu$;
	
	\textbf{Initialization}: Same as median-TWF (see Algorithm \ref{alg:mtwf}).
	
	\textbf{Gradient loop}: for $t=0:T-1$ do
	\begin{flalign}\label{eq:loop_medianRWF}
		\bz^{(t+1)}=\bz^{(t)}- \frac{\mu}{m}\sum_{i=1}^{m}\left(\ba_i^T\bz^{(t)}-\sqrt{y_i}\cdot\frac{\ba_i^T\bz^{(t)}}{|\ba_i^T\bz^{(t)}|}\right)\ba_i \bone_{\mathcal{T}^i},
	\end{flalign}
	where
	\begin{flalign}
		\cT^i:= \left\{\left|\sqrt{y_i}-|\ba_i^T\bz^{(t)}|\right|\le \alpha'_h M_t\right\},\quad \text{and }\quad	M_t:=\median\left(\left\{\left|\sqrt{y_i}-|\ba^T_i\bz^{(t)}|\right|\right\}_{i=1}^m\right). \nonumber
	\end{flalign}
	\textbf{Output} $\bz_T$.
\end{algorithm}

\subsection{Median-RWF Algorithm}\label{subsec:medianRWF}

In median-RWF, we adopt the reshaped quadratic loss function of $|\ba_i^T\bz|$ employed in RWF \cite{zhang2016reshaped}, given as
\begin{flalign}\label{eq:RWFloss}
	\cR(\bz)=\frac{1}{2m}\sum_{i=1}^{m}\left(\sqrt{y_i}-|\ba_i^T\bz|\right)^2,
\end{flalign} 
which has been shown to be advantageous over other loss functions for phase retrieval \cite{zhang2016reshaped}.   

Similarly to median-TWF, the median-RWF algorithm as described in Algorithm~\ref{alg:RWF_mtwf}, gradually eliminates the  influence of outliers on the way of minimizing \eqref{eq:RWFloss}. Specifically, it also comprises an initialization step and a truncated gradient descent step. 

1. \textbf{Initialization}: we initialize in the same manner as in median-TWF (Algorithm \ref{alg:mtwf}). 


2. \textbf{Gradient loop}: for each iteration $0\le t\leq T-1$, median-RWF uses the following iteration-varying truncated gradient:
\begin{flalign}\label{eq:RWFgradient}
	\nabla \cR_{tr}(\bz^{(t)})=\frac{1}{m}\sum_{i=1}^{m}\left(\ba_i^T\bz^{(t)}-\sqrt{y_i}\cdot\frac{\ba_i^T\bz^{(t)}}{|\ba_i^T\bz^{(t)}|}\right)\ba_i \bone_{\mathcal{T}^i},
\end{flalign}
From the definition of the set $\cT^i$ (see Algorithm~\ref{alg:RWF_mtwf}), samples are truncated by the sample median of gradient components evaluated at the current iteration. We set the step size in median-RWF to be a fixed small constant, i.e., $\mu=0.8$. Compared with median-TWF, the truncation rule is much simpler with fewer parameters. We simply set  the truncation threshold $\alpha'_h=5$. It is possible that including $\mathcal{E}_1^i$ may further improves the performance, but we wish to highlight that, in this paper, the simple truncation rule is already sufficient to guarantee both robustness and efficiency of median-RWF.

\subsection{Performance Guarantees}\label{subsec:guarantees}

In this section, we characterize the performance guarantees of median-TWF and median-RWF, which turn out to be very similar though the proofs in fact involve quite different techniques. To avoid repetition, we present the guarantees together for both algorithms. We note that the values of constants in the results can vary for median-TWF and median-RWF. 


We first show that median-TWF/median-RWF performs well for the noise-free model in the following proposition, which lends support to the model with outliers. This also justifies that we can run median-TWF/median-RWF without having to know whether the underlying measurements are corrupted.

\begin{proposition}[\textbf{Exact recovery for the noise-free model}]\label{prop:noisefree}
	Suppose that the measurements are noise-free, i.e., $\eta_i=0$ and $w_i=0$ for $i=1,\cdots,m$ in the model \eqref{eq:twonoisesmodel}.
	There exist constants $\mu_0>0$, $0<\rho,\nu<1$ and $c_0,c_1,c_2>0$ such that if $m\ge c_0 n \log n$ and $\mu \le \mu_0$, then with probability at least $1-c_1\exp (-c_2 m)$, median-TWF/median-RWF yields
	\begin{flalign}
		\dist (\bz^{(t)}, \bx)\le \nu(1-\rho)^t\|\bx\|,\quad \forall t\in \mathbb{N}
	\end{flalign}
	simultaneously for all $\bx\in\bbR^n\backslash\{\bzero\}$.
\end{proposition}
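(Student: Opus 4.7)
The plan follows the standard two-stage framework for nonconvex gradient descent in phase retrieval: (i) show the spectral initialization lands in a small neighborhood of $\pm \bx$, and (ii) establish a Regularity Condition (RC) inside that neighborhood which yields linear contraction of the iterates. Before either stage, I would reduce the ``simultaneously for all $\bx \in \bbR^n\setminus\{\bzero\}$'' clause to a single fixed representative $\bx$ by invoking the rotational invariance of the Gaussian measurement vectors and the homogeneity of both updates in $(\bz,\bx)$, then converting the pointwise high-probability bound into a uniform one via an $\epsilon$-net on the unit sphere with union bound; the associated logarithmic factor is absorbed into the requirement $m \gtrsim n\log n$.

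For the initialization, the scaling $\lambda_0=\sqrt{\median(\by)/0.455}$ is justified because $y_i=|\ba_i^T\bx|^2\sim \|\bx\|^2\chi_1^2$ has population median $\approx 0.455\|\bx\|^2$; a non-asymptotic concentration for the sample median (one of the novel median-concentration statements flagged in the abstract) gives $\lambda_0 \approx \|\bx\|$ with high probability. For the direction, the hard-thresholded matrix $\bY$ in \eqref{eq:init_medianTWF} has expectation equal to a multiple of $\bI$ plus a rank-one perturbation aligned with $\bx\bx^T/\|\bx\|^2$; the truncation indicator $\bone_{\{|y_i|\le\alpha_y^2\lambda_0^2\}}$ makes each summand bounded, which lets a matrix Bernstein bound control $\|\bY-\mE[\bY]\|$, and a Davis--Kahan perturbation inequality then yields $\dist(\bz^{(0)},\bx)\le \delta\|\bx\|$ for any prescribed small $\delta$ as soon as $m \gtrsim n\log n$.

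For the gradient loop, I would prove an RC of the form
\begin{flalign*}
\langle \nabla\ell_{tr}(\bz),\bh\rangle \;\ge\; \tfrac{1}{\mu_u}\|\nabla\ell_{tr}(\bz)\|^2 + \mu_l\|\bh\|^2
\end{flalign*}
uniformly on $\{\bz:\dist(\bz,\bx)\le\delta\|\bx\|\}$, where $\bh$ denotes whichever of $\bz-\bx,\bz+\bx$ is smaller in norm. Granted the RC, expanding $\|\bz^{(t)}-\mu\nabla\ell_{tr}(\bz^{(t)})-\bx\|^2$ and choosing $\mu \le \mu_0$ sufficiently small gives the contraction $\dist^2(\bz^{(t+1)},\bx)\le (1-\rho)\,\dist^2(\bz^{(t)},\bx)$, and the claim of the proposition follows by induction from the initialization bound.

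The main obstacle is establishing the RC under median-based truncation. The difficulty is that the truncation set $\cE_1^i\cap\cE_2^i$ depends on the current iterate $\bz$ and, through $K_t$, on the \emph{nonlinear} sample median of the residuals, so the randomness of the threshold and of the summands are coupled and ordinary matrix-Bernstein machinery does not apply directly. My plan is a \emph{sandwich} argument: first show, uniformly in $\bz$ on the neighborhood, a deterministic two-sided envelope $K_\ell\|\bx\|^2\le K_t\le K_u\|\bx\|^2$ with high probability, using the identity $\bbP(\median\le t)=\bbP(\#\{i:f_i\le t\}\ge m/2)$ together with a Bernoulli/Hoeffding bound to reduce the median to empirical CDFs; then replace the random threshold $K_t$ by either envelope and invoke standard Gaussian concentration for the resulting mean-type truncated gradient, with an $\epsilon$-net over $\bz$ to make the bound uniform. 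The population-level quantities $\zeta_1,\zeta_2$ in the parameter choice \eqref{eq:parameters} are exactly what keeps the RC constants $\mu_l,\mu_u$ positive after the sandwich is applied. For median-RWF the skeleton is identical, with the residual $|y_i-|\ba_i^T\bz|^2|$ replaced by $|\sqrt{y_i}-|\ba_i^T\bz||$; its smoother dependence on $\bz$ simplifies the net step and explains why a single truncation threshold $\alpha'_h$ suffices, while the different population median changes only constants.
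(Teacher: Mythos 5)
Your overall architecture (median-concentration for the scaling $\lambda_0$, a sandwiched/truncated spectral matrix plus eigenvector perturbation for the initialization, then a Regularity Condition established by replacing the random median threshold with deterministic envelopes and finishing with the standard contraction expansion and a net argument for uniformity) is the same skeleton the paper uses. However, there is a genuine gap at the heart of your RC step: you claim a two-sided envelope $K_\ell\|\bx\|^2\le K_t\le K_u\|\bx\|^2$ holding uniformly over the neighborhood. This is false, and it is false in a way that matters. In the noise-free case $K_t=\median\big(\{|(\ba_i^T\bx)^2-(\ba_i^T\bz)^2|\}_{i=1}^m\big)$, and since $|(\ba_i^T\bx)^2-(\ba_i^T\bz)^2|=|\ba_i^T\bh|\,|\ba_i^T(\bx+\bz)|$ with $\bh=\bz-\bx$, the median shrinks to zero as $\bz\to\bx$; no positive constant $K_\ell$ can work on a neighborhood containing points arbitrarily close to $\bx$. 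The correct statement, which is the paper's key concentration result (Proposition \ref{prop:mediansandwich}, and Proposition \ref{prop:mediansandwichRWF} for median-RWF), is that $K_t$ concentrates at the \emph{distance-proportional} scale $\|\bz\|\|\bz-\bx\|$ (respectively $M_t$ at scale $\|\bz-\bx\|$), uniformly over all $\bz,\bx$ with $\|\bz-\bx\|\le\|\bz\|/11$; proving this requires bounding the median and the density at the median of a product of correlated Gaussians (Lemma \ref{lem:product}), quantile concentration (Lemma \ref{lem:quantileconcentration}), and a net argument combined with an order-statistics perturbation bound (Lemma \ref{lem:orderstats}), and it is exactly where the $n\log n$ sample complexity enters.

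The scaling is not a cosmetic detail. With the correct sandwich, the event $\cE_2^i$ is squeezed between iterate-independent events whose thresholds are proportional to $\alpha_h\|\bh\|\,|\ba_i^T\bz|$; together with $\cE_1^i$ this localizes $\ba_i^T\bh$ in intervals of width $O(\alpha_h\|\bh\|)$ around $0$ and $2\ba_i^T\bz$ (Lemma \ref{le:Devents}), which is what (i) makes the surviving summands uniformly bounded so the heavy-tailed terms like $\frac{1}{m}\sum_i|\ba_i^T\bh|^3/|\ba_i^T\bz|$ concentrate uniformly over a net, and (ii) produces the specific RC constants $\zeta_2=\mE[\xi^2\bone_{\{|\xi|>0.248\alpha_h\}}]$ and $\sqrt{8/\pi}\,\alpha_h^{-1}$ appearing in \eqref{eq:parameters}. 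If you only had an upper envelope at the fixed scale $\|\bx\|^2$, the truncation $\cE_2^i$ becomes essentially vacuous near the optimum, the indicator no longer bounds $|\ba_i^T\bh|/\|\bh\|$ by a constant, and both the decoupling lemma and the uniform concentration of the truncated gradient (and hence your claimed positivity of the RC constants via $\zeta_1,\zeta_2$) no longer follow from the argument you describe. So the missing idea is precisely the distance-proportional median sandwich; once you replace your fixed-scale envelope by that statement (and prove it uniformly in $(\bz,\bx)$), the rest of your plan matches the paper's proof.
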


Proposition~\ref{prop:noisefree} suggests that median-TWF/median-RWF allows exact recovery at a linear rate of convergence as long as the sample complexity is on the order of $n\log n$, which is in fact slightly worse, by a logarithmic factor, than existing WF-type algorithms (TWF, RWF and TAF) for the noise-free model. This is a price due to working with the nonlinear operator of median in the proof, and it is not clear whether it is possible to further improve the result. Nonetheless, as the median is quite stable as long as the number of outliers is not so large, the following main theorem indeed establishes that median-TWF/median-RWF still performs well even in the presence of a constant fraction of sparse outliers with the same sample complexity.
\begin{theorem}[\textbf{Exact recovery with sparse arbitrary outliers}]\label{thm:outliers}
Suppose that the measurements are corrupted by sparse outliers, i.e., $w_i=0$ for $i=1,\cdots,m$ in the model \eqref{eq:twonoisesmodel}. There exist constants $\mu_0$, $s_0>0$, $0<\rho,\nu<1$ and $c_0,c_1,c_2>0$ such that if $m\ge c_0 n\log n$, $s<s_0$, $\mu\le \mu_0$, then with probability at least $1-c_1\exp (-c_2 m)$, median-TWF/median-RWF yields
\begin{flalign}
\dist (\bz^{(t)}, \bx)\le \nu(1-\rho)^t\|\bx\|,\quad \forall t\in \mathbb{N}
\end{flalign}
simultaneously for all $\bx\in\bbR^n\backslash\{\bzero\}$.
\end{theorem}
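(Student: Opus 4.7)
The plan is to follow the same two-stage architecture used for Proposition~\ref{prop:noisefree} (spectral initialization, then Regularity Condition in a neighborhood of $\bx$), and to inject one additional ingredient that quantifies how replacing at most $sm$ clean measurements by arbitrary values can perturb the sample median and the median-based truncation sets. The cornerstone is an order-statistic robustness bound: if $\{\hat{u}_i\}$ and $\{u_i\}$ differ in at most $sm$ coordinates, then $\median(\{\hat{u}_i\})$ is sandwiched between the $(1/2-s)$- and $(1/2+s)$-quantiles of the clean data. Combined with concentration of the corresponding Gaussian quantiles of $|\ba_i^T\bz|^2$ and $||\ba_i^T\bz|^2 - y_i|$, this will show that both $\lambda_0$ and $K_t$ (respectively $M_t$) remain within a constant factor of their noise-free counterparts as long as $s<s_0$ is small enough. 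These are the median-concentration statements promised in the introduction.

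For the initialization, I would first use the quantile bound to pin $\lambda_0 = \sqrt{\median(\by)/0.455}$ between $c_1\|\bx\|$ and $c_2\|\bx\|$ for absolute constants. The threshold $\alpha_y^2 \lambda_0^2$ then automatically removes any outlier of large magnitude from the matrix $\bY$, while any surviving outlier with $|y_i| \le \alpha_y^2 \lambda_0^2$ contributes at most $\alpha_y^2 \lambda_0^2 \ba_i\ba_i^T$. Summing at most $sm$ such terms yields an operator-norm perturbation of order $s \|\bx\|^2$ with high probability by a sub-exponential concentration over a fixed index set of size $sm$. Taking $s<s_0$ keeps this below the spectral gap of $\mathbb{E}[\bY]$, so a Davis--Kahan type argument applied on top of the noise-free analysis of $\bY$ delivers $\dist(\bz^{(0)},\bx) \le \nu\|\bx\|$ for a sufficiently small $\nu$.

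For the gradient loop, the target is the same Regularity Condition used in the noise-free proof, now uniformly over $\bz$ with $\dist(\bz,\bx)\le \nu\|\bx\|$. I would split the truncated gradient into an inlier contribution and an outlier contribution. For the inlier part, the only new effect is that the truncation level $K_t$ (or $M_t$) is computed on noisy data; the median robustness bound shifts this level by at most a constant factor, so the inlier gradient still obeys the Regularity Condition with constants satisfying the inequalities in \eqref{eq:parameters} after a small slack. For the outlier part, the key mechanism is automatic truncation: if $|\eta_i|$ is large enough to meaningfully perturb the search direction, then $|y_i-|\ba_i^T\bz|^2|$ is inflated by $|\eta_i|$ and exceeds $\alpha_h K_t |\ba_i^T\bz|/\|\bz\|$, placing the sample outside $\cE_2^i$ (respectively outside $\cT^i$). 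Each of the at most $sm$ surviving outliers contributes $\cO(K_t)$ per coordinate, producing a cumulative perturbation of order $s\|\bx\|\|\bz-\bx\|$ that is absorbed by the positive inlier term once $s<s_0$. The iteration lemma for the Regularity Condition then yields $\dist(\bz^{(t)},\bx)\le\nu(1-\rho)^t\|\bx\|$.

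The main obstacle is making the automatic-truncation argument hold \emph{uniformly} over $\bz$ in the $\nu\|\bx\|$-neighborhood, since $\cE_1^i,\cE_2^i,\cT^i$ and the medians $K_t, M_t$ are all nonlinear functions of both the iterate and the full sample. I plan to address this via an $\epsilon$-net over the neighborhood combined with Lipschitz-type sensitivity bounds for the sample median (exploiting that perturbing a single coordinate shifts the median by at most the gap between adjacent order statistics, which concentrates for Gaussian functionals). This is precisely where median-TWF and median-RWF require separate treatments: the reshaped loss of median-RWF is non-smooth at $\ba_i^T\bz=0$, so the corresponding Lipschitz bounds on $|\sqrt{y_i}-|\ba_i^T\bz||$ are weaker than those on $|y_i-|\ba_i^T\bz|^2|$, and a different covering refinement near the hyperplane $\ba_i^T\bz=0$ is needed. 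Standardizing $\|\bx\|=1$ and a homogeneity rescaling will finally give the ``for all $\bx\in\bbR^n\setminus\{\bzero\}$'' conclusion.
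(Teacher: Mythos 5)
Your proposal follows essentially the same route as the paper: the quantile-sandwich robustness of the median (the paper's Lemma~\ref{lem:empiricalmedian}) combined with concentration of the relevant quantiles (Propositions~\ref{prop:mediansandwich} and~\ref{prop:mediansandwichRWF}) keeps $\lambda_0$, $K_t$, $M_t$ at the noise-free scale, the truncated spectral initialization absorbs the at most $sm$ surviving outliers exactly as in Proposition~\ref{prop:initialization}, and the gradient is split into a clean part satisfying the Regularity Condition and an outlier part whose per-sample residuals are capped by the truncation threshold and absorbed for small $s$, with uniformity over $\bz$ handled by net arguments inside the median-concentration step. The only quantitative difference is that the paper bounds the outlier term via $\frac{1}{m}\|\bA^T\bq\|\le 2(1+\delta)\sqrt{s}\,\alpha_h\|\bh\|$, i.e.\ order $\sqrt{s}$ rather than your claimed order $s$, which is equally sufficient since the theorem only requires smallness for $s<s_0$.
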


Theorem~\ref{thm:outliers} indicates that median-TWF/median-RWF admits exact recovery for {\em all} signals in the presence of sparse outliers with arbitrary magnitudes even when the number of outliers scales linearly with the number of measurements, as long as the sample complexity satisfies $m \gtrsim n\log n$. Moreover, median-TWF/median-RWF converges at a linear rate using a constant step size, with per-iteration cost $\mathcal{O}(mn)$ (note that the median can be computed in linear time \cite{tibshirani2008fast}). To reach $\epsilon$-accuracy, i.e., $\dist (\bz^{(t)}, \bx) \leq \epsilon$, only $\mathcal{O}(\log 1/\epsilon)$ iterations are needed, yielding the total computational cost as $\mathcal{O}(mn\log 1/\epsilon)$, which is highly efficient. Empirically in the numerical experiments in Section~\ref{sec:numerical}, median-RWF converges faster and tolerates a larger fraction of outliers than median-TWF, which can be due to the use of the reshaped quadratic loss function.


We next consider the model when the measurements are corrupted by both sparse arbitrary outliers and dense bounded noise. Our following theorem characterizes that median-TWF/median-RWF is stable to coexistence of the two types of noises.
\begin{theorem}[\textbf{Stability to sparse arbitrary outliers and dense bounded noises}]\label{thm:twonoises}
Consider the phase retrieval problem given in \eqref{eq:twonoisesmodel} in which measurements are corrupted by both sparse arbitrary and dense bounded noises. There exist constants $\mu_0,s_0 >0$, $0<\rho<1$ and $c_0,c_1,c_2>0$ such that if $m \ge c_0 n\log n$, $s<s_0$, $\mu\le \mu_0$, then with probability at least $1-c_1\exp (-c_2 m)$, median-TWF and median-RWF respectively yield
\begin{flalign}
&\dist (\bz^{(t)}, \bx)\lesssim \frac{\|\bw\|_\infty}{\|\bx\|}+(1-\rho)^t\|\bx\|,\quad \forall t\in \mathbb{N}~\quad\mbox{for median-TWF},\\
&\dist (\bz^{(t)}, \bx)\lesssim \sqrt{\|\bw\|_\infty}+(1-\rho)^t\|\bx\|,\quad \forall t\in \mathbb{N}~\quad\mbox{for median-RWF},
\end{flalign}
simultaneously for all $\bx\in\bbR^n\backslash\{\bzero\}$.
\end{theorem}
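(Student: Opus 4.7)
The plan is to extend the argument behind Theorem~\ref{thm:outliers} by treating the bounded dense noise $\bw$ as a perturbation layered on top of the sparse outliers $\boldsymbol\eta$. Since $\|\bw\|_\infty \le c\|\bx\|^2$ is small relative to the typical signal level $|\ba_i^T\bx|^2 = \Theta(\|\bx\|^2)$, every residual is perturbed by a controlled additive amount. The sparse outlier part is already absorbed by the median-based truncation (this is exactly what Theorem~\ref{thm:outliers} delivers), so the new task is to show that the bounded noise contributes only a ``noise floor'' term of order $\|\bw\|_\infty/\|\bx\|$ for median-TWF and $\sqrt{\|\bw\|_\infty}$ for median-RWF, both to the initialization and to the per-iteration contraction.

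First I would handle the initialization. The surrogate matrix $\bY$ picks up an extra term $\frac{1}{m}\sum_i w_i \ba_i\ba_i^T \bone_{\{|y_i|\le \alpha_y^2\lambda_0^2\}}$ compared with Theorem~\ref{thm:outliers}. Standard Gaussian matrix concentration together with $\|\bw\|_\infty \le c\|\bx\|^2$ controls the operator norm of this perturbation by a small constant multiple of $\|\bx\|^2$. The median-based scaling $\lambda_0 = \sqrt{\median(\by)/0.455}$ is also stable: using the median concentration lemmas already developed for Theorem~\ref{thm:outliers}, but now allowing a uniform additive shift of size $\|\bw\|_\infty$ in each $y_i$, one obtains $|\lambda_0 - \|\bx\|| \lesssim \|\bw\|_\infty/\|\bx\|$. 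Consequently, the spectral initialization still lies in a small constant-radius neighborhood of $\bx$ (up to a sign), with an extra additive error of the ``noise floor'' order stated above.

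Next, inside the basin of attraction, I would establish a \emph{perturbed regularity condition} for the truncated gradient. For median-TWF, write
\[
\nabla\ell_{tr}(\bz) \;=\; \frac{1}{m}\sum_{i=1}^m \frac{|\ba_i^T\bz|^2 - |\ba_i^T\bx|^2 - w_i - \eta_i}{\ba_i^T\bz}\,\ba_i\,\bone_{\cE_1^i\cap\cE_2^i}.
\]
The outlier contribution is killed by $\cE_2^i$ as in the proof of Theorem~\ref{thm:outliers}, and the noise-free component already obeys a regularity condition yielding linear contraction. The residual contribution of $w_i/\ba_i^T\bz$ is bounded via $|\ba_i^T\bz|\ge \alpha_l\|\bz\|=\Theta(\|\bx\|)$ on $\cE_1^i$, giving a perturbation of size $O(\|\bw\|_\infty/\|\bx\|)$. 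For median-RWF the analogous step uses the inequality $\bigl|\sqrt{|\ba_i^T\bx|^2+w_i}-|\ba_i^T\bx|\bigr|\le \sqrt{|w_i|}$ on inliers, which produces the $\sqrt{\|\bw\|_\infty}$ scaling in the perturbed regularity condition. In both cases one obtains
\[
\dist(\bz^{(t+1)},\bx) \;\le\; (1-\rho)\,\dist(\bz^{(t)},\bx) \;+\; C\cdot \text{(noise floor)},
\]
whose iteration, combined with the initialization bound, yields the two stated inequalities.

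The main obstacle is a uniform robustness bound for the sample medians $K_t$ and $M_t$ under the combined outlier-plus-bounded-noise corruption, which must hold simultaneously over all $\bz$ in a neighborhood of $\bx$. Specifically, one needs $K_t = \Theta(\|\bz^{(t)}-\bx\|\,\|\bx\|)$ up to an additive term of order $\|\bw\|_\infty$, and $M_t = \Theta(\|\bz^{(t)}-\bx\|)$ up to an additive term of order $\sqrt{\|\bw\|_\infty}$. This extends the median concentration lemmas used for Theorem~\ref{thm:outliers} via the following observation: shifting each entry of a sample by at most $\delta$ shifts its sample median by at most $\delta$, so the previously established concentration of the population and sample medians transfers with an additive $\delta$ budget set by $\|\bw\|_\infty$ (or $\sqrt{\|\bw\|_\infty}$ in the RWF case). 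Once this robustness of the median is in place, the remaining arguments mirror the proofs of Theorems~\ref{prop:noisefree} and~\ref{thm:outliers} verbatim, so the bulk of the new technical effort is concentrated in this median-perturbation step.
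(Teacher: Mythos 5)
Your overall route coincides with the paper's: the same quantile sandwiching (Lemma~\ref{lem:empiricalmedian}) plus the order-statistics shift (Lemma~\ref{lem:orderstats}) to control the corrupted medians, the same decomposition of the truncated gradient into a clean part, a bounded-noise part (bounded via $|\ba_i^T\bz|\ge\alpha_l\|\bz\|$ for median-TWF and via $\bigl|\sqrt{|\ba_i^T\bx|^2+w_i}-|\ba_i^T\bx|\bigr|\le\sqrt{|w_i|}$ for median-RWF), and an outlier part controlled by the truncation threshold, and the initialization is handled for the combined model exactly as in Proposition~\ref{prop:initialization}. The only packaging difference is that you fold everything into a single ``contraction plus noise floor'' recursion, whereas the paper splits the gradient loop into two regimes.

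The gap is precisely at that point. Your claim that $K_t=\Theta(\|\bz^{(t)}-\bx\|\|\bx\|)$ up to an additive $\|\bw\|_\infty$ term (and the analogue for $M_t$) is correct, but it is \emph{not} enough to make ``the remaining arguments mirror the proofs of Theorems~\ref{prop:noisefree} and~\ref{thm:outliers} verbatim.'' Those arguments need the two-sided bound $K_t\asymp\|\bh\|\|\bz\|$ in order to get the inclusion $\cE_3^i\subseteq\cE_2^i\subseteq\cE_4^i$ (resp.\ $\cT_1^i\subseteq\cT^i\subseteq\cT_2^i$), which is what lets Propositions~\ref{prop:strongconvex}--\ref{prop:localsmooth} (resp.~\ref{prop:strongconvexRWF}--\ref{prop:localsmoothRWF}) apply to the clean component. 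Once $\|\bh\|\lesssim\|\bw\|_\infty/\|\bz\|$ (resp.\ $\|\bh\|\lesssim\sqrt{\|\bw\|_\infty}$), the additive slack dominates, $K_t$ can be of order $\|\bw\|_\infty$ rather than $\|\bh\|\|\bz\|$, the upper inclusion fails, and neither the RC nor your ``perturbed RC'' recursion can be derived from the existing propositions; indeed $\langle\nabla\ell_{tr}(\bz),\bh\rangle$ need not be positive there. The paper closes this regime with a separate, elementary argument: since every retained gradient component is capped by the (now noise-level) median threshold, the step size is $\cO(\|\bw\|_\infty/\|\bz\|)$ (resp.\ $\cO(\sqrt{\|\bw\|_\infty})$), so the error cannot leave a constant multiple of the noise floor and re-enters the contraction regime otherwise. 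You need to add this bounded-move-size argument (or an equivalent re-derivation of the clean-gradient bounds with enlarged truncation events carrying additive noise-floor terms) for the sub-noise-floor regime; with it, your recursion and the stated bounds follow as in the paper.
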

Theorem \ref{thm:twonoises} immediately implies the stability of median-TWF/median-RWF when the measurements are only corrupted by dense bounded noise.
\begin{corollary}\label{cor:boundednoise}
Consider the phase retrieval problem in which measurements are corrupted only by dense bounded noises, i.e., $\eta_i=0$  for $i=1,\cdots,m$ in the model \eqref{eq:twonoisesmodel}. There exist constants $\mu_0>0$, $0<\rho<1$ and $c_0,c_1,c_2>0$ such that if $m\ge c_0 n\log n$, $\mu\le \mu_0$, then with probability at least $1-c_1\exp (-c_2 m)$, median-TWF and median-RWF respectively yield
\begin{flalign}
&\dist (\bz^{(t)}, \bx)\lesssim \frac{\|\bw\|_\infty}{\|\bx\|}+(1-\rho)^t\|\bx\|,\quad \forall t\in \mathbb{N}~\quad\mbox{for median-TWF},\\
&\dist (\bz^{(t)}, \bx)\lesssim \sqrt{\|\bw\|_\infty}+(1-\rho)^t\|\bx\|,\quad \forall t\in \mathbb{N}~\quad\mbox{for median-RWF},
\end{flalign}
simultaneously for all $\bx\in\bbR^n\backslash\{\bzero\}$.
\end{corollary}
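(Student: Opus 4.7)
The plan is to derive Corollary~\ref{cor:boundednoise} as an immediate specialization of Theorem~\ref{thm:twonoises}. The dense-bounded-noise-only model is a strict sub-case of the combined model \eqref{eq:twonoisesmodel}: setting $\eta_i=0$ for all $i=1,\dots,m$ trivially satisfies $\|\boldsymbol{\eta}\|_0=0\le s m$ for every $s\ge 0$, in particular for any $s<s_0$ where $s_0$ is the threshold provided by Theorem~\ref{thm:twonoises}. Thus the hypotheses of Theorem~\ref{thm:twonoises} are fulfilled without any additional assumption.

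Consequently, the first step is simply to invoke Theorem~\ref{thm:twonoises} with the choice $s=0$, together with the same sample-complexity assumption $m\ge c_0 n\log n$ and step size $\mu\le \mu_0$. The theorem directly supplies constants $\mu_0>0$, $0<\rho<1$, and $c_0,c_1,c_2>0$, along with the event of probability at least $1-c_1\exp(-c_2 m)$, on which the stated linear convergence plus noise-floor bound holds simultaneously for all $\bx\in\bbR^n\backslash\{\bzero\}$. Reading the conclusions of Theorem~\ref{thm:twonoises} with these parameter choices gives exactly the two bounds claimed in Corollary~\ref{cor:boundednoise}, one with residual scaling as $\|\bw\|_\infty/\|\bx\|$ for median-TWF and one with residual scaling as $\sqrt{\|\bw\|_\infty}$ for median-RWF.

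There is essentially no new obstacle to overcome: the whole point of having proved the more general Theorem~\ref{thm:twonoises} first is that it already absorbs this purely-dense-noise regime. The only formal step is to rename constants if desired (taking the $\mu_0, \rho, c_0, c_1, c_2$ from Theorem~\ref{thm:twonoises} evaluated at $s=0$) and to observe that the bounds in Theorem~\ref{thm:twonoises} do not depend on the outlier sparsity fraction $s$ itself, but only on the fact that $s<s_0$. Hence setting $s=0$ leaves the bounds intact, yielding the corollary. The proof can be stated in a single short paragraph referencing Theorem~\ref{thm:twonoises}.
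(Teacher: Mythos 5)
Your proposal is correct and matches the paper exactly: the paper states Corollary~\ref{cor:boundednoise} as an immediate consequence of Theorem~\ref{thm:twonoises}, obtained by observing that $\eta_i=0$ for all $i$ trivially satisfies the sparse-outlier assumption, precisely as you argue. No further commentary is needed.
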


With both sparse arbitrary outliers and dense bounded noises, Theorem \ref{thm:twonoises} and Corollary \ref{cor:boundednoise} imply that median-TWF/median-RWF achieves the same convergence rate and the same level of estimation error as the model with only bounded noise. In fact, together with Theorem \ref{thm:outliers} and Proposition \ref{prop:noisefree}, it can be seen that applying median-TWF/median-RWF does not require the knowledge of the existence of outliers. When there do exist outliers, median-TWF/median-RWF achieves almost the same performance \emph{as if outliers do not exist}. 

\section{Numerical Experiments}\label{sec:numerical}
In this section, we provide numerical experiments to demonstrate the effectiveness of median-TWF and median-RWF, which corroborate our theoretical findings. 
\subsection{Exact Recovery for Noise-free Data}
We first show that, in the noise-free case, median-TWF and median-RWF provide similar performance as TWF \cite{chen2015solving} and RWF \cite{zhang2016reshaped} for exact recovery. We set the parameters of median-TWF and median-RWF as specified in Section~\ref{subsec:medianTWF} and Section~\ref{subsec:medianRWF}, and those of TWF and RWF as suggested in \cite{chen2015solving} and \cite{zhang2016reshaped}, respectively. Let the signal length $n$ take values from $1000$ to $10000$ by a step size of $1000$, and the ratio of the number of measurements to the signal dimension, $m/n$, take values from $2$ to $6$ by a step size of $0.1$. For each pair of $(n,m/n)$, we generate a signal $\bx\sim \cN(\bzero,\bI_{n\times n})$, and the measurement vectors $\ba_i\sim \cN(\bzero,\bI_{n\times n})$ i.i.d.\ for $i=1,\ldots, m$. For all algorithms, a fixed number of iterations $T=500$ are run, and the trial is declared successful if  $\bz^{(T)}$, the output of the algorithm, satisfies $\dist(\bz^{(T)},\bx)/\|\bx\|\le 10^{-8}$. Figure~\ref{fig:mnscale} shows the number of successful trials out of 20 trials for all algorithms, with respect to $m/n$ and $n$. It can be seen that, as soon as $m$ is above $4n$, exact recovery is achieved for all four algorithms. Around the phase transition boundary, the empirical sample complexity of median-TWF is slightly worse than that of TWF, which is possibly due to the inefficiency of median compared to mean in the noise-free case \cite{huber2011robust}. Interestingly, the empirical sample complexity of median-RWF is slightly better than RWF because the truncation rule used in median-RWF allows sample pruning that improves the performance.\footnote{The original RWF in \cite{zhang2016reshaped} does not have sample truncation.}

\begin{figure}[th]
\centering 
\subfigure[median-TWF]{
\includegraphics[width=0.4\textwidth]{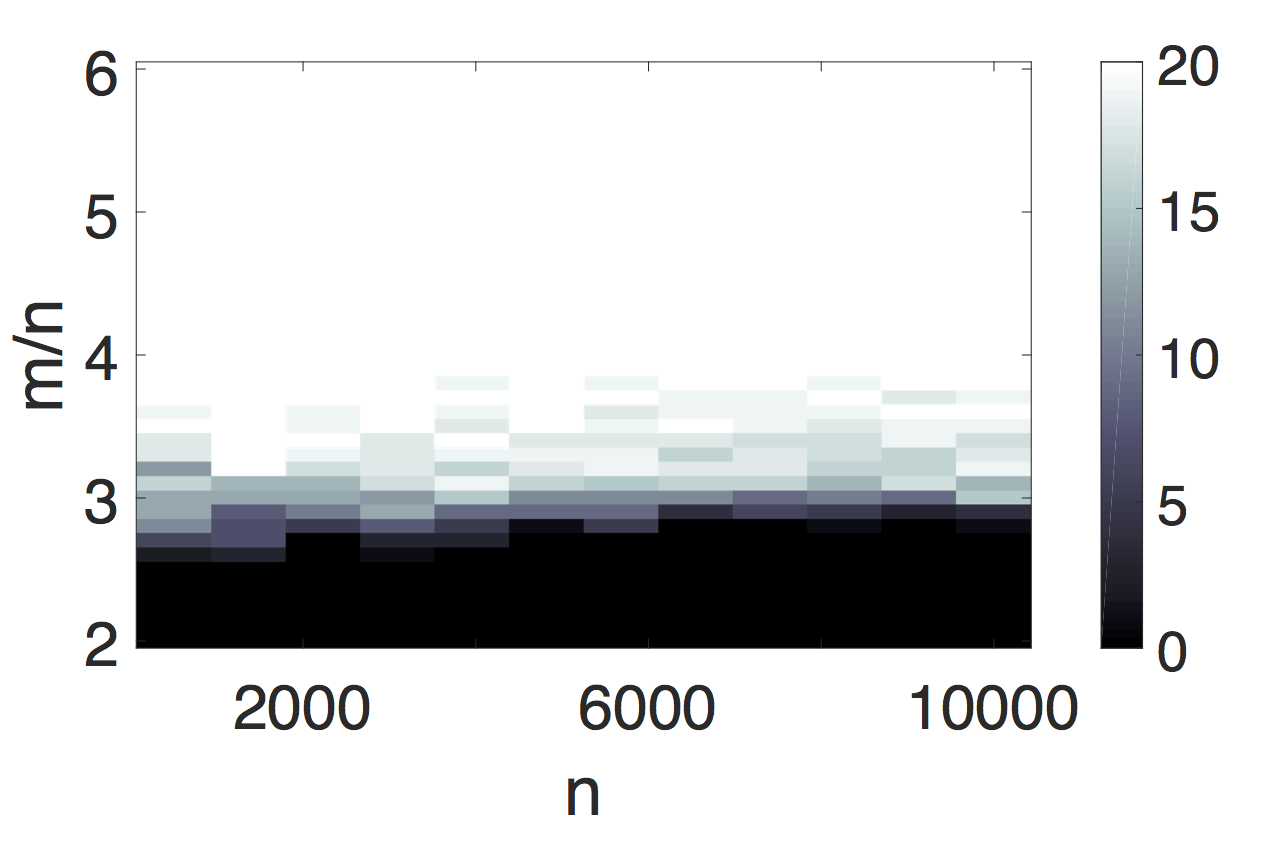}
\label{fig:mnscalemedian}}
\hfil
\subfigure[TWF]{
	\includegraphics[width=0.4\textwidth]{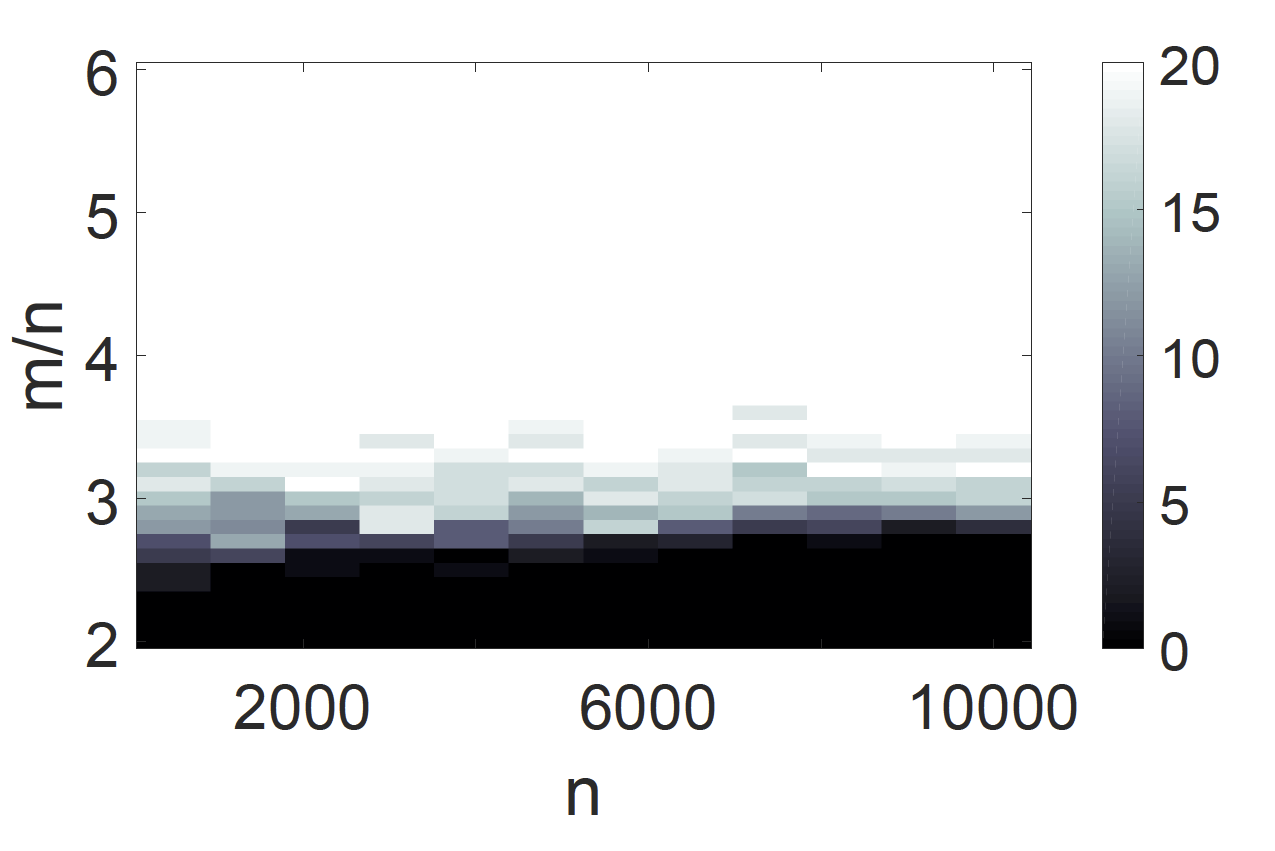}
	\label{fig:mnscaletwf}}
\subfigure[median-RWF]{
	\includegraphics[width=0.4\textwidth]{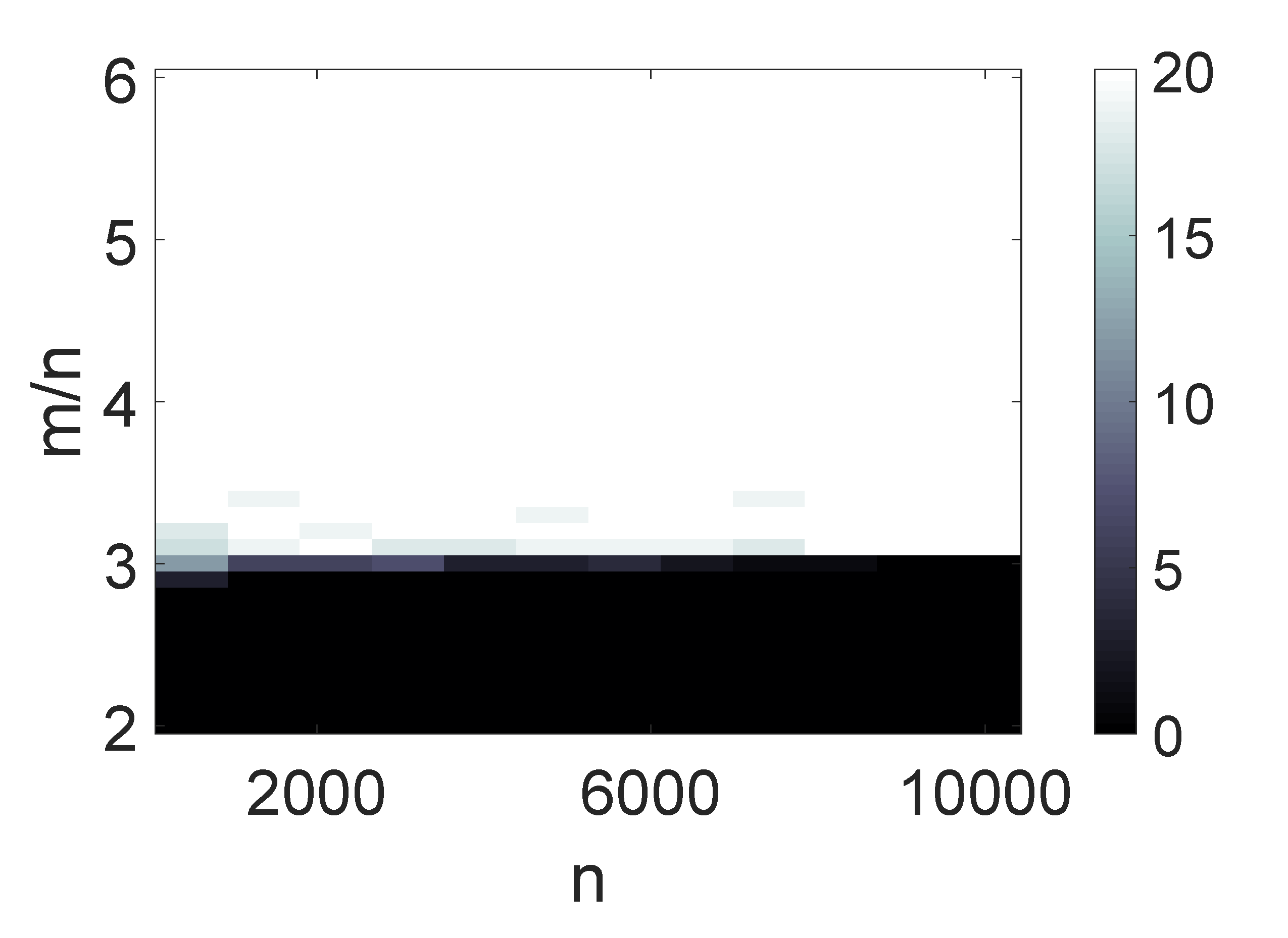}
	\label{fig:mnscalemedianRWF}}
\hfil
\subfigure[RWF]{
	\includegraphics[width=0.4\textwidth]{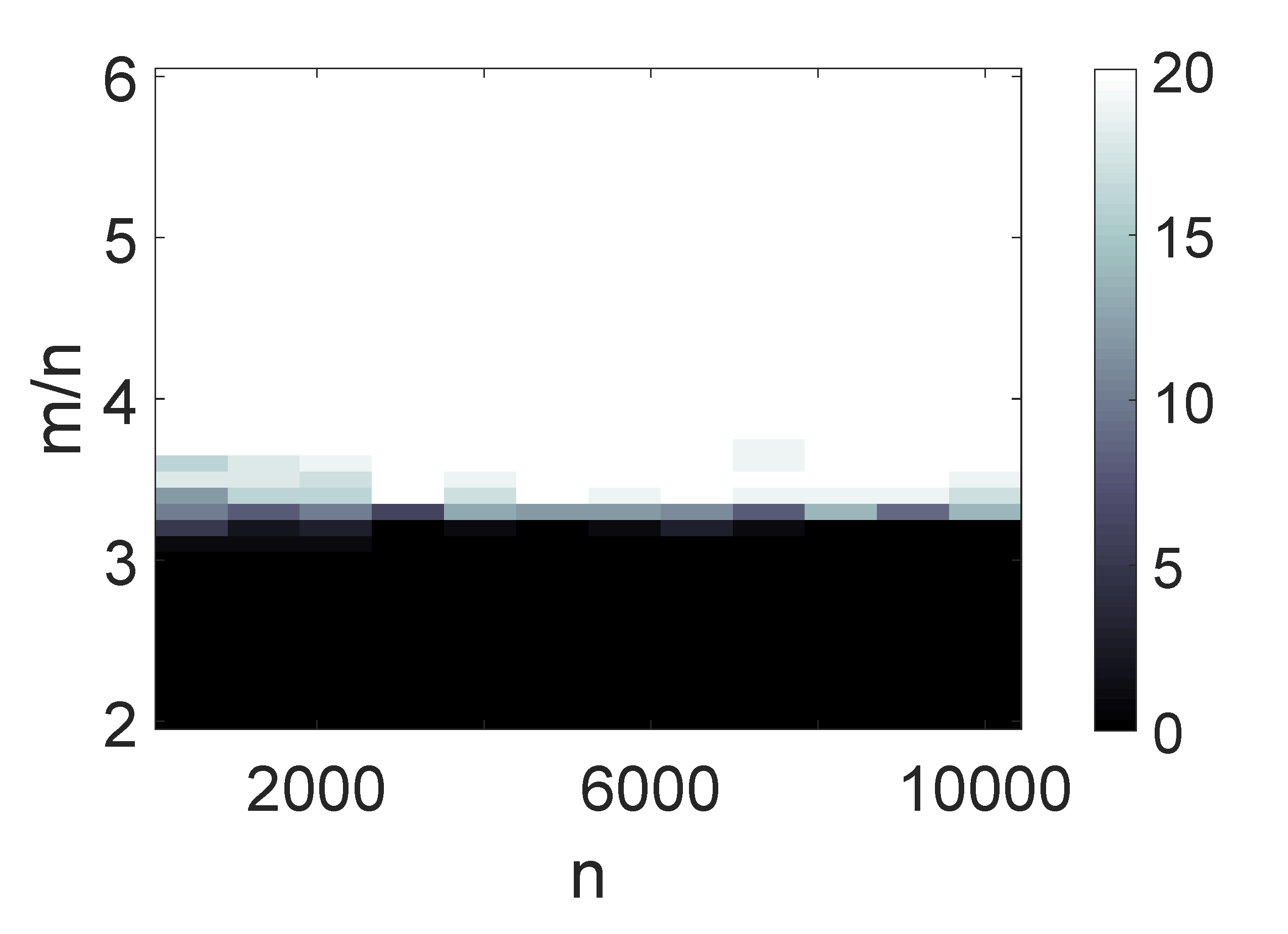}
	\label{fig:mnscaleRWF}}
\caption{Sample complexity of median-TWF, TWF, median-RWF, and RWF for noise-free data: the gray scale of each cell $(n,m/n)$ indicates the number of successful recovery out of $20$ trials. }
\label{fig:mnscale}
\end{figure}

\subsection{Exact Recovery with Sparse Outliers}
We next examine the performance of median-TWF and median-RWF in the presence of sparse outliers. We compare the performance of median-TWF and median-RWF with TWF, and also an alternative which we call {\em trimean-TWF}, based on replacing the sample mean in TWF by the {\em trimmed mean} for truncation purpose. Specifically, trimean-TWF requires knowing the fraction $s$ of outliers so that samples corresponding to $sm$ largest values in the measurements or gradients are first removed, and truncation is then applied based on the mean of the remaining samples similar to TWF.

We fix the signal length $n=1000$ and the number of measurements $m=8000$. Let each measurement $y_i$ be corrupted with probability $s\in [0,0.4]$ independently, where the corruption value $\eta_i \sim \cU(0, \eta_{\max})$ is randomly generated from a uniform distribution. Figure~\ref{fig:outliers} shows the success rate of exact recovery over $100$ trials as a function of $s$ at different levels of outlier magnitudes $\eta_{\max}/ \|\bx\|^2=0.1, 1, 10, 100$, for the four algorithms median-TWF, median-RWF, trimean-TWF and TWF.


\begin{figure*}[tb]
\centering
\subfigure[$\eta_{\max}=0.1\|\bx\|^2$]{
\includegraphics[width=0.45\textwidth]{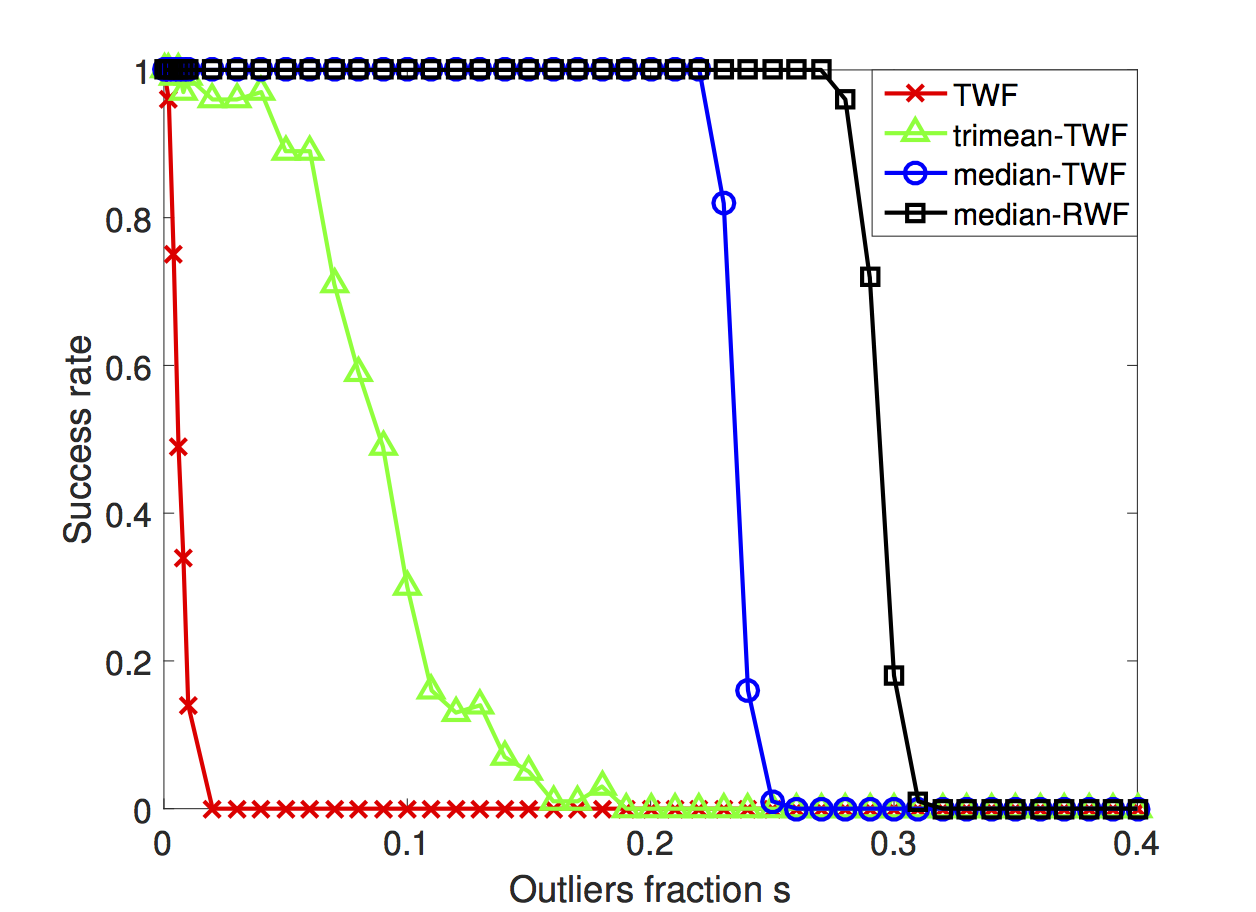}
\label{fig:outliers01}}
\hfil
\subfigure[$\eta_{\max}=\|\bx\|^2$]{
\includegraphics[width=0.45\textwidth]{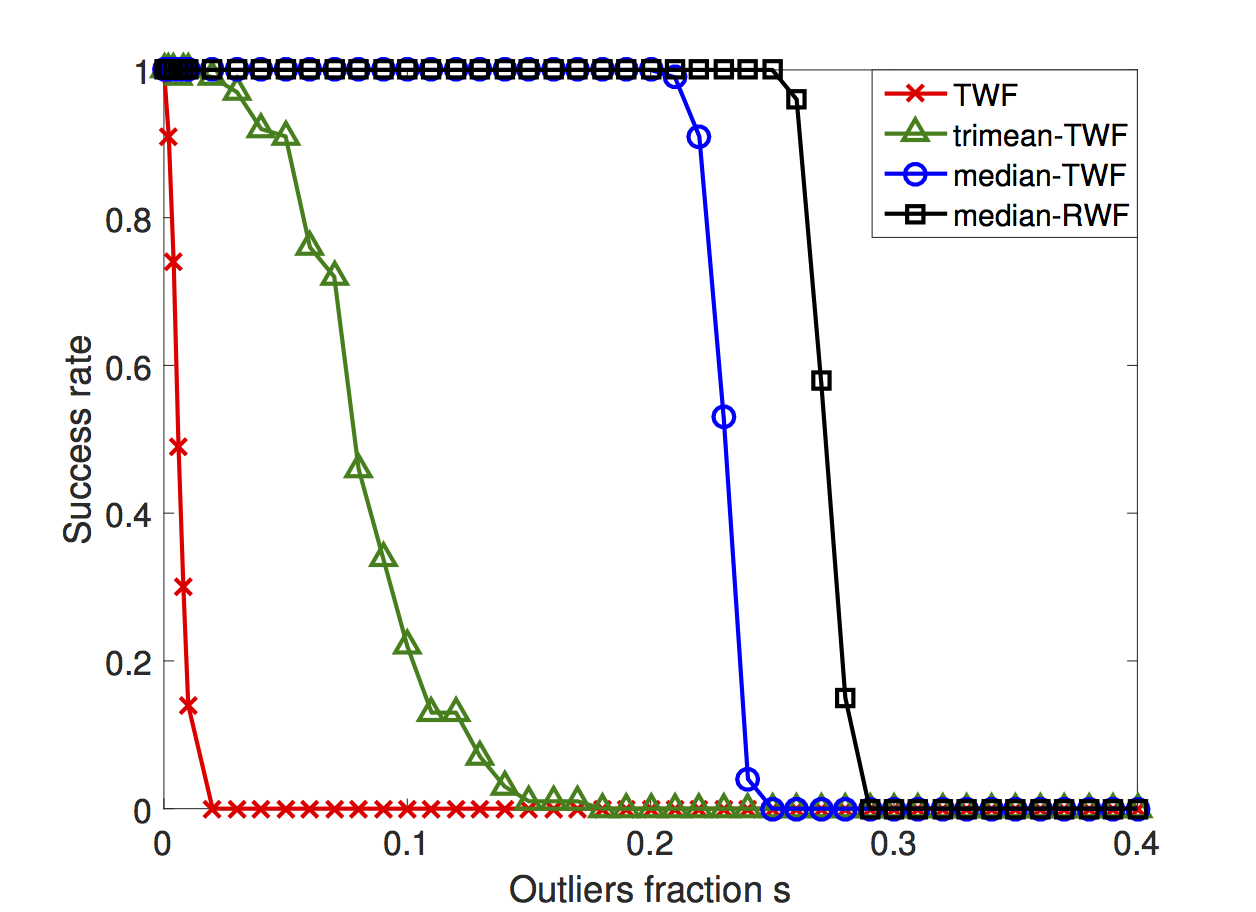}
\label{fig:outliers1}}\\
\subfigure[$\eta_{\max}=10\|\bx\|^2$]{
\includegraphics[width=0.45\textwidth]{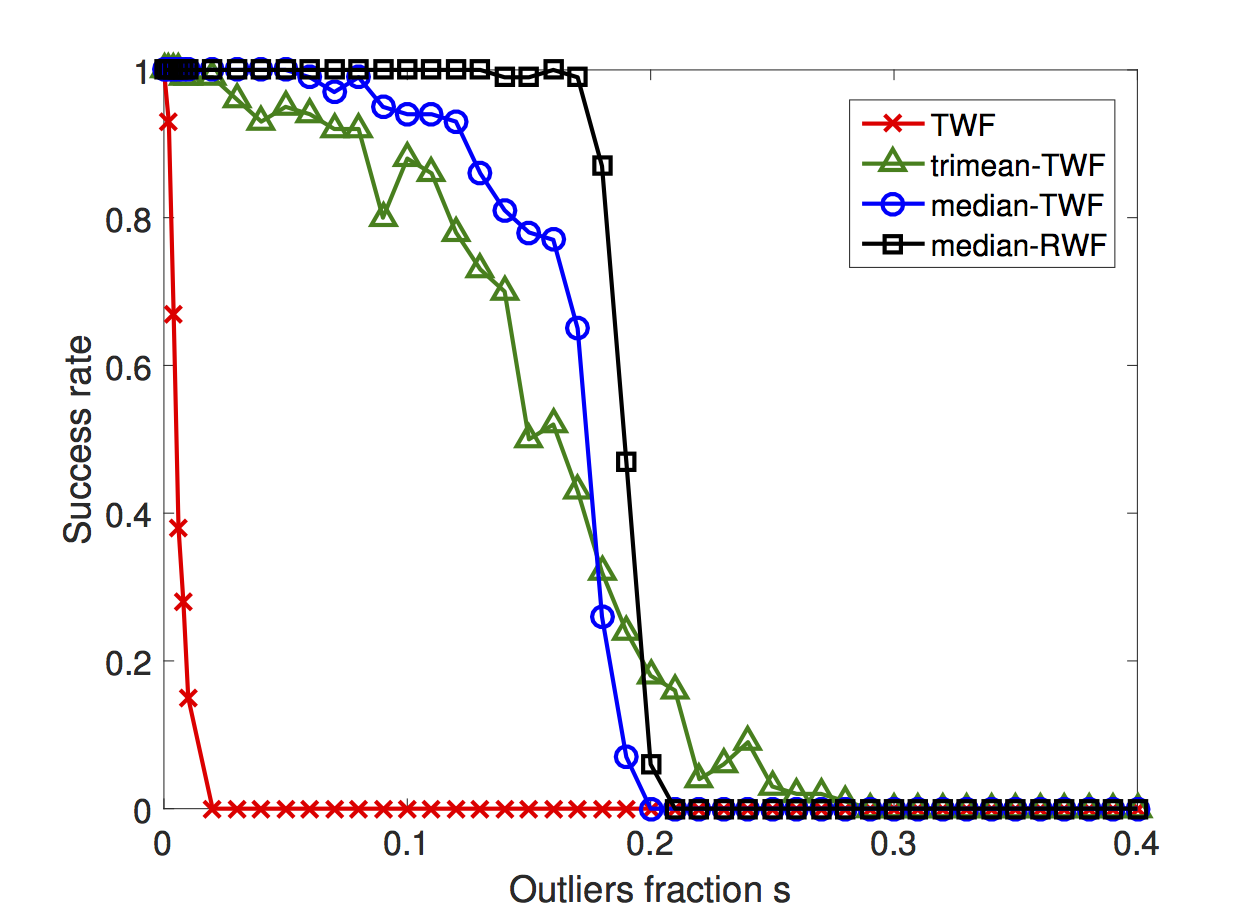}
\label{fig:outliers10}}
\hfil
\subfigure[$\eta_{\max}=100\|\bx\|^2$]{
\includegraphics[width=0.45\textwidth]{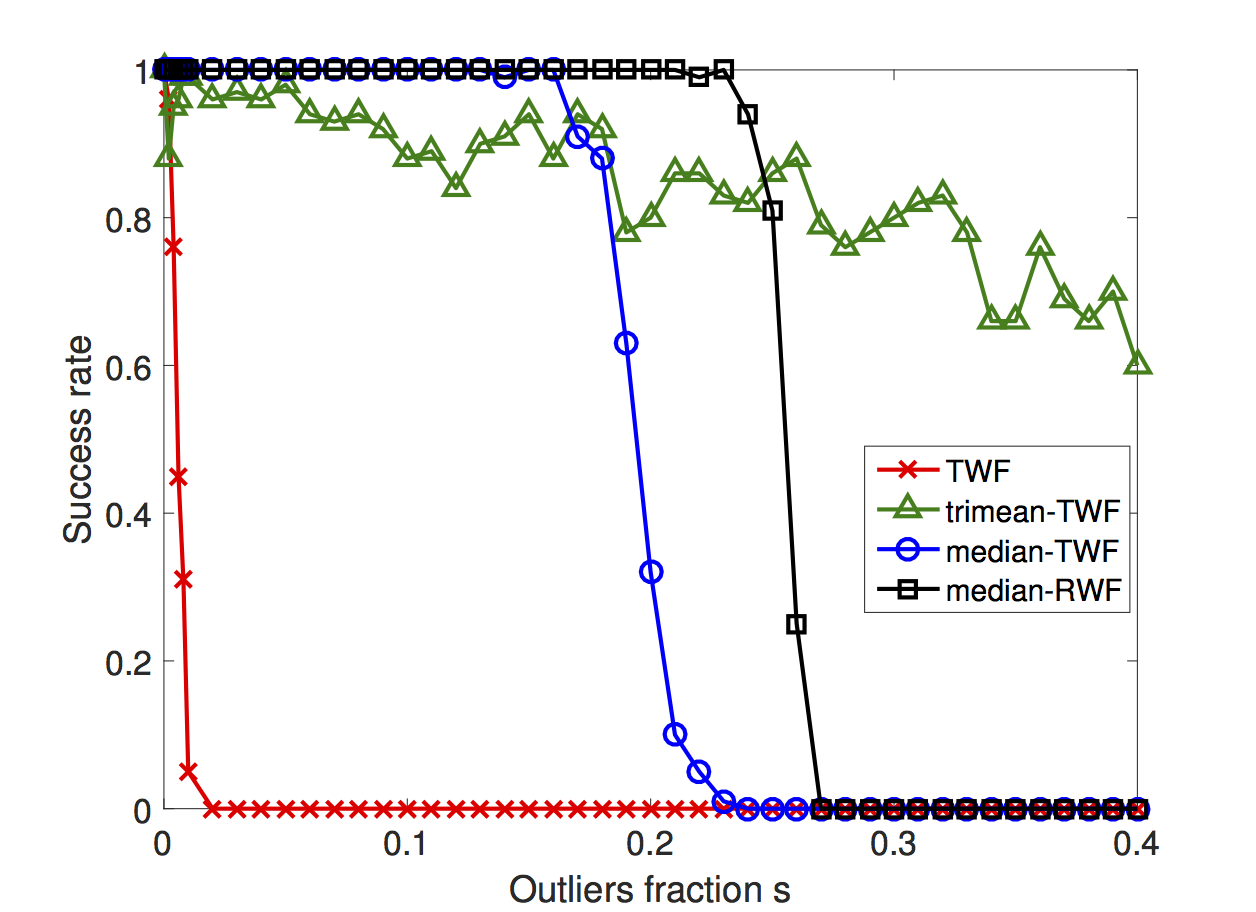}
\label{fig:outliers100}}
\caption{Success rate of exact recovery with respect to the fraction of sparse outliers for median-TWF, median-RWF, trimean-TWF, and TWF at different levels of outlier magnitudes.}
\label{fig:outliers}
\end{figure*}

From Figure~\ref{fig:outliers}, it can be seen that median-TWF and median-RWF allow exact recovery as long as $s$ is not too large for all levels of outlier magnitudes, without assuming any knowledge of the outliers, which validates our theoretical analysis. Empirically, median-RWF can tolerate a larger fraction of outliers than median-TWF. This could be due to the fact that the lower-order objective adopted in median-RWF reduces the variance and allows more stable search direction. Unsurprisingly, TWF fails quickly even with a very small fraction of outliers. No successful instance is observed for TWF when $s\geq 0.02$ irrespective of the value of $\eta_{\max}$. Trimean-TWF, even knowing the number of outliers, still does not exhibit a sharp phase transition, and in general underperforms the proposed median-TWF and median-RWF, except when both $\eta_{\max}$ and $s$ get very large, see Figure~\ref{fig:outliers100}. This is because in this range of large outliers, knowing the fraction $s$ of outliers provides substantial advantage for trimean-TWF to eliminate them, while the sample median can deviate significantly from the true median for large $s$. Moreover, it is worth mentioning that exact recovery is more challenging for median-TWF and median-RWF when the magnitudes of most outliers are comparable to the measurements, as in Figure~\ref{fig:outliers10}. In such a case, the outliers are more difficult to exclude as opposed to the case with very large outlier magnitudes as in Figure~\ref{fig:outliers100}; and meanwhile, the outlier magnitudes in Figure~\ref{fig:outliers10} are large enough to affect the accuracy heavily in contrast to the cases in Figures~\ref{fig:outliers01} and \ref{fig:outliers1} where outliers are less prominent.


 
 \subsection{Stable Recovery with Sparse Outliers and Dense Noise}
We now examine the performance of median-TWF and median-RWF in the presence of both sparse outliers and dense bounded noise. The entries of the dense bounded noise term $\bw$ are generated independently from $\cU(0, w_{\max})$. The entries of the sparse outlier are then generated as $\eta_i\sim \|\bw\|\cdot \mbox{Bernoulli}(0.1)$ independently. Figure~\ref{fig:twonoises1} and Figure~\ref{fig:twonoises2} depict the relative error $\dist(\bz^{(t)},\bx)/\|\bx\|$ with respect to the iteration count $t$, when $w_{\max} /\|\bx\|^2=0.001$ and $0.01$ respectively. In the presence of sparse outliers, it can be seen that both median-TWF and median-RWF clearly outperforms TWF under the same situation, and acts as if the outliers do not exist by achieving almost the same accuracy as TWF without outliers. Moreover, the relative error of the reconstruction using median-TWF or median-RWF has 10 times gain from Figure \ref{fig:twonoises1} to Figure \ref{fig:twonoises2} as $w_{\max}$ shrinks by a factor of $10$, which corroborates Theorem \ref{thm:twonoises} nicely. Furthermore, it can be seen that median-RWF converges faster than the other algorithms, due to the improved curvature of using low-order objectives, corroborating the result in \cite{zhang2016reshaped}. On the other hand, median-TWF returns more accurate estimates, due to employing more delicate truncation rules that may help reduce the noise.

 
 \begin{figure*}[tb]
\begin{center}
\subfigure[$w_{\max}=0.01\|\bx\|^2$]{
\includegraphics[width=0.45\textwidth]{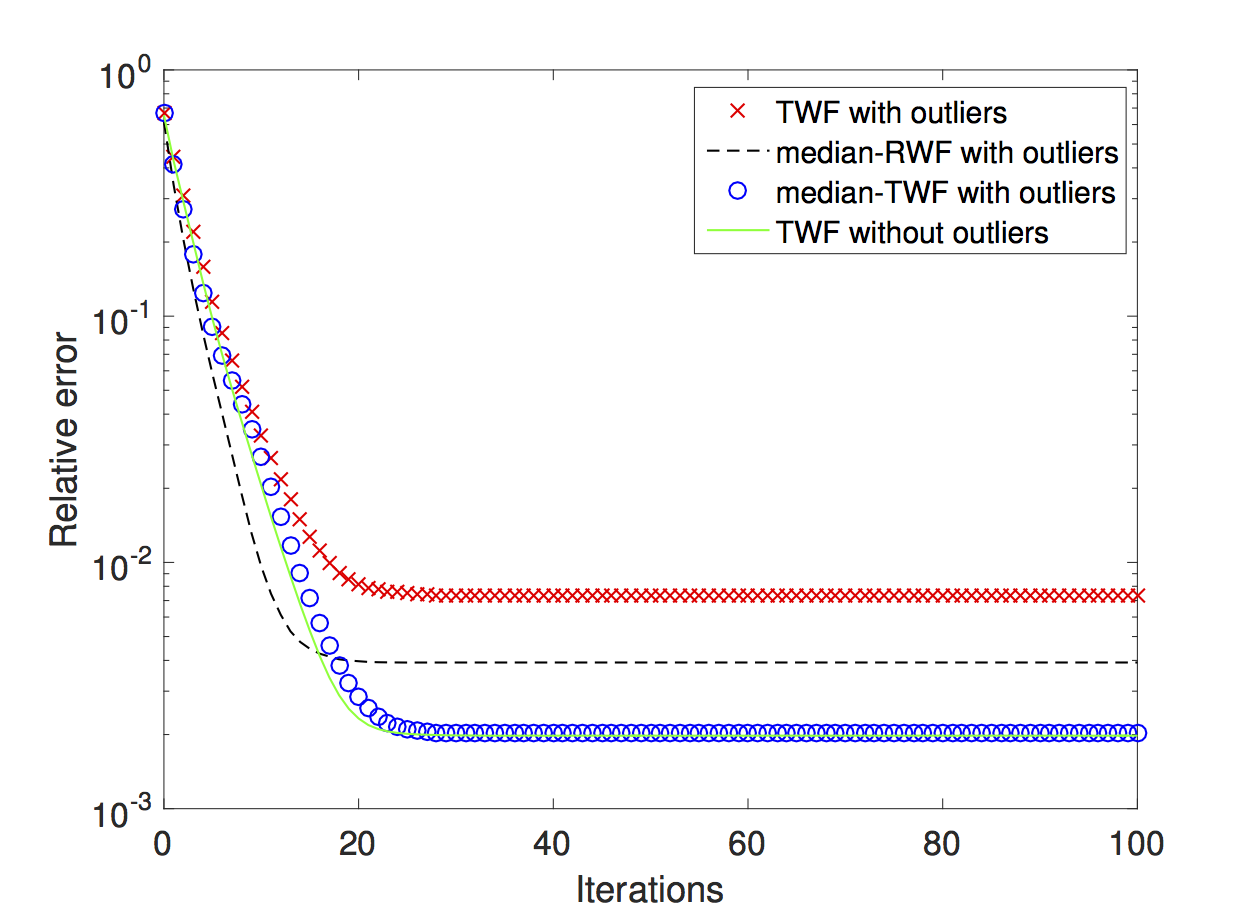}
\label{fig:twonoises1}}
\subfigure[$w_{\max}=0.001\|\bx\|^2$]{
\includegraphics[width=0.45\textwidth]{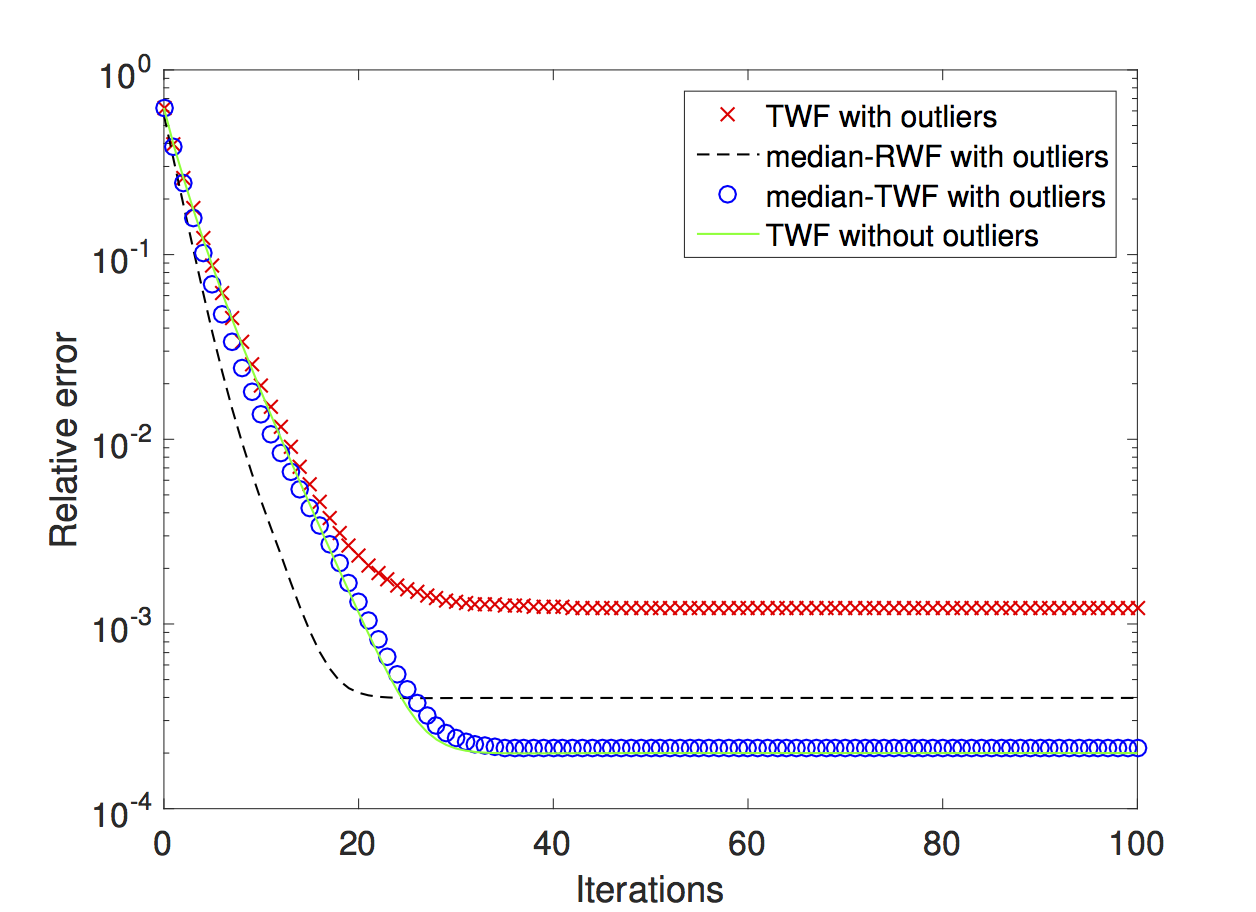}
\label{fig:twonoises2}}
\end{center}
\caption{The relative error with respect to the iteration count for median-TWF, median-RWF and TWF with both dense  noise and sparse outliers, and TWF with only dense noise. In (a) and (b), the dense noise is generated uniformly at different levels.}
\label{fig:twonoises}
\end{figure*}



Finally, we consider the case when the measurements are corrupted by both Poisson noise and outliers, modeling photon detection in optical imaging applications. We generate each measurement as $y_i\sim \text{Poisson}(|\langle\ba_i,\bx\rangle|^2)$, for $i=1,\cdots, m$, which is then corrupted with probability $s=0.1$ by outliers. The entries of the outlier are obtained by first generating $\eta_i\sim \|\bx\|^2 \cdot \cU(0, 1)$ independently, and then rounding it to the nearest integer. Figure \ref{fig:poisson} depicts the relative error $\dist(\bz^{(t)},\bx)/\|\bx\|$ with respect to the iteration count $t$, where median-TWF and median-RWF under both outliers and Poisson noise have almost the same accuracy as, if not better than, TWF under only the Poisson noise. 
\begin{figure}[htb]
\begin{center}
\includegraphics[width=0.5\textwidth]{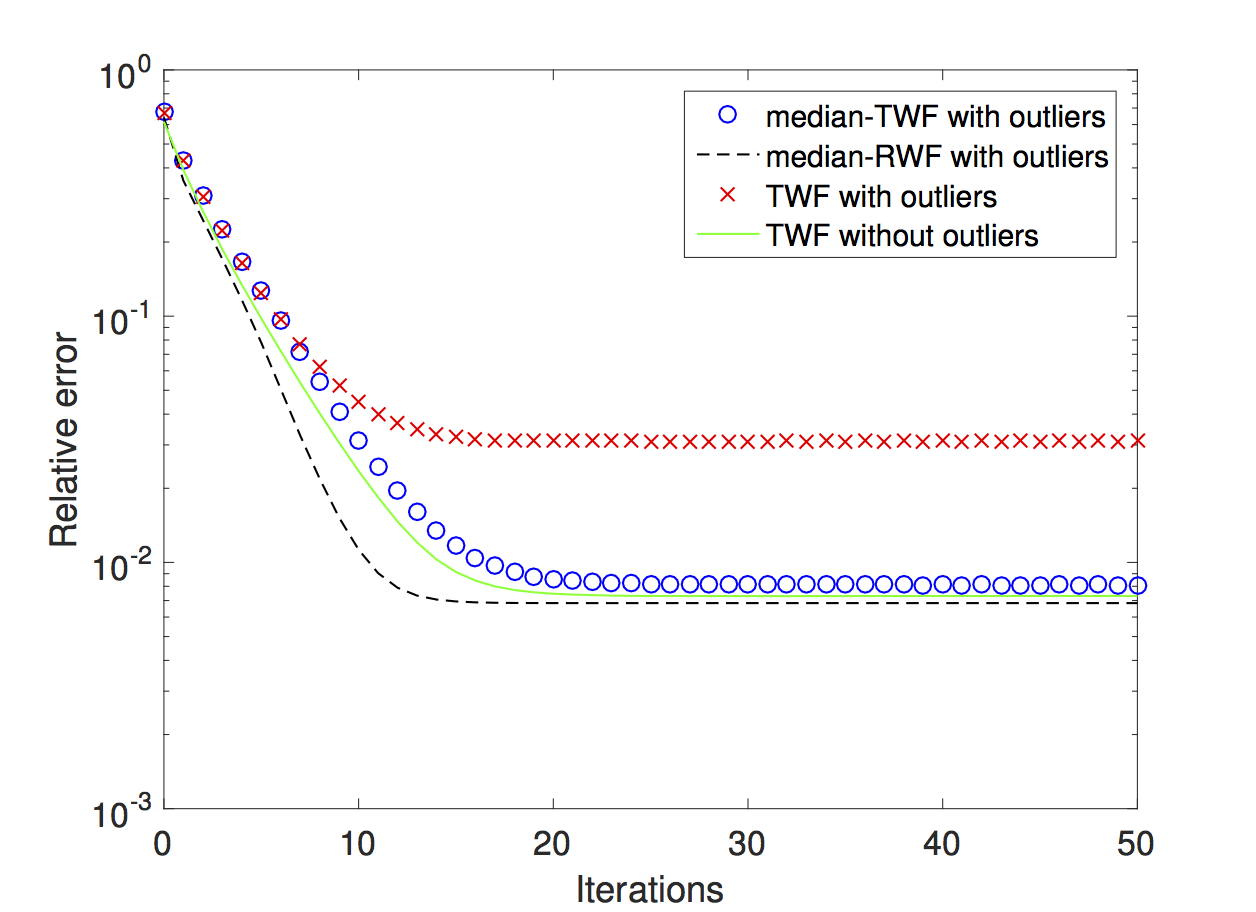}
\end{center}
\caption{The relative error with respect to the iteration count for median-TWF, median-RWF and TWF with both Poisson noise and sparse outliers, and TWF with only Poisson noise.}
\label{fig:poisson}
\end{figure}



\section{Preliminaries and Proof Roadmap}\label{sec:proof}

Broadly speaking, the proofs for median-TWF and median-RWF follow the same roadmap. The crux is to use the statistical properties of the median to show that the median-truncated gradients satisfy the so-called \emph{Regularity Condition} \cite{candes2015phase}, which guarantees the linear convergence of the update rule, provided the initialization provably lands in a small neighborhood of the true signal. 

We first develop a few statistical properties of median that will be useful throughout our analysis in Section~\ref{sec:medianprop}. Section~\ref{sec:initialization} analyzes the initialization that is used in both algorithms. We then state the definition of Regularity Condition in Section~\ref{sec:rc_convergence} and explain how it leads to the linear convergence rate. We provide separate detailed proofs for two algorithms in Section~\ref{sec:proof:median_twf} and Section~\ref{sec:proof:median_rwf}, respectively, because they involve different bounding techniques that may be of independent interest due to different loss functions. 

%
%

\subsection{Properties of Median}\label{sec:medianprop}
We start by the definitions of the quantile of a population distribution and its sample version.
\begin{definition}[Generalized quantile function]
	Let $0<p<1$. For a cumulative distribution function (CDF) $F$, the generalized quantile function is defined as
	\begin{flalign}
	F^{-1}(p)=\inf \{x\in \bbR: F(x)\ge p\}. \label{eq:quantiledef}
	\end{flalign}
	For simplicity, denote $\theta_p(F)=F^{-1}(p)$ as the $p$-\emph{quantile} of $F$. Moreover for a sample sequence $\{X_i\}_{i=1}^m$, the sample $p$-quantile $\theta_p(\{X_i\})$ means $\theta_p(\hat{F})$, where $\hat{F}$ is the empirical distribution of the samples $\{X_i\}_{i=1}^m$ .
\end{definition}

\begin{remark}
We note that the median $\median(\{X_i\})=\theta_{1/2}(\{X_i\})$, and we use both notations interchangeably.
\end{remark}

Next, we show that as long as the sample size is large enough, the sample quantile concentrates around the population quantile (motivated from \cite{charikar2002finding}), as in Lemma~\ref{lem:quantileconcentration}.
\begin{lemma}\label{lem:quantileconcentration}
Suppose $F(\cdot)$ is cumulative distribution function (i.e., non-decreasing and right-continuous) with continuous density function $F'(\cdot)$. Assume the samples $\{X_i\}_{i=1}^m$ are i.i.d. drawn from $F$. Let $0<p<1$. If $l<F'(\theta)<L$ for all $\theta$ in $\{\theta:|\theta-\theta_p|\le \epsilon$\}, then
\begin{flalign}
	|\theta_p(\{X_i\}_{i=1}^m)-\theta_p(F) |<\epsilon
\end{flalign}
holds with probability at least $1-2\exp(-2m\epsilon^2l^2)$.
\end{lemma}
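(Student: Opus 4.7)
The plan is to translate the event $\{|\theta_p(\{X_i\}) - \theta_p(F)| \ge \epsilon\}$ into an event about the empirical CDF $\hat{F}_m$ evaluated at the two fixed points $\theta_p \pm \epsilon$, and then bound the probability of each piece via a one-sided Hoeffding inequality. The overall argument is the classical DKW-style tail bound, but applied only at two points rather than uniformly.

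First, I would unpack the generalized quantile. From \eqref{eq:quantiledef}, the event $\theta_p(\{X_i\}) > \theta_p(F) + \epsilon$ is contained in $\{\hat{F}_m(\theta_p(F) + \epsilon) < p\}$, and $\theta_p(\{X_i\}) < \theta_p(F) - \epsilon$ is contained in $\{\hat{F}_m(\theta_p(F) - \epsilon) \ge p\}$. Next I would use the density lower bound to get a ``gap'' at the two reference points: on the interval of radius $\epsilon$ around $\theta_p(F)$ the density satisfies $F' > l$, so the mean value theorem yields
\begin{align*}
F(\theta_p(F) + \epsilon) &\ge p + l\epsilon, \\
F(\theta_p(F) - \epsilon) &\le p - l\epsilon.
\end{align*}
Combining these with the inclusions above, the bad events are contained in
\[
\{\hat{F}_m(\theta_p(F)+\epsilon) - F(\theta_p(F)+\epsilon) < -l\epsilon\} \ \text{and}\ \{\hat{F}_m(\theta_p(F)-\epsilon) - F(\theta_p(F)-\epsilon) > l\epsilon\}.
\]

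Since $m\hat{F}_m(t) = \sum_{i=1}^m \mathbf{1}\{X_i \le t\}$ is a sum of i.i.d.\ Bernoulli variables bounded in $[0,1]$, Hoeffding's inequality gives, for each fixed $t$,
\[
\bbP\bigl(|\hat{F}_m(t) - F(t)| > l\epsilon\bigr) \le 2\exp(-2ml^2\epsilon^2).
\]
Applying the one-sided version to each of the two events above and taking a union bound produces the claimed bound $2\exp(-2ml^2\epsilon^2)$.

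There is no real obstacle here; the only subtlety is handling the infimum in the definition of the generalized quantile cleanly so that the inclusions above are genuine (in particular, checking the strict versus non-strict inequalities at the endpoints using continuity of $F$, which follows from $F$ having a density). The upper bound $L$ on the density does not enter the probability estimate---it is used only implicitly to guarantee that $\theta_p(F)$ is well-defined and unique in the given neighborhood. The rest is a routine Hoeffding computation at two deterministic points.
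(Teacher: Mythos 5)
Your proposal is correct and follows essentially the same route as the paper's proof: both reduce the quantile deviation event to a one-sided deviation of the empirical CDF (equivalently, a Binomial sum) at the two fixed points $\theta_p(F)\pm\epsilon$, use the density lower bound $l$ on the $\epsilon$-neighborhood to create a gap of at least $l\epsilon$, and finish with Hoeffding plus a union bound. Your observation that $L$ plays no role in the probability bound is consistent with the paper, where the upper bound only enters through the (unused) upper end of the interval containing the gap $\delta_1$.
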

\begin{proof} See Appendix~\ref{proof_lem:quantileconcentration}.
\end{proof}

Lemma~\ref{lem:orderstats} bounds the distance between the median of two sequences.
\begin{lemma}\label{lem:orderstats}
Given a vector $\bX=(X_1,X_2,...,X_n)$, reorder the entries in a non-decreasing manner
\begin{flalign*}
X_{(1)}\le X_{(2)}\le ... \le X_{(n-1)}\le X_{(n)}.
\end{flalign*}
Given another vector $\bY=(Y_1,Y_2,..., Y_n)$, then 
\begin{flalign}
	|X_{(k)}-Y_{(k)}|\le \|\bX-\bY\|_\infty,
\end{flalign}
holds for all $k=1,..., n.$
\end{lemma}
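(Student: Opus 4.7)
The plan is to reduce the two-sided claim $|X_{(k)} - Y_{(k)}| \le \|\bX - \bY\|_\infty$ to a single one-sided inequality by symmetry. Writing $\delta := \|\bX - \bY\|_\infty$, the defining bound $|X_i - Y_i| \le \delta$ for every $i$ gives the entrywise sandwich $Y_i - \delta \le X_i \le Y_i + \delta$. Once I prove $X_{(k)} \le Y_{(k)} + \delta$ for an arbitrary $k$, the reverse inequality $Y_{(k)} \le X_{(k)} + \delta$ follows by swapping the roles of $\bX$ and $\bY$, and the two together yield the lemma.

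For the one-sided bound I would use a direct counting argument on the order statistics. Let $I \subseteq \{1,\ldots,n\}$ be a set of $k$ indices on which $\bY$ attains its $k$ smallest values, so $|I| = k$ and $Y_i \le Y_{(k)}$ for every $i \in I$. The entrywise bound then gives $X_i \le Y_i + \delta \le Y_{(k)} + \delta$ for each $i \in I$, so at least $k$ coordinates of $\bX$ lie at or below $Y_{(k)} + \delta$. By the definition of the $k$-th smallest entry, this forces $X_{(k)} \le Y_{(k)} + \delta$, which is exactly the desired one-sided bound. An equivalent and slightly slicker route is to invoke the minimax representation $X_{(k)} = \min_{|S|=k} \max_{i \in S} X_i$ and use the set $S = I$ as a feasible choice.

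There is no real obstacle here: the argument is elementary and does not require any probabilistic or analytic input. The only minor subtlety is tie handling when multiple coordinates of $\bY$ equal $Y_{(k)}$, but this is resolved trivially, since the counting argument only needs a set $I$ of size $k$ on which $Y_i \le Y_{(k)}$, and such a set exists under any consistent tie-breaking rule.
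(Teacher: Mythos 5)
Your proposal is correct, and it takes a genuinely different (and cleaner) route than the paper. The paper's proof splits into three cases: for $k=n$ and $k=1$ it compares the indices attaining the maximum (resp.\ minimum) of $\bX$ and $\bY$ directly, and for $1<k<n$ it argues by contradiction, assuming $|X_{(k)}-Y_{(k)}|>\max_l|X_l-Y_l|$ and deriving an inconsistency in how the index sets of the lower and upper portions of the two ordered sequences must relate. Your argument instead proves the single one-sided bound $X_{(k)}\le Y_{(k)}+\delta$ uniformly in $k$ by a counting step: choosing $k$ indices $I$ with $Y_i\le Y_{(k)}$ for $i\in I$, the entrywise bound forces at least $k$ coordinates of $\bX$ to lie below $Y_{(k)}+\delta$, hence $X_{(k)}\le Y_{(k)}+\delta$, and symmetry in $\bX,\bY$ gives the reverse inequality; the minimax representation $X_{(k)}=\min_{|S|=k}\max_{i\in S}X_i$ packages the same step. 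This buys you a shorter, case-free argument with no contradiction bookkeeping and transparent tie handling, whereas the paper's case analysis makes the mechanism at the extremes ($k=1,n$) very explicit but is more laborious in the middle case. Both proofs are elementary and establish exactly the stated inequality, so there is no gap in your approach.
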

\begin{proof} See Appendix~\ref{proof_lem:orderstats}.
\end{proof}

Lemma~\ref{lem:empiricalmedian}, as a key robustness property of median, suggests that in the presence of outliers, one can bound the sample median from both sides by neighboring quantiles of the corresponding clean samples.

\begin{lemma}\label{lem:empiricalmedian}
Consider \emph{clean samples} $\{\tilde{X}_i\}_{i=1}^m$. If a fraction $s$ ($s<\frac{1}{2}$) of them are corrupted by outliers, one obtains \emph{contaminated samples} $\{X_i\}_{i=1}^m$ which contain $sm$ corrupted samples and $(1-s)m$ clean samples. Then for a quantile $p$ such that $s < p < 1-s$, we have
\begin{flalign}
	\theta_{p-s}(\{\tilde{X}_i\})\le\theta_{p} (\{X_i\}) \le\theta_{p+s}(\{\tilde{X}_i\}). \nn
\end{flalign}
\end{lemma}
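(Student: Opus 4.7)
The plan is to translate both inequalities into statements about order statistics of the two samples, using the definition \eqref{eq:quantiledef}: for a sample of size $m$, $\theta_q(\{X_i\}) = X_{(\lceil qm\rceil)}$, where $X_{(1)}\le\cdots\le X_{(m)}$ denotes the sorted values. Once phrased this way, the inequalities reduce to a counting argument on how contamination can shift ranks by at most $sm$ positions. I will assume, as the statement implicitly does, that $sm$ is an integer so that $\lceil (p\pm s)m\rceil = \lceil pm\rceil \pm sm$.

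For the upper bound $\theta_p(\{X_i\})\le \theta_{p+s}(\{\tilde X_i\})$, set $k=\lceil (p+s)m\rceil$ and bound from below the number of contaminated samples that lie at or below $\tilde X_{(k)}$. In the clean list there are exactly $k$ such entries; since only $sm$ of the clean samples are removed when forming $\{X_i\}$, at least $k-sm$ remain, and these are also entries of $\{X_i\}$. With $sm$ integer, $k-sm=\lceil pm\rceil$, so at least $\lceil pm\rceil$ contaminated samples are $\le \tilde X_{(k)}$, which forces $X_{(\lceil pm\rceil)}\le \tilde X_{(k)}$, i.e.\ the desired inequality.

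For the lower bound $\theta_{p-s}(\{\tilde X_i\})\le \theta_p(\{X_i\})$, set $j=\lceil (p-s)m\rceil$ and this time bound from above the number of contaminated samples strictly below $\tilde X_{(j)}$. Among clean samples, at most $j-1$ are strictly less than $\tilde X_{(j)}$, hence at most $j-1$ survive in $\{X_i\}$; the $sm$ outliers contribute at most $sm$ more. The total is at most $j-1+sm = \lceil pm\rceil-1$, again using the integrality of $sm$. Thus fewer than $\lceil pm\rceil$ contaminated samples are strictly below $\tilde X_{(j)}$, which yields $X_{(\lceil pm\rceil)}\ge \tilde X_{(j)}$.

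There is no real obstacle here; both directions are essentially the same combinatorial fact applied with opposite inequalities. The only issue worth handling cleanly is the bookkeeping of ceilings and the hypothesis that $sm$ is an integer, so that $\lceil (p\pm s)m\rceil - (\pm sm) = \lceil pm\rceil$; if one prefers to drop this integrality assumption, both inequalities still hold after replacing $sm$ by $\lceil sm\rceil$ on the right-hand sides of the counting steps, which only helps the bound.
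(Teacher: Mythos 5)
Your proof is correct and rests on the same elementary rank-counting idea as the paper's (which likewise assumes $sm$ is an integer): corruption of $sm$ samples can shift any order statistic's rank by at most $sm$. The paper counts clean samples among the smallest/largest portions of the ordered contaminated list, while you count contaminated samples on either side of the clean order statistics $\tilde{X}_{(\lceil (p\pm s)m\rceil)}$ — a mirror image of the same argument, with your write-up being somewhat more explicit about the convention $\theta_q(\{X_i\})=X_{(\lceil qm\rceil)}$.
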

\begin{proof} See Appendix~\ref{proof_lem:empiricalmedian}.
\end{proof}

Finally, Lemma~\ref{lem:product} is related to bound the value of the median, as well as the density at the median for the product of two possibly correlated standard Gaussian random variables.
\begin{lemma}\label{lem:product}
Let $u,v \sim \mathcal{N}(0,1)$ which can be correlated with the correlation coefficient $|\rho|\leq 1$. Let $r=|uv|$, and $\psi_{\rho}(x)$ represent the density of $r$. Denote $\theta_{\frac{1}{2}}(\psi_{\rho})$ as the median of $r$, and the value of $\psi_{\rho}(x)$ at the median as $\psi_{\rho}(\theta_{1/2})$. Then for all $\rho$,
\begin{flalign}
& 0.348<\theta_{1/2}(\psi_{\rho})<0.455,\nn \\
&0.47<\psi_{\rho}(\theta_{1/2})<0.76.\nn
\end{flalign}
\end{lemma}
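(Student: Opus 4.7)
I propose to prove the lemma by reducing to a one-parameter family on $\rho \in [0,1]$ via symmetry, obtaining a workable representation of $r = |uv|$, and then combining closed-form endpoint computations with a continuity argument on the compact parameter interval.

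First, replacing $v$ by $-v$ leaves $|uv|$ unchanged while negating $\rho$, so $\psi_\rho = \psi_{-\rho}$ and I may assume $\rho \in [0,1]$. The identity $4uv = (u+v)^2 - (u-v)^2$ together with the fact that $u+v$ and $u-v$ are independent Gaussians of variances $2(1+\rho)$ and $2(1-\rho)$ yields the useful representation
\begin{equation*}
uv \;\stackrel{d}{=}\; \tfrac{1+\rho}{2}\,X^2 \;-\; \tfrac{1-\rho}{2}\,Y^2, \qquad X,Y \stackrel{\text{iid}}{\sim} \mathcal{N}(0,1).
\end{equation*}
From this one can either derive the explicit density
\begin{equation*}
\psi_\rho(x) \;=\; \frac{2\cosh\!\bigl(\rho x/(1-\rho^2)\bigr)}{\pi\sqrt{1-\rho^2}}\,K_0\!\Bigl(\frac{x}{1-\rho^2}\Bigr), \qquad x>0,\ \rho\in(-1,1),
\end{equation*}
(where $K_0$ is the modified Bessel function of the second kind), or work directly with the two-term representation above.

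Next, I would handle the two endpoints by exact calculation. At $\rho=1$, $r = X^2 \sim \chi_1^2$ with median $\theta_{1/2} = [\Phi^{-1}(3/4)]^2 \approx 0.4549$ and density at the median $(2\pi\,\theta_{1/2})^{-1/2} e^{-\theta_{1/2}/2} \approx 0.4715$, both strictly inside the claimed open intervals with a comfortable margin. At $\rho=0$, $r = |X^2-Y^2|/2$, and passing to polar coordinates gives $r \stackrel{d}{=} \tfrac{1}{2} E\,|\cos\Phi|$ where $E\sim \mathrm{Exp}(1/2)$ is independent of $\Phi\sim \mathrm{Unif}[0,2\pi]$; equivalently the density is $\psi_0(x) = (2/\pi)K_0(x)$. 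A direct integration (or table values of $K_0$) verifies that the median and the density at it also lie strictly inside the intervals with margin.

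Finally, for $\rho\in(0,1)$ I would argue by continuity. The density $\psi_\rho(x)$ is jointly continuous and strictly positive in $(\rho,x)\in[0,1)\times(0,\infty)$, so $\theta_{1/2}(\psi_\rho)$ and $\psi_\rho(\theta_{1/2})$ are continuous in $\rho$ on $[0,1)$; continuity up to $\rho=1$ follows from the uniform asymptotic $K_0(z)\sim\sqrt{\pi/(2z)}\,e^{-z}$ as $z\to\infty$ applied with $z = x/(1-\rho^2)$, together with dominated convergence to obtain weak convergence of $\psi_\rho$ to the $\chi_1^2$ density. The two quantities of interest are therefore continuous on the compact interval $[0,1]$, and the uniform bounds of the lemma follow by verifying the strict inequalities at finitely many grid points of $\rho$ with enough slack to cover a modulus-of-continuity gap.

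The main obstacle is the endpoint $\rho \to 1^-$, where the Bessel representation becomes singular (the scale $1-\rho^2$ collapses). Controlling the median and the density at the median as $\rho \uparrow 1$ requires a careful application of the asymptotics of $K_0$, not merely pointwise in $x$ but uniformly on a neighborhood of the limiting median, so that weak convergence upgrades to convergence of the median and of the density value at it. Once this endpoint continuity is secured, the remainder is the numerical verification at $\rho=0, 1$ (with margin) plus the routine grid argument over $(0,1)$.
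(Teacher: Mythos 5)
Your proposal follows essentially the same route as the paper: the identical Bessel-function density $\psi_\rho(x)=\frac{1}{\pi\sqrt{1-\rho^2}}\bigl[e^{\rho x/(1-\rho^2)}+e^{-\rho x/(1-\rho^2)}\bigr]K_0\bigl(x/(1-\rho^2)\bigr)$ (which the paper simply cites from Donahue rather than deriving), the same symmetry reduction to $\rho\in[0,1]$, the same $\chi_1^2$ treatment of $|\rho|=1$, and the same continuity-plus-numerical-evaluation argument (the paper just computes the quantile and the density at it across $\rho$ and reads the bounds off a figure). One caveat: the endpoint margins are not ``comfortable''---at $\rho=1$ the $\chi_1^2$ median $\approx 0.45494$ and density-at-median $\approx 0.471$ sit within roughly $6\times 10^{-5}$ and $10^{-3}$ of the stated bounds $0.455$ and $0.47$ (similarly tight at $\rho=0$ against $0.348$ and $0.76$), so your concluding grid/modulus-of-continuity step has essentially no uniform slack near the endpoints and would need the quantitative $K_0$ asymptotics you flag as the main obstacle, a level of rigor the paper itself does not attempt.
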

\begin{proof} See Appendix~\ref{proof_lem:product}.
\end{proof}

\subsection{Robust Initialization with Outliers}\label{sec:initialization}

Considering the model that the measurements are corrupted by both bounded noise and sparse outliers given by \eqref{eq:twonoisesmodel}, we show that the initialization provided by the median-truncated spectral method in \eqref{eq:init_medianTWF} is close enough to the ground truth, i.e., $\dist(\bz^{(0)},\bx)\le \delta \|\bx\|$.

\begin{proposition}\label{prop:initialization}
Fix $\delta>0$ and $\bx \in \bbR^n$, and consider the model given by \eqref{eq:twonoisesmodel}. Suppose that $\|\mathbf{w}\|_\infty\le c \|\bx\|^2$ for some sufficiently small constant $c>0$ and that $\|\mathbf{\eta}\|_0\le sm$ for some sufficiently small constant $s$. With probability at least $1-\exp(-\Omega(m))$, the initialization given by the median-truncated spectral method obeys\footnote{Notation $f(n) = \Omega(g(n))$ or $f(n)\gtrsim g(n)$ means that there exists a  constant $c>0$ such that $|f(n)|\ge c|g(n)|$.} 
\begin{flalign}
	dist(\bz^{(0)}, \bx) \le \delta \|\bx\|,
\end{flalign}
provided that $m>c_0 n$ for some constant $c_0>0$.
\end{proposition}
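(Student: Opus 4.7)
The plan is to split the proof of Proposition~\ref{prop:initialization} into two pieces: (i) showing the scaling factor $\lambda_0=\sqrt{\median(\by)/0.455}$ is close to $\|\bx\|$, and (ii) showing the leading eigenvector $\tilde{\bz}$ of the median-truncated matrix $\bY$ is close (up to a global sign) to $\bx/\|\bx\|$. Once these two estimates are in hand, a triangle inequality in the unit sphere combined with $\lambda_0\approx\|\bx\|$ yields $\dist(\bz^{(0)},\bx)\le \delta\|\bx\|$.

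First I would analyze $\lambda_0$. In the clean model $\tilde y_i=|\ba_i^T\bx|^2=\|\bx\|^2\xi_i^2$ with $\xi_i\sim\cN(0,1)$, whose $\tfrac12$-quantile $\theta_{1/2}$ equals $\|\bx\|^2$ times the median of a $\chi_1^2$ variable, a well-known constant near $0.455$ (a special case of Lemma~\ref{lem:product} with $u=v$). The contaminated samples $y_i=\tilde y_i+w_i+\eta_i$ differ from $\tilde y_i$ in at most $sm$ coordinates (the outliers) and by at most $\|\bw\|_\infty=O(\|\bx\|^2)$ on the remaining coordinates. Applying Lemma~\ref{lem:empiricalmedian} sandwiches $\median(\by)$ between the $(1/2-s)$- and $(1/2+s)$-sample quantiles of $\{\tilde y_i+w_i\}$, and Lemma~\ref{lem:orderstats} then passes from $\tilde y_i+w_i$ to $\tilde y_i$ with an error of $\|\bw\|_\infty$. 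Finally Lemma~\ref{lem:quantileconcentration} shows those sample quantiles concentrate around the population quantiles of $\|\bx\|^2\xi^2$, with width $O(s+1/\sqrt{m})$ since the density at the median is strictly positive. Choosing $s$ and $c$ small enough gives $|\lambda_0-\|\bx\||\le \tfrac{\delta}{4}\|\bx\|$ with the stated probability.

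Next I would handle the spectral step. With $\lambda_0\asymp\|\bx\|$, the indicator $\bone_{\{|y_i|\le\alpha_y^2\lambda_0^2\}}$ is approximately $\bone_{\{|y_i|\le\alpha_y^2\|\bx\|^2\}}$, and I would work with the slightly enlarged/shrunken deterministic versions and absorb the difference using $|\lambda_0-\|\bx\||$. Split $\bY=\bY_{\text{clean}}+\bY_{\text{out}}$, where $\bY_{\text{clean}}$ uses only the uncorrupted indices. For $\bY_{\text{clean}}$, a direct Gaussian computation gives
\begin{equation}
\mathbb{E}\bigl[\tilde y_i\ba_i\ba_i^T\bone_{\{\tilde y_i\le\alpha_y^2\|\bx\|^2\}}\bigr]=\beta_1\|\bx\|^2\bI+\beta_2\bx\bx^T,
\end{equation}
with explicit $\beta_1,\beta_2>0$ having spectral gap $\beta_2$ along $\bx$; adding $w_i$ only perturbs this by $O(\|\bw\|_\infty)$. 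A matrix Bernstein / truncation argument (essentially the one used for the spectral initialization of TWF in \cite{chen2015solving}) gives $\|\bY_{\text{clean}}-\mathbb{E}\bY_{\text{clean}}\|\lesssim\sqrt{n/m}\,\|\bx\|^2$ once $m\gtrsim n$. For $\bY_{\text{out}}$, the crucial point is that the truncation caps $|y_i|$ at $O(\|\bx\|^2)$, so each surviving outlier contributes at most $O(\|\bx\|^2)\ba_i\ba_i^T$; there are at most $sm$ such terms, so $\|\bY_{\text{out}}\|\lesssim s\|\bx\|^2\,\|\tfrac{1}{sm}\sum_{i\in\mathcal{O}}\ba_i\ba_i^T\|\lesssim s\|\bx\|^2(1+\sqrt{n/(sm)})$. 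Choosing $s$ and $n/m$ small makes the total deviation $\|\bY-(\beta_1\|\bx\|^2\bI+\beta_2\bx\bx^T)\|$ an arbitrarily small multiple of $\|\bx\|^2$. Applying Davis--Kahan then yields $\dist(\tilde{\bz},\bx/\|\bx\|)\le \delta/4$, and combining with $|\lambda_0-\|\bx\||\le \tfrac{\delta}{4}\|\bx\|$ completes the proof.

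The main obstacle is the circularity between $\lambda_0$ and the truncation threshold: the set $\{i:|y_i|\le\alpha_y^2\lambda_0^2\}$ that defines $\bY$ depends on $\lambda_0$, which in turn is a nonlinear function of the whole sample. I would decouple this by establishing a high-probability two-sided bound on $\lambda_0$ first (in terms of the deterministic quantity $\|\bx\|$) and then carrying out the matrix concentration on a fixed deterministic threshold that sandwiches $\alpha_y^2\lambda_0^2$ from above and below, paying only an $O(\delta)\|\bx\|^2$ mismatch. A secondary subtlety is that an adversary can place outliers \emph{just inside} the truncation window, but the cap $|y_i|\le\alpha_y^2\lambda_0^2$ together with the $sm$ bound on $\|\boldsymbol\eta\|_0$ limits their aggregate spectral impact, as sketched above.
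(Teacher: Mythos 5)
Your proposal is correct and follows essentially the same route as the paper's proof: bound $\median(\{y_i\})$ via Lemmas~\ref{lem:empiricalmedian}, \ref{lem:orderstats} and \ref{lem:quantileconcentration}, replace the data-dependent threshold by deterministic ones sandwiching $\alpha_y^2\lambda_0^2$, separate clean and outlier contributions (the paper uses a PSD sandwich $\bY_1\preceq\bY\preceq\bY_2$ while you bound $\|\bY_{\text{out}}\|$ additively, a cosmetic difference), apply sub-Gaussian matrix concentration, and finish with an eigenvector perturbation argument plus the norm estimate from the median.
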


\begin{proof} See Appendix~\ref{supp:initialization}.
\subsection{Regularity Condition} \label{sec:rc_convergence}
Once the initialization is guaranteed to be within a small neighborhood of the ground truth, we only need to show that the truncated gradient \eqref{trimmed_loss} and \eqref{eq:RWFgradient} satisfy the {\em Regularity Condition} ($\mathsf{RC}$) \cite{candes2015phase,chen2015solving}, which guarantees the geometric convergence of median-TWF/median-RWF once the initialization lands into this neighborhood.

\begin{definition}  \label{def:RC}
The gradient $\nabla \ell(\bz)$ is said to satisfy the Regularity Condition $\mathsf{RC}(\mu,\lambda,c)$ if
\begin{flalign}
\left\langle \nabla \ell(\bz), \bz-\bx \right\rangle \ge \frac{\mu}{2} \left\|\nabla \ell(\bz)\right\|^2+\frac{\lambda}{2} \|\bz-\bx\|^2 \label{eq:defRC}
\end{flalign}
for all $\bz$ obeying $\|\bz-\bx\|\le c \|\bx\|$.
\end{definition}
The above $\mathsf{RC}$ guarantees that the gradient descent update $\bz^{(t+1)}=\bz^{(t)}-\mu \nabla \ell(\bz)$ converges to the true signal $\bx$ geometrically \cite{chen2015solving} if $\mu\lambda<1$. We repeat this argument below for completeness.
\begin{flalign*}
\dist^2(\bz-\mu \nabla \ell(\bz),\bx) &\le \left\|\bz-\mu \nabla \ell(\bz)-\bx\right\|^2\\
&=\|\bz-\bx\|^2+\|\mu\nabla\ell(\bz)\|^2-2\mu\left\langle\bz-\bx, \nabla\ell(\bz)\right\rangle\\
&\le \|\bz-\bx\|^2+\|\mu\nabla\ell(\bz)\|^2-\mu^2\|\nabla\ell(\bz)\|^2-\mu\lambda\left\|\bz-\bx\right\|^2\\
&=(1-\mu\lambda)\dist^2(\bz,\bx).
\end{flalign*}

\section{Proofs for Median-TWF}\label{sec:proof:median_twf}
We first show that $\nabla \ell_{tr}(\bz)$ in \eqref{trimmed_loss} satisfies the $\mathsf{RC}$ for the noise-free case in Section~\ref{sec:proof_prop:noisefree}, and then extend it to the model with only sparse outliers in Section~\ref{sec:proofTWFoutliers}, thus together with Proposition \ref{prop:initialization} establishing the global convergence of median-TWF in both cases. Section~\ref{app:thm:twonoises} proves Theorem~\ref{thm:twonoises} in the presence of both sparse outliers and dense bounded noise.

\subsection{Proof of Proposition \ref{prop:noisefree}} \label{sec:proof_prop:noisefree}


We consider the noise-free model. The central step to establish the $\mathsf{RC}$ is to show that the sample median used in the truncation rule of median-TWF concentrates at the level $\|\bz-\bx\|\|\bz\|$ as stated in the following proposition.
\begin{proposition}\label{prop:mediansandwich}
If $m>c_0 n\log n  $, then with probability at least $1-c_1\exp(-c_2  m)$, 
\begin{flalign}
0.6\|\bz\|\|\bz-\bx\|  \le \theta_{0.49}, \theta_{0.5},\theta_{0.51}(\left\{\left||\ba_i^T\bx|^2-|\ba_i^T\bz|^2\right|\right\}_{i=1}^m) \le \|\bz\|\|\bz-\bx\|, \label{eq:median_sandwichTWF}
\end{flalign}
holds for all $\bz, \bx$ satisfying $\|\bz-\bx\|<1/11\|\bz\|$.
\end{proposition}
\begin{proof}
	Detailed proof is provided in Appendix \ref{app:SamMedCon}.
\end{proof}

We note that a similar property for the sample mean has been shown in \cite{chen2015solving} as long as the number $m$ of measurements is on the order of $n$. In fact, the sample median is much more challenging to bound due to its non-linearity, which also causes slightly more measurements compared to the sample mean.

Then we can establish that $\left\langle \nabla \ell_{tr}(\bz), \bz-\bx \right\rangle$ is lower bounded on the order of $\|\bz-\bx\|^2$, as in Proposition~\ref{prop:strongconvex}, and that $\left\|\nabla \ell_{tr}(\bz)\right\|$ is upper bounded on the order of $\|\bz-\bx\|$, as in Proposition~\ref{prop:localsmooth}.
\begin{proposition}[Adapted version of Proposition 2 of \cite{chen2015solving}]\label{prop:strongconvex}
Consider the noise-free case $y_i=|\ba_i^T\bx|^2$ for $i=1, \cdots, m$, and any fixed constant $\epsilon>0$. Under the condition \eqref{eq:parameters}, if $m>c_0 n\log n$, then with probability at least $1-c_1\exp(-c_2\epsilon^{-2}m)$,
\begin{flalign}
\left\langle \nabla \ell_{tr}(\bz), \bz-\bx \right\rangle\ge \left\{1.99-2(\zeta_1+\zeta_2)-\sqrt{8/\pi}\alpha_h^{-1}-\epsilon\right\}\|\bz-\bx\|^2
\end{flalign}
holds uniformly over all $\bx, \bz\in \bbR^n$ satisfying
\begin{flalign}
	\frac{\|\bz-\bx\|}{\|\bz\|}\le \min\left\{\frac{1}{11}, \frac{\alpha_l}{\alpha_h}, \frac{\alpha_l}{6}, \frac{\sqrt{98/3}(\alpha_l)^2}{2\alpha_u+\alpha_l}\right\},
\end{flalign}
where $c_0, c_1, c_2>0$ are some universal constants, and $\zeta_1, \zeta_2, \alpha_l, \alpha_u$ and $\alpha_h$ are defined in \eqref{eq:parameters}.
\end{proposition}
The proof of Proposition \ref{prop:strongconvex} adapts the proof of Proposition 2 of \cite{chen2015solving}, by properly setting parameters based on the properties of sample median. For completeness, we include a short outline of the proof in Appendix~\ref{app:prop:strongconvex}.


\begin{proposition}[Lemma 7 of \cite{chen2015solving}]\label{prop:localsmooth}
Under the same condition as in Proposition \ref{prop:strongconvex}, if $m>c_0 n$, then there exist some constants $c_1, c_2>0$ such that with probability at least $1-c_1\exp(-c_2 m)$,
\begin{flalign}
\left\|\nabla \ell_{tr}(\bz)\right\| \le (1+\delta)\cdot 2\sqrt{1.02+2/\alpha_h}\|\bz-\bx\|
\end{flalign}
holds uniformly over all $\bx, \bz\in \bbR^n$ satisfying
\begin{flalign}
	\frac{\|\bz-\bx\|}{\|\bz\|}\le \min\left\{\frac{1}{11}, \frac{\alpha_l}{\alpha_h}, \frac{\alpha_l}{6}, \frac{\sqrt{98/3}(\alpha_l)^2}{2\alpha_u+\alpha_l}\right\},
\end{flalign}
where $\delta$ can be arbitrarily small as long as $m/n$ sufficiently large, and  $\alpha_l, \alpha_u$ and $\alpha_h$ are given in \eqref{eq:parameters}.
\end{proposition}
\begin{proof}
See the proof of Lemma 7 in \cite{chen2015solving}.
\end{proof}

With these two propositions and \eqref{eq:parameters}, $\mathsf{RC}$ is guaranteed by setting 
\begin{flalign*}
&\mu<\mu_0:=\frac{(1.99-2(\zeta_1+\zeta_2)-\sqrt{8/\pi}\alpha_h^{-1}}{2(1+\delta)^2\cdot(1.02+2/\alpha_h)},\\
&\lambda+\mu\cdot 4(1+\delta)^2\cdot(1.02+2/\alpha_h)<2\left\{1.99-2(\zeta_1+\zeta_2)-\sqrt{8/\pi}\alpha_h^{-1}-\epsilon\right\}.
\end{flalign*}

\subsection{Proof of Theorem \ref{thm:outliers} }\label{sec:proofTWFoutliers}

We next consider the model \eqref{eq:twonoisesmodel} with only sparse outliers. It suffices to show that  $\nabla \ell_{tr}(\bz)$ continues to satisfy the $\mathsf{RC}$. The critical step is to bound the sample median of the corrupted measurements. Lemma~\ref{lem:empiricalmedian} yields
\begin{flalign}
\theta_{\frac{1}{2}-s}(\{|(\ba_i^T\bx)^2-(\ba_i^T\bz)^2|\})\le \theta_{\frac{1}{2}}(\{|y_i-(\ba_i^T\bz)^2|\}) \le \theta_{\frac{1}{2}+s}(\{|(\ba_i^T\bx)^2-(\ba_i^T\bz)^2|\}.
\end{flalign}
For simplicity of notation, we let $\bh:=\bz-\bx$. Then for the instance of $s=0.01$, by Proposition \ref{prop:mediansandwich}, we have with probability at least $1-2\exp(-\Omega(m))$,
\begin{flalign}
0.6\|\bz\|\|\bh\|\le \theta_{\frac{1}{2}}(\{|y_i-(\ba_i^T\bz)^2|\}) \le \|\bz\|\|\bh\|.
\end{flalign}

To differentiate from $\cE_2^i$, we define $\tilde{\cE}_2^i:=\left\{\left|(\ba_i^T\bx)^2-(\ba_i^T\bz)^2\right|\le \alpha_h \median \left\{\left|y_i - (\ba_i^T\bz)^2\right|\right\}\frac{|\ba_i^T\bz|}{\|\bz\|}\right\}$. We then have
\begin{flalign}
\nabla \ell_{tr}(\bz) &= \frac{1}{m}\sum_{i=1}^m \frac{(\ba_i^T\bz)^2-y_i}{\ba_i^T\bz} \ba_i \bone_{\cE_1^i\cap \cE_2^i} \nn\\
&= \underbrace{\frac{1}{m}\sum_{i=1}^m \frac{(\ba_i^T\bz)^2-(\ba_i^T\bx)^2}{\ba_i^T\bz} \ba_i \bone_{\cE_1^i\cap \tilde{\cE}_2^i} }_{\nabla^{clean}\ell_{tr}(\bz)} + \underbrace{\frac{1}{m}\sum_{i\in S} \left( \frac{(\ba_i^T\bz)^2-y_i}{\ba_i^T\bz}\bone_{\cE_1^i\cap \cE_2^i}-\frac{(\ba_i^T\bz)^2-(\ba_i^T\bx)^2}{\ba_i^T\bz} \bone_{\cE_1^i\cap \tilde{\cE}_2^i} \right)\ba_i}_{\nabla^{extra}\ell_{tr}(\bz)}  \nn.
\end{flalign}

Choosing $\epsilon$ small enough, it is easy to verify that Propositions \ref{prop:strongconvex} and \ref{prop:localsmooth} are still valid on $\nabla^{clean}\ell_{tr}(\bz)$. Thus, one has
\begin{flalign*}
&\langle \nabla^{clean} \ell_{tr}(\bz), \bh\rangle \ge \left\{1.99-2(\zeta_1+\zeta_2)-\sqrt{8/\pi}\alpha_h^{-1}-\epsilon\right\}\|\bh\|^2,\\
&\left\|\nabla^{clean}\ell_{tr}(\bz)\right\|\le (1+\delta)\cdot 2\sqrt{1.02+2/\alpha_h} \|\bh\|.
\end{flalign*}

We next bound the contribution of $\nabla^{extra}\ell_{tr}(\bz)$. Introduce $\bq=[q_1,\ldots,q_m]^T$, where
\begin{flalign*}
q_i:=\left( \frac{(\ba_i^T\bz)^2-y_i}{\ba_i^T\bz}\bone_{\cE_1^i\cap \cE_2^i}-\frac{(\ba_i^T\bz)^2-(\ba_i^T\bx)^2}{\ba_i^T\bz} \bone_{\cE_1^i\cap \tilde{\cE}_2^i} \right)\bone_{\{i \in S\}}.
\end{flalign*}
It can be seen that $|q_i|\le 2\alpha_h\|\bh\|$. Thus $\|\bq\|\le \sqrt{sm}\cdot 2\alpha_h\|\bh\|$, and
\begin{flalign*}
\left\|  \nabla^{extra} \ell_{tr}(\bz)\right\| &=\frac{1}{m}\left\|\bA^T\bq\right\|\le 2(1+\delta)\sqrt{s}\alpha_h\|\bh\|,\\
\left|\left\langle \nabla^{extra} \ell_{tr}(\bz), \bh\right\rangle\right| &\le \|\bh\| \cdot \left\| \frac{1}{m} \nabla^{extra} \ell_{tr}(\bz)\right\|\le 2(1+\delta)\sqrt{s}\alpha_h\|\bh\|^2,
\end{flalign*}
where $\bA = [\ba_1,\ldots, \ba_m]^T$. Then, we have
\begin{flalign*}
-\left\langle \nabla \ell_{tr}(\bz), \bh \right\rangle & \ge \left\langle \nabla^{clean}\ell_{tr}(\bz),\bh \right\rangle- \left|\left\langle \nabla^{extra} \ell_{tr}(\bz), \bh\right\rangle\right|\\
&\ge \left(1.99-2(\zeta_1+\zeta_2)-\sqrt{8/\pi}\alpha_h^{-1}-\epsilon-2(1+\delta)\sqrt{s}\alpha_h\right)\|\bh\|^2,
\end{flalign*}
and
\begin{flalign}
\left\|\nabla \ell_{tr}(\bz)\right\|&\le \left\|\nabla^{clean} \ell_{tr}(\bz)\right\|+\left\| \nabla^{extra} \ell_{tr}(\bz)\right\| \nonumber \\
&\le 2(1+\delta)\left(\sqrt{1.02+2/\alpha_h}+\sqrt{s}\alpha_h\right) \|\bh\|.
\end{flalign}
Therefore, the $\mathsf{RC}$ is guaranteed if $\mu,\lambda,\epsilon$ are chosen properly and $s$ is sufficiently small.

\subsection{Proof of Theorem~\ref{thm:twonoises}}\label{app:thm:twonoises}

We consider the model \eqref{eq:twonoisesmodel}, and split our analysis of the gradient loop into two regimes.

$\bullet$ \textbf{Regime 1}: $c_4\|\bz\|\ge \|\bh\|\ge c_3\frac{\|\bw\|_\infty}{\|\bz\|}$. In this regime, error contraction by each gradient step is given by
\begin{flalign*}
\dist\left(\bz-\mu\nabla \ell_{tr}(\bz), \bx\right)\le (1-\rho) \dist(\bz,\bx).
\end{flalign*}
It suffices to justify that $\nabla \ell_{tr}(\bz)$ satisfies the $\mathsf{RC}$.  
Denote $\tilde{y}_i:=(\ba_i^T\bx)^2+w_i$. Then by Lemma \ref{lem:empiricalmedian}, we have
\begin{flalign*}
\theta_{\frac{1}{2}-s}\left\{\left|\tilde{y}_i-(\ba_i^T\bz)^2\right|\right\}\le\median \left\{\left|y_i-(\ba_i^T\bz)^2\right|\right\}\le \theta_{\frac{1}{2}+s}\left\{\left|\tilde{y}_i-(\ba_i^T\bz)^2\right|\right\}.
\end{flalign*}
Moreover, by Lemma \ref{lem:orderstats} we have
\begin{flalign*}
&\left|\theta_{\frac{1}{2}+s}\left\{\left|\tilde{y}_i-(\ba_i^T\bz)^2\right|\right\}- \theta_{\frac{1}{2}+s}\left\{\left|(\ba_i^T\bx)^2-(\ba_i^T\bz)^2\right|\right\}\right|\le\|\bw\|_\infty,\\
&\left|\theta_{\frac{1}{2}-s}\left\{\left|\tilde{y}_i-(\ba_i^T\bz)^2\right|\right\}- \theta_{\frac{1}{2}-s}\left\{\left|(\ba_i^T\bx)^2-(\ba_i^T\bz)^2\right|\right\}\right|\le\|\bw\|_\infty.
\end{flalign*}
Assume that $s=0.01$. By Proposition \ref{prop:mediansandwich}, if $c_3$ is sufficiently large (i.e., $c_3>100)$, we still shave
\begin{flalign}
0.6\|\bx-\bz\|\|\bz\|\le\median \left\{\left|y_i-(\ba_i^T\bz)^2\right|\right\}\le \|\bx-\bz\|\|\bz\| .\label{eq:medianboundtwonoise}
\end{flalign}

Furthermore, recall $\tilde{\cE}_2^i:=\left\{\left|(\ba_i^T\bx)^2-(\ba_i^T\bz)^2\right|\le \alpha_h \median \left\{\left|(\ba_i^T\bz)^2-y_i\right|\right\}\frac{|\ba_i^T\bz|}{\|\bz\|}\right\}$. Then,
\begin{flalign}
\nabla \ell_{tr}(\bz) &= \frac{1}{m}\sum_{i=1}^m \frac{(\ba_i^T\bz)^2-y_i}{\ba_i^T\bz} \ba_i \bone_{\cE_1^i\cap \cE_2^i}\nn\\
& = \underbrace{\frac{1}{m}\left(\sum_{i\notin S} \frac{(\ba_i^T\bz)^2-(\ba_i^T\bx)^2}{\ba_i^T\bz} \ba_i \bone_{\cE_1^i\cap \cE_2^i}+\sum_{i\in S} \frac{(\ba_i^T\bz)^2-(\ba_i^T\bx)^2}{\ba_i^T\bz} \ba_i \bone_{\cE_1^i\cap \tilde{\cE}_2^i}\right)}_{\nabla^{clean}\ell_{tr}(\bz)} \nn\\
&\quad-\underbrace{\frac{1}{m}\sum_{i\notin S}\frac{\bw_i}{\ba_i^T\bz} \ba_i \bone_{\cE_1^i\cap \cE_2^i}}_{\nabla^{noise}\ell_{tr}(\bz)}+\underbrace{\frac{1}{m}\sum_{i\in S} \left( \frac{(\ba_i^T\bz)^2-y_i}{\ba_i^T\bz}\bone_{\cE_1^i\cap \cE_2^i}-\frac{(\ba_i^T\bz)^2-(\ba_i^T\bx)^2}{\ba_i^T\bz} \bone_{\cE_1^i\cap \tilde{\cE}_2^i} \right)\ba_i }_{\nabla^{extra}\ell_{tr}(\bz)}.\nn
\end{flalign}

For $i\notin S$, the inclusion property (i.e. $\cE_3^i\subseteq \cE_2^i \subseteq \cE_4^i$) holds because
\begin{flalign*}
\left|y_i-(\ba_i^T\bz)^2\right|\in \left|(\ba_i^T\bx)^2-(\ba_i^T\bz)^2\right|\pm |w_i|
\end{flalign*}
and $|w_i|\le \frac{1}{c_3}\|\bh\|\|\bz\|$ for some sufficient large $c_3$. For $i\in S$, the inclusion $\cE_3^i\subseteq \tilde{\cE}_2^i \subseteq \cE_4^i$ holds because of \eqref{eq:medianboundtwonoise}. All the proof arguments for Propositions \ref{prop:strongconvex} and \ref{prop:localsmooth} are also valid for $\nabla^{clean} \ell_{tr}(\bz)$, and thus we have
\begin{flalign*}
&\langle \nabla^{clean} \ell_{tr}(\bz), \bh\rangle \ge \left\{1.99-2(\zeta_1+\zeta_2)-\sqrt{8/\pi}\alpha_h^{-1}-\epsilon\right\}\|\bh\|^2,\\
&\left\|\nabla^{clean}\ell_{tr}(\bz)\right\|\le (1+\delta)\cdot 2\sqrt{1.02+2/\alpha_h} \|\bh\|.
\end{flalign*}

Next, we turn to control the contribution of the noise. Let $\tilde{w}_i=\frac{w_i}{\ba_i^T\bz}\bone_{\cE_1^i\cap \cE_2^i}$, and then we have
\begin{flalign*}
\|\nabla^{noise}\ell_{tr}(\bz)\|=\left\|\frac{1}{m}\bA^T\tilde{\bw}\right\|\le \left\|\frac{1}{\sqrt{m}}\bA^T\right\|\left\|\frac{\tilde{\bw}}{\sqrt{m}}\right\|\le (1+\delta)\|\tilde{\bw}\|_\infty\le (1+\delta)\frac{\|\bw\|_\infty}{\alpha_l\|\bz\|},
\end{flalign*}
when $m/n$ is sufficiently large. Given the regime condition $\|\bh\|\ge c_3\frac{\|\bw\|_\infty}{\|\bz\|}$, we further have
\begin{flalign*}
&\|\nabla^{noise}\ell_{tr}(\bz)\|\le  \frac{(1+\delta)}{c_3\alpha_l}\|\bh\|,\\
& \left|\left\langle \nabla^{noise} \ell_{tr}(\bz), \bh\right\rangle\right| \le  \left\| \nabla^{noise} \ell_{tr}(\bz)\right\|\cdot\|\bh\| \le \frac{(1+\delta)}{c_3\alpha_l}\|\bh\|^2.
\end{flalign*}
We next bound the contribution of $\nabla^{extra}\ell_{tr}(\bz)$. Introduce $\bq=[q_1,\ldots,q_m]^T$, where
\begin{flalign*}
q_i:=\left( \frac{(\ba_i^T\bz)^2-y_i}{\ba_i^T\bz}\bone_{\cE_1^i\cap \cE_2^i}-\frac{(\ba_i^T\bz)^2-(\ba_i^T\bx)^2}{\ba_i^T\bz} \bone_{\cE_1^i\cap \tilde{\cE}_2^i} \right)\bone_{\{i \in S\}}.
\end{flalign*}
Then $|q_i|\le 2\alpha_h\|\bh\|$, and $\|\bq\|\le \sqrt{sm}\cdot 2\alpha_h\|\bh\|$. We thus have
\begin{flalign*}
&\left\|\nabla^{extra} \ell_{tr}(\bz)\right\|=\frac{1}{m}\left\|\bA^T\bq\right\|\le 2(1+\delta)\sqrt{s}\alpha_h\|\bh\|,\\
&\left|\left\langle  \nabla^{extra} \ell_{tr}(\bz), \bh\right\rangle\right| \le \|\bh\| \cdot \left\|\nabla^{extra} \ell_{tr}(\bz)\right\|\le 2(1+\delta)\sqrt{s}\alpha_h\|\bh\|^2.
\end{flalign*}

Putting these together, one has
\begin{flalign}
\left\langle \nabla \ell_{tr}(\bz), \bh \right\rangle & \ge \left\langle \nabla^{clean}\ell_{tr}(\bz),\bh \right\rangle- \left|\left\langle  \nabla^{noise} \ell_{tr}(\bz), \bh\right\rangle\right|- \left|\left\langle  \nabla^{extra} \ell_{tr}(\bz), \bh\right\rangle\right|\nn\\
&\ge \left(1.99-2(\zeta_1+\zeta_2)-\sqrt{8/\pi}\alpha_h^{-1}-\epsilon-(1+\delta)(1/(c_3\alpha_z^l)+2\sqrt{s}\alpha_h)\right)\|\bh\|^2,
\end{flalign}
and
\begin{flalign}
\left\|\nabla \ell_{tr}(\bz)\right\|&\le \left\|\nabla^{clean} \ell_{tr}(\bz)\right\|+\left\|\nabla^{noise} \ell_{tr}(\bz)\right\|+\left\|\nabla^{extra} \ell_{tr}(\bz)\right\| \nn \\
&\le (1+\delta)\left(2\sqrt{1.02+2/\alpha_h}+1/(c_3\alpha_z^l)+2\sqrt{s}\alpha_h\right) \|\bh\|.
\end{flalign}

The $\mathsf{RC}$ is guaranteed if $\mu,\lambda,\epsilon$ are chosen properly, $c_3$ is sufficiently large and $s$ is sufficiently small.

$\bullet$ \textbf{Regime 2}: Once the iterate enters this regime with $\|\bh\|\le \frac{c_3\|\bw\|_\infty}{\|\bz\|}$, each gradient iterate may not reduce the estimation error. However, in this regime each move size $\mu\nabla \ell_{tr}(\bz)$ is at most $\mathcal{O}(\|\bw\|_\infty/\|\bz\|)$. Then the estimation error cannot increase by more than $\frac{\|\bw\|_\infty}{\|\bz\|}$ with a constant factor. Thus one has
\begin{flalign*}
\dist\left(\bz-\mu\nabla \ell_{tr}(\bz),\bx\right)\le c_5\frac{\|\bw\|_\infty}{\|\bx\|}
\end{flalign*}
for some constant $c_5$. As long as $\|\bw\|_\infty/\|\bx\|^2$ is sufficiently small, it is guaranteed that $c_5\frac{\|\bw\|_\infty}{\|\bx\|}\le c_4\|\bx\|$. If the iterate jumps out of \emph{Regime 2}, it falls into \emph{Regime 1}.

\section{Proofs for Median-RWF}\label{sec:proof:median_rwf}

We first show that $\nabla \cR_{tr}(\bz)$ in \eqref{eq:RWFgradient} satisfies the $\mathsf{RC}$ for the noise-free case in Section~\ref{sec:proof_prop:noisefree_RWF}, and then extend it to the model with only sparse outliers in Section~\ref{sec:proofRWFoutliers}, thus together with Proposition \ref{prop:initialization} establishing the global convergence of median-RWF in both cases. Section~\ref{app:thm:twonoisesRWF} proves Theorem~\ref{thm:twonoises} in the presence of both sparse outliers and dense bounded noise.

\subsection{Proof of Proposition \ref{prop:noisefree}} \label{sec:proof_prop:noisefree_RWF}

The central step to establish the $\mathsf{RC}$ is to show that the sample median used in the truncation rule of median-RWF concentrates on the order of $\|\bz-\bx\|$ as stated in the following proposition.
\begin{proposition}\label{prop:mediansandwichRWF}
If $m>c_0 n\log n$, then with probability at least $1-c_1\exp(-c_2  m)$,
\begin{flalign}
0.5\|\bz-\bx\|  \le \theta_{0.49},\theta_{1/2}, \theta_{0.51}\left(\left\{\left||\ba_i^T\bz|-|\ba_i^T\bx|\right|\right\}_{i=1}^m\right) \le 0.8\|\bz-\bx\|, \label{eq:median_sandwichRWF}
\end{flalign}
holds for all $\bz, \bx$ satisfying $\|\bz-\bx\|<1/11\|\bz\|$.
\end{proposition}

\begin{proof}
	See Appendix \ref{app:prop:mediansandwichRWF}.
\end{proof}

Next we give a bound on the left hand side of $\mathsf{RC}$.
\begin{proposition}[Adapted version of Proposition 2 of \cite{chen2015solving}]\label{prop:strongconvexRWF}
Consider the noise-free measurements $y_i=|\ba_i^T\bx|$ and any fixed constant $\epsilon>0$. If $m>c_0 n\log n$, then with probability at least $1-c_1\exp(-c_2m)$,
\begin{flalign}
\left\langle \nabla \cR_{tr}(\bz), \bz-\bx \right\rangle\ge \left\{0.88-\zeta'_1-\zeta'_2-\epsilon\right\}\|\bz-\bx\|^2
\end{flalign}
holds uniformly over all $\bx, \bz\in \bbR^n$ satisfying $\frac{\|\bz-\bx\|}{\|\bz\|}\le \frac{1}{20}$, 
where $c_0, c_1, c_2>0$ are some universal constants, and $\zeta'_1, \zeta'_2$ are given by
\begin{flalign*}
&\zeta'_1:=1-\min\left\{\mE\left[\xi^2\bone_{\{\xi\ge 0.5\sqrt{1.01}\alpha'_h\frac{\|\bz-\bx\|}{\|\bx\|}\}}\right], \mE\left[\bone_{\{\xi\ge 0.5\sqrt{1.01}\alpha'_h\frac{\|\bz-\bx\|}{\|\bx\|}\}}\right]\right\}\\
&\zeta'_2:=\mE\left[\xi^2\bone_{\{|\xi|>0.5\sqrt{0.99}\alpha'_h\}}\right]
\end{flalign*} for some $\xi\sim \cN(0,1)$ and $\alpha'_h=5$.
\end{proposition}
\begin{proof}
	See Appendix \ref{app:prop:strongconvexRWF}.
\end{proof}
Proposition \ref{prop:strongconvexRWF} indicates that $\left\langle \nabla \cR_{tr}(\bz), \bz-\bx \right\rangle$ is lower bounded by $\|\bz-\bx\|^2$ with some positive constant coefficient. In order to prove the $\mathsf{RC}$, it suffices to show that $\|\nabla \cR_{tr}(\bz)\|$ is upper bounded by the order of $\|\bz-\bx\|$ when $\bz$ is within the neighborhood of true signal $\bx$.
\begin{proposition}[Lemma 7 of \cite{chen2015solving}]\label{prop:localsmoothRWF}
If $m>c_0 n$, then there exist some constants $c_1, c_2>0$ such that with probability at least $1-c_1\exp(-c_2 m)$,
\begin{flalign}
\left\|\nabla \cR_{tr}(\bz)\right\| \le (1.8+\delta)\|\bz-\bx\|
\end{flalign}
holds uniformly over all $\bx, \bz\in \bbR^n$ satisfying $\|\bx-\bz\|\le \frac{1}{11}\|\bx\|$
where $\delta$ can be arbitrarily small as long as $c_0$ sufficiently large.
\end{proposition}
\begin{proof}
	See Appendix \ref{app:prop:localsmoothRWF}.
\end{proof}

With the above two propositions, $\mathsf{RC}$ is guaranteed by setting $\mu<\mu_0:=\frac{2(0.88-\zeta'_1-\zeta'_2-\epsilon)}{(1.8+\delta)^2}$ and $\lambda+\mu\cdot (1.8+
\delta)^2<2(0.88-\zeta'_1-\zeta'_2-\epsilon)$.

\subsection{Proof of Theorem \ref{thm:outliers}}\label{sec:proofRWFoutliers}
We consider the model \eqref{eq:twonoisesmodel} with only outliers, i.e., $y_i=\left|\langle \ba_i,\bx \rangle\right|^2+\eta_i$ for $i=1,\cdots,m$. It suffices to show that  $\nabla \cR_{tr}(z)$ satisfies the $\mathsf{RC}$. The critical step is to lower and upper bound the sample median of the corrupted measurements. Lemma \ref{lem:empiricalmedian} yields
\begin{flalign}
\theta_{\frac{1}{2}-s}(\{||\ba_i^T\bx|-|\ba_i^T\bz||\})\le \theta_{\frac{1}{2}}(\{|\sqrt{y_i}-|\ba_i^T\bz||\}) \le \theta_{\frac{1}{2}+s}(\{||\ba_i^T\bx|-|\ba_i^T\bz||\}.
\end{flalign}
For the simplicity of notation, we let $\bh:=\bz-\bx$. Then for the instance of $s=0.01$, Proposition \ref{prop:mediansandwichRWF} yields that if $m>c_0 n\log n$, then
\begin{flalign}
0.5\|\bh\|\le \theta_{\frac{1}{2}}(\{|\sqrt{y_i}-|\ba_i^T\bz||\}) \le 0.8\|\bh\| \label{eq:RWFmedianboundoutliers}
\end{flalign}
holds with probability at least $1-2\exp(-\Omega(m))$.

To differentiate from $\cT^i$, we define $\tilde{\cT}^i:=\left\{\left||\ba_i^T\bx|-|\ba_i^T\bz|\right|\le \alpha'_h \median \left\{\left|\sqrt{y_i} - |\ba_i^T\bz|\right|\right\}\right\}$. We then have
\begin{flalign}
\nabla \cR_{tr}(\bz) &= \frac{1}{m}\sum_{i=1}^m \left(|\ba_i^T\bz|-\sqrt{y_i}\right) \ba_i \bone_{\cT^i} \nn\\
&= \underbrace{\frac{1}{m}\sum_{i=1}^m \left(|\ba_i^T\bz|-\sqrt{y_i}\right) \ba_i \bone_{\tilde{\cT}^i} }_{\nabla^{clean}\cR_{tr}(\bz)} + \underbrace{\frac{1}{m}\sum_{i\in S} \left( \left(|\ba_i^T\bz|-\sqrt{y_i}\right)  \bone_{\cT^i}- \left(|\ba_i^T\bz|-|\ba_i^T\bx|\right) \bone_{\tilde{\cT}^i} \right)\ba_i}_{\nabla^{extra}\cR_{tr}(\bz)}  \nn.
\end{flalign}

Under the condition \eqref{eq:RWFmedianboundoutliers}, the inclusion property (i.e., $\cT_1^i\subseteq \tilde{\cT}^i \subseteq \cT_2^i$) holds, and all the proof arguments for Propositions \ref{prop:strongconvexRWF} and \ref{prop:localsmoothRWF}	are also valid to $\nabla^{clean}\cR_{tr}(\bz)$. Thus, one has
\begin{flalign*}
&\left\langle \nabla^{clean} \cR_{tr}(\bz), \bh \right\rangle \ge \left(0.88-\zeta'_1-\zeta'_2-\epsilon\right)\|\bh\|^2\\
&\left\|\nabla^{clean}\cR_{tr}(\bz)\right\|\le (1.8+\delta)\|\bh\|.
\end{flalign*}

We next bound the contribution of $\nabla^{extra}\cR_{tr}(\bz)$. Introduce $\bq=[q_1,\ldots,q_m]^T$, where
\begin{flalign*}
q_i:=\left( \left(|\ba_i^T\bz|-\sqrt{y_i}\right)  \bone_{\cT^i}- \left(|\ba_i^T\bz|-|\ba_i^T\bx|\right) \bone_{\tilde{\cT}^i}  \right)\bone_{\{i \in S\}},
\end{flalign*}
and then $|q_i|\le 1.6 \alpha'_h\|\bh\|$. Thus, $\|\bq\|\le \sqrt{sm}\cdot 1.6 \alpha'_h\|\bh\|$, and
\begin{flalign*}
\left\| \nabla^{extra} \cR_{tr}(\bz)\right\| &=\frac{1}{m}\left\|\bA^T\bq\right\|\le 1.6 (1+\delta)\sqrt{s}\alpha'_h\|\bh\|,\\
\left|\left\langle  \nabla^{extra} \cR_{tr}(\bz), \bh\right\rangle\right| &\le \|\bh\| \cdot \left\|  \nabla^{extra} \cR_{tr}(\bz)\right\|\le 1.6(1+\delta)\sqrt{s}\alpha'_h\|\bh\|^2,
\end{flalign*}
where $\bA = [\ba_1,\ldots, \ba_m]^T$. Then, we have
\begin{flalign*}
\left\langle \nabla \cR_{tr}(\bz), \bh \right\rangle & \ge \left\langle \nabla^{clean}\cR_{tr}(\bz),\bh \right\rangle- \left|\left\langle  \nabla^{extra} \cR_{tr}(\bz), \bh\right\rangle\right|\\
&\ge \left(0.88-\zeta'_1-\zeta'_2-\epsilon-1.6(1+\delta)\sqrt{s}\alpha'_h\right)\|\bh\|^2,
\end{flalign*}
and
\begin{flalign*}
\left\|\nabla \cR_{tr}(\bz)\right\|&\le \left\|\nabla^{clean} \cR_{tr}(\bz)\right\|+\left\|  \nabla^{extra} \cR_{tr}(\bz)\right\|\\
&\le \left(1.8+\delta+1.6(1+\delta)\sqrt{s}\alpha'_h\right) \|\bh\|.
\end{flalign*}
Therefore the $\mathsf{RC}$ is guaranteed if $\mu,\lambda$ are chosen properly, $\delta$ is chosen sufficiently small and $s$ is sufficiently small.

\subsection{Proof of Theorem~\ref{thm:twonoises}}\label{app:thm:twonoisesRWF}

We consider the model \eqref{eq:twonoisesmodel} with outliers and bounded noise. We split our analysis of the gradient loop into two regimes.

$\bullet$ \textbf{Regime 1}: $c_4\|\bz\|\ge \|\bh\|\ge c_3\sqrt{\|\bw\|_\infty}$. In this regime, error contraction by each gradient step is given by
\begin{flalign}
	\dist\left(\bz-\mu\nabla \cR_{tr}(\bz), \bx\right)\le (1-\rho) \dist(\bz,\bx).
\end{flalign}
It suffices to justify that $\nabla \cR_{tr}(\bz)$ satisfies the $\mathsf{RC}$.  
Denote $\tilde{y}_i:=(\ba_i^T\bx)^2+w_i$. Then by Lemma \ref{lem:empiricalmedian}, we have
\begin{flalign*}
\theta_{\frac{1}{2}-s}\left\{\left|\sqrt{\tilde{y}_i}-|\ba_i^T\bz|\right|\right\}\le\median \left\{\left|\sqrt{y_i}-|\ba_i^T\bz|\right|\right\}\le \theta_{\frac{1}{2}+s}\left\{\left|\sqrt{\tilde{y}_i}-|\ba_i^T\bz|\right|\right\}.
\end{flalign*}
Moreover, by Lemma \ref{lem:orderstats} we have
\begin{flalign*}
&\left|\theta_{\frac{1}{2}+s}\left\{\left|\sqrt{\tilde{y}_i}-|\ba_i^T\bz|\right|\right\}- \theta_{\frac{1}{2}+s}\left\{\left||\ba_i^T\bx|-|\ba_i^T\bz|\right|\right\}\right|\le \sqrt{\|\bw\|_\infty},\\
&\left|\theta_{\frac{1}{2}-s}\left\{\left|\sqrt{\tilde{y}_i}-|\ba_i^T\bz|\right|\right\}- \theta_{\frac{1}{2}-s}\left\{\left||\ba_i^T\bx|-|\ba_i^T\bz|\right|\right\}\right|\le\sqrt{\|\bw\|_\infty}.
\end{flalign*}
Assume that $s=0.01$. By Proposition \ref{prop:mediansandwichRWF}, if $c_3$ is sufficiently large (i.e., $c_3>100)$, we still have
\begin{flalign}
0.5\|\bh\|\le\median \left\{\left|\sqrt{y_i}-|\ba_i^T\bz|\right|\right\}\le 0.8\|\bh\|.\label{eq:RWFmedianboundtwonoise}
\end{flalign}

Furthermore, recall $\tilde{\cT}^i:=\left\{\left||\ba_i^T\bx|-|\ba_i^T\bz|\right|\le \alpha'_h \median \left\{\left||\ba_i^T\bz|-\sqrt{y_i}\right|\right\}\right\}$. Then,
\begin{flalign}
\nabla \cR_{tr}(\bz) &= \frac{1}{m}\sum_{i=1}^m \left(|\ba_i^T\bz|-\sqrt{y_i}\right) \ba_i \bone_{\cT^i}\nn\\
& = \underbrace{\frac{1}{m}\left(\sum_{i\notin S} \left(|\ba_i^T\bz|-|\ba_i^T\bx|\right) \ba_i \bone_{\cT^i}+\sum_{i\in S} \left(|\ba_i^T\bz|-|\ba_i^T\bx|\right) \ba_i \bone_{ \tilde{\cT}^i}\right)}_{\nabla^{clean}\cR_{tr}(\bz)} \nn\\
&\quad-\underbrace{\frac{1}{m}\sum_{i\notin S}(\sqrt{y_i}-|\ba_i^T\bx|) \ba_i \bone_{\cT^i}}_{\nabla^{noise}\cR_{tr}(\bz)}+\underbrace{\frac{1}{m}\sum_{i\in S} \left( \left(|\ba_i^T\bz|-\sqrt{y_i}\right)\bone_{\cT^i}-\left(|\ba_i^T\bz|-|\ba_i^T\bx|\right) \bone_{ \tilde{\cT}^i} \right)\ba_i }_{\nabla^{extra}\cR_{tr}(\bz)}.\nn
\end{flalign}

 For $i\notin S$, the inclusion property (i.e. $\cT_1^i\subseteq \cT^i \subseteq \cT_2^i$) holds because
 \begin{flalign*}
	\left|\sqrt{y_i}-|\ba_i^T\bz|\right|\in \left||\ba_i^T\bx|-|\ba_i^T\bz|\right|\pm \sqrt{|w_i|}
\end{flalign*}
and $\sqrt{|w_i|}\le \frac{1}{c_3}\|\bh\|$ for some sufficient large $c_3$. For $i\in S$, the inclusion $\cT_1^i\subseteq \tilde{\cT}^i \subseteq \cT_2^i$ holds because of \eqref{eq:RWFmedianboundtwonoise}. All the proof arguments for Propositions \ref{prop:strongconvexRWF} and \ref{prop:localsmoothRWF} are also valid for $\nabla^{clean} \cR_{tr}(\bz)$, and thus we have
\begin{flalign*}
	&\left\langle \nabla^{clean} \cR_{tr}(\bz), \bh \right\rangle \ge \left(0.88-\zeta'_1-\zeta'_2-\epsilon\right)\|\bh\|^2,\\
	&\left\|\nabla^{clean}\cR_{tr}(\bz)\right\|\le  (1.8+\delta)\|\bh\|.
\end{flalign*}

Next, we turn to control the contribution of the noise. Let $\tilde{w}_i=(\sqrt{y_i}-|\ba_i^T\bx|)\bone_{\cT^i}$. Then $|\tilde{w}_i|<\sqrt{|w_i|}$ and we have
\begin{flalign*}
\|\nabla^{noise}\cR_{tr}(\bz)\|=\left\|\frac{1}{m}\bA^T\tilde{\bw}\right\|\le \left\|\frac{1}{\sqrt{m}}\bA^T\right\|\left\|\frac{\tilde{\bw}}{\sqrt{m}}\right\|\le (1+\delta)\|\tilde{\bw}\|_\infty\le (1+\delta)\sqrt{\|\bw\|_\infty},
\end{flalign*}
when $m/n$ is sufficiently large. Given the regime condition $\|\bh\|\ge c_3\sqrt{\|\bw\|_\infty}$, we further have
\begin{flalign*}
&\|\nabla^{noise}\cR_{tr}(\bz)\|\le  \frac{(1+\delta)}{c_3}\|\bh\|,\\
& \left|\left\langle \nabla^{noise} \cR_{tr}(\bz), \bh\right\rangle\right| \le   \left\| \nabla^{noise} \cR_{tr}(\bz)\right\|\cdot\|\bh\| \le \frac{(1+\delta)}{c_3}\|\bh\|^2.
\end{flalign*}
We next bound the contribution of $\nabla^{extra}\cR_{tr}(\bz)$. Introduce $\bq=[q_1,\ldots,q_m]^T$, where
\begin{flalign*}
	q_i:=\left( (|\ba_i^T\bz|-\sqrt{y_i})\bone_{\cT^i}-(|\ba_i^T\bz|-|\ba_i^T\bx|) \bone_{\tilde{\cT}^i} \right)\bone_{\{i \in S\}}.
\end{flalign*}
Then $|q_i|\le 1.6\alpha'_h\|\bh\|$, and $\|\bq\|\le \sqrt{sm}\cdot 1.6\alpha'_h\|\bh\|$. We thus have
\begin{flalign*}
	&\left\|  \nabla^{extra} \cR_{tr}(\bz)\right\|=\frac{1}{m}\left\|\bA^T\bq\right\|\le 1.6(1+\delta)\sqrt{s}\alpha'_h\|\bh\|,\\
	&\left|\left\langle \nabla^{extra} \cR_{tr}(\bz), \bh\right\rangle\right| \le \|\bh\| \cdot \left\| \nabla^{extra} \cR_{tr}(\bz)\right\|\le 1.6(1+\delta)\sqrt{s}\alpha'_h\|\bh\|^2.
\end{flalign*}

Putting these together, one has
\begin{flalign*}
	\left\langle \nabla \cR_{tr}(\bz), \bh \right\rangle & \ge \left\langle \nabla^{clean}\cR_{tr}(\bz),\bh \right\rangle- \left|\left\langle  \nabla^{noise} \cR_{tr}(\bz), \bh\right\rangle\right|- \left|\left\langle  \nabla^{extra} \cR_{tr}(\bz), \bh\right\rangle\right|\nn\\
	&\ge \left(0.88-\zeta'_1-\zeta'_2-\epsilon-(1+\delta)(1/c_3-1.6\sqrt{s}\alpha'_h)\right)\|\bh\|^2,
\end{flalign*}
and
\begin{flalign}
\left\|\nabla \cR_{tr}(\bz)\right\|&\le \left\|\nabla^{clean} \cR_{tr}(\bz)\right\|+\left\|\nabla^{noise} \cR_{tr}(\bz)\right\|+\left\|\nabla^{extra} \cR_{tr}(\bz)\right\| \nn \\
&\le \left(1.8+\delta+(1+\delta)\cdot(1/c_3+1.6\sqrt{s}\alpha'_h)\right) \|\bh\|.
\end{flalign}

Thus, the $\mathsf{RC}$ is guaranteed if $\mu,\lambda,\epsilon$ are chosen properly, $c_0, c_3$ are sufficiently large and $s$ is sufficiently small.

$\bullet$ \textbf{Regime 2}: Once the iterate enters this regime with $\|\bh\|\le c_3\sqrt{\|\bw\|_\infty}$, each gradient iterate may not reduce the estimation error. However, in this regime each move size $\mu\nabla \cR_{tr}(\bz)$ is at most $\mathcal{O}(\sqrt{\|\bw\|_\infty})$. Then the estimation error cannot increase by more than $\sqrt{\|\bw\|_\infty}$ with a constant factor. Thus one has
\begin{flalign}
	\dist\left(\bz-\mu\nabla \cR_{tr}(\bz),\bx\right)\le c_5\sqrt{\|\bw\|_\infty}
\end{flalign}
for some constant $c_5$. As long as $\sqrt{\|\bw\|_\infty}$ is sufficiently small, it is guaranteed that $c_5\sqrt{\|\bw\|_\infty}\le c_4\|\bx\|$. If the iterate jumps out of \emph{Regime 2}, it falls into \emph{Regime 1}.

\section{Conclusions}\label{sec:conclusion}
In this paper, we propose provably effective approaches, median-TWF and median-RWF, for phase retrieval when the measurements are corrupted by sparse outliers that can take arbitrary values. Our strategy is to apply gradient descent with respect to carefully chosen loss functions, where both the initialization and the search directions are pruned guided by the sample median. We show that both algorithms allow exact recovery even with a constant proportion of arbitrary outliers for robust phase retrieval using a near-optimal number of measurements up to a logarithmic factor. Our algorithm performs well for phase retrieval problem under sparse corruptions. We anticipate that the technique developed in this paper will be useful for designing provably robust algorithms for other inference problems under sparse corruptions.  

\vspace{0.5in}
%
%
%
%
%
\appendix
%
\noindent {\Large \textbf{Appendix}}

\section{Proof of Properties of Median}\label{supp:medianlemmas}

\subsection{Proof of Lemma \ref{lem:quantileconcentration}} \label{proof_lem:quantileconcentration}
For simplicity, denote $\theta_p:=\theta_p(F)$ and $\hat{\theta}_{p}:=\theta_p(\{X_i\}_{i=1}^m)$.
Since $F'$ is continuous and positive, for an $\epsilon$, there exists a constant $\delta_1$ such that $\bbP(X\le\theta_p-\epsilon)=p-\delta_1$, where $\delta_1\in (\epsilon l, \epsilon L)$. Then one has
\begin{flalign*}
\bbP\left(\hat{\theta}_{p}<\theta_p-\epsilon\right)&\stackrel{(a)}{=} \bbP\left(\sum_{i=1}^{m}\bone_{\{X_i\le \theta_p-\epsilon\}}\ge pm\right)=\bbP\left(\frac{1}{m} \sum_{i=1}^{m}\bone_{\{X_i\le \theta_p-\epsilon\}}\ge(p-\delta_1)+\delta_1\right)\\
&\stackrel{(b)}{\le} \exp(-2m\delta_1^2)\le \exp(-2m\epsilon^2l^2),
\end{flalign*}
where (a) is due to the definition of the quantile function in \eqref{eq:quantiledef} and (b) is due to the fact that $\bone_{\{X_i\le \theta_p-\epsilon\}}\sim \mbox{Bernoulli}(p-\delta_1)$ i.i.d., followed by the Hoeffding inequality.
Similarly, one can show for some $\delta_2\in (\epsilon l, \epsilon L)$,
\begin{flalign*}
\bbP\left(\hat{\theta}_{p}>\theta_p+\epsilon\right)\le \exp(-2m\delta_2^2)\le \exp(-2m\epsilon^2l^2).
\end{flalign*}
Combining these two inequalities, one has the conclusion.

\subsection{Proof of Lemma \ref{lem:orderstats}} \label{proof_lem:orderstats}

It suffices to show that
\begin{equation}
	|X_{(k)}-Y_{(k)}|\le\max_l|X_l-Y_l|, \quad \forall k=1,\cdots, n.
\end{equation}

%
%
%
%
%

Case 1: $k=n$, suppose $X_{(n)}= X_i$ and $Y_{(n)}= Y_j$, i.e., $X_i$ is the largest among $\{X_l\}_{l=1}^n$ and $Y_j$ is the largest among  $\{Y_l\}_{l=1}^n$. Then we have either $X_j\le X_i\le Y_j$ or $Y_i\le Y_j \le X_i$. Hence,
\begin{flalign*}
	|X_{(n)}-Y_{(n)}|=|X_i-Y_j|\le\max\{|X_i-Y_i|, |X_j-Y_j|\}.
\end{flalign*}

Case 2: $k=1$, suppose that $X_{(1)}= X_i$ and $Y_{(1)}= Y_j$. Similarly
\begin{flalign*}
	|X_{(1)}-Y_{(1)}|=|X_i-Y_j|\le\max\{|X_i-Y_i|, |X_j-Y_j|\}.
\end{flalign*}

Case 3: $1<k<n$,  suppose that  $X_{(k)}= X_i$, $Y_{(k)}= Y_j$, and without loss of generality assume that $X_i<Y_j$ (if $X_i=Y_j$, $	0= |X_{(k)}-Y_{(k)}|\le\max_l|X_l-Y_l|$ holds trivially). We show the conclusion by contradiction.

Assume $|X_{(k)}-Y_{(k)}|>\max_l |X_l-Y_l|$. Then one must have $Y_i<Y_j$ and $X_j>X_i$ and $i\neq j$.
 Moreover for any $p<k$ and $q>k$, the index of $X_{(p)}$ cannot be equal to the index of $Y_{(q)}$; otherwise the assumption is violated.

Thus, all $Y_{(q)}$ for $q>k$ must share the same index set with $X_{(p)}$ for $p>k$. However, $X_j$, which is larger than $X_{i}$ (thus if $X_j=X_{(k')}$, then $k'>k$), shares the same index with $Y_{j}$, where $Y_j=Y_{(k)}$. This yields contradiction.

\subsection{Proof of Lemma \ref{lem:empiricalmedian}}\label{proof_lem:empiricalmedian}
Assume that $sm$ is an integer. Since there are $sm$ corrupted samples in total, one can select at least $\left\lceil(p-s)m\right\rceil$ clean samples from the left $p$ portion of ordered contaminated samples $\{\theta_{1/m}(\{X_i\}), \theta_{2/m}(\{X_i\}),\cdots,\theta_{p}(\{X_i\})\}$. Thus one has the left inequality. Furthermore, one can also select out at least $\left\lceil(1-p-s)m\right\rceil$ clean samples from the right $1-p$ portion of ordered contaminated samples $\{\theta_{p}(\{X_i\}),\cdots,\theta_{1}(\{X_i\})\}$. One has the right inequality. 

\subsection{Proof of Lemma \ref{lem:product}} \label{proof_lem:product}

First we introduce some general facts for the distribution of the product of two correlated standard Gaussian random variables \cite{donahue1964products}. Let $u\sim\mathcal{N}(0,1)$, $v\sim\mathcal{N}(0,1)$, and their correlation coefficient be $\rho\in[-1,1]$. Then the density of $uv$ is given by
\begin{flalign*}
	\phi_{\rho}(x)=\frac{1}{\pi \sqrt{1-\rho^2}}\exp\left(\frac{\rho x}{1-\rho^2}\right) K_0\left(\frac{|x|}{1-\rho^2}\right), \quad x\neq 0,
\end{flalign*}
where $K_0(\cdot)$ is the modified Bessel function of the second kind.
Thus the density of $r=|uv|$ is
\begin{flalign}\label{eq:psi_rho}
	\psi_{\rho}(x)=\frac{1}{\pi \sqrt{1-\rho^2}}\left[\exp\left(\frac{\rho x}{1-\rho^2}\right) +\exp\left(-\frac{\rho x}{1-\rho^2}\right) \right]K_0\left(\frac{|x|}{1-\rho^2}\right),\quad  x> 0,
\end{flalign}
for $|\rho|<1$.
If $|\rho|=1$, $r$ becomes a $\chi_1^2$ random variable, with the density
\begin{flalign*}
	\psi_{|\rho|=1}(x)=\frac{1}{\sqrt{2\pi}}x^{-1/2}\exp(-x/2), \quad x> 0.
\end{flalign*}
It can be seen from \eqref{eq:psi_rho} that the density of $r$ only relates to the correlation coefficient $\rho\in[-1,1]$. 

Let $\theta_{1/2}(\psi_\rho)$ be the $1/2$ quantile (median) of the distribution $\psi_\rho(x)$, and $\psi_\rho(\theta_{1/2})$ be the value of the function $\psi_\rho$ at the point $\theta_{1/2}(\psi_\rho)$. Although it is difficult to derive the analytical expressions of $\theta_{1/2}(\psi_\rho)$ and $\psi_\rho(\theta_{1/2})$ due to the complicated form of $\psi_\rho$ in \eqref{eq:psi_rho}, due to the continuity of $\psi_\rho(x)$ and $\theta_{1/2}(\psi_\rho)$,  we can calculate them numerically, as illustrated in Figure~\ref{fig:quantiles}.
\begin{figure}[thb]
\begin{center}
\includegraphics[width=0.45\textwidth]{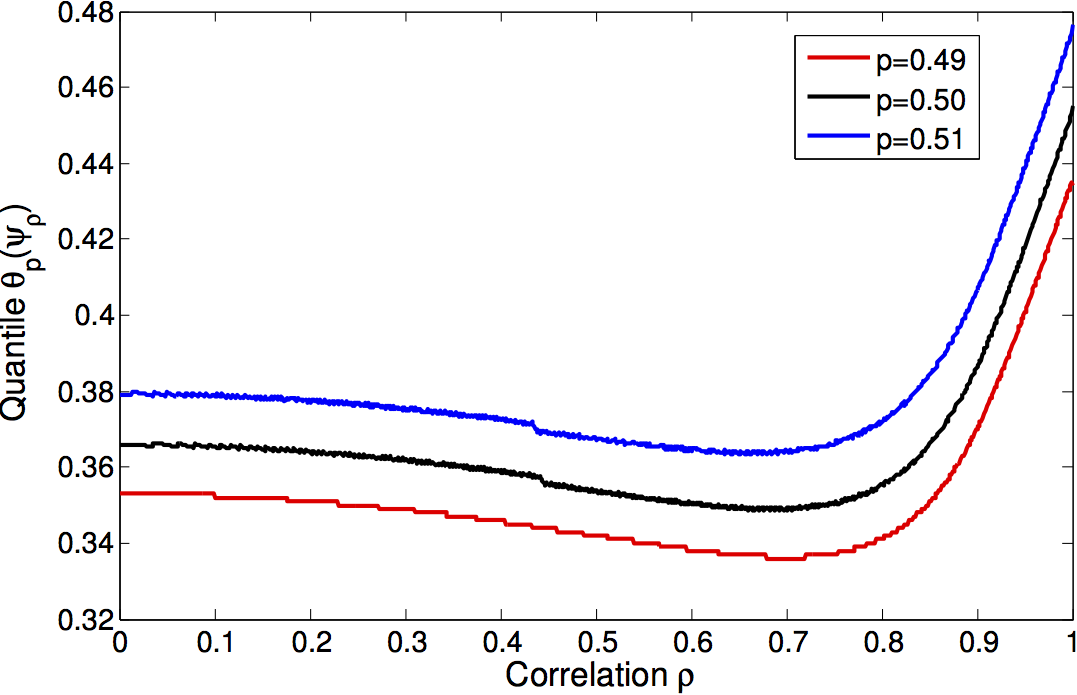}
\includegraphics[width=0.47\textwidth]{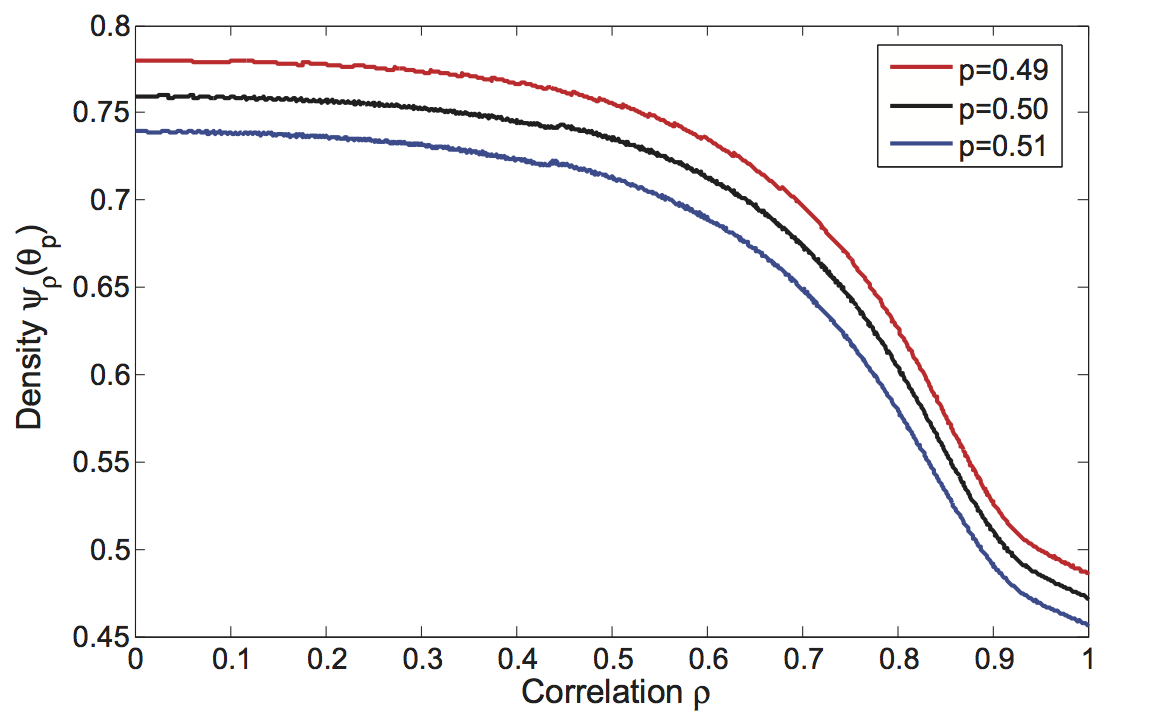}
\caption{Quantiles and density at quantiles of $\psi_\rho(x)$ across $\rho$.}
\label{fig:quantiles}
\end{center}
\end{figure}
From the numerical calculation, one can see that both $\psi_{\rho}(\theta_{1/2})$ and  $\theta_{1/2}(\psi_{\rho})$ are bounded from below and above for all $\rho\in[0,1]$ ($\psi_{\rho}(\cdot)$ is symmetric over $\rho$, hence it is sufficient to consider $\rho\in [0,1]$), satisfying
\begin{flalign}
	0.348<\theta_{1/2}(\psi_{\rho})<0.455, \quad 0.47<\psi_{\rho}(\theta_{1/2})<0.76.
\end{flalign}


\section{Proof of Proposition~\ref{prop:initialization}}\label{supp:initialization}

Denote $\tilde{y}_i:=|\ba_i^T\bx|^2+w_i$ for convenience. We first bound the concentration of $\median(\{y_i\})$, also denoted by $\theta_{\frac{1}{2}}(\{y_i\})$.
 Lemma \ref{lem:empiricalmedian} yields
\begin{flalign}
	\theta_{\frac{1}{2}-s}(\{\tilde{y}_i\})<\theta_{\frac{1}{2}} (\{y_i\}) <\theta_{\frac{1}{2}+s}(\{\tilde{y}_i\}).
\end{flalign}
Moreover, Lemma \ref{lem:orderstats} indicates that
\begin{flalign}
	&\theta_{\frac{1}{2}-s} (\{\tilde{y}_i\})\ge\theta_{\frac{1}{2}-s}(\{|\ba_i^T\bx|^2\})-\|\bw\|_\infty, \\
	&\theta_{\frac{1}{2}+s}(\{\tilde{y}_i\})\le\theta_{\frac{1}{2}+s}(\{|\ba_i^T\bx|^2\})+\|\bw\|_\infty.
\end{flalign}

Observe that $\ba_i^T\bx=\tilde{a}^2_{i1}\|\bx\|^2$, where $\tilde{a}_{i1}=\ba_i^T\bx/\|\bx\|$ is a standard Gaussian random variable. Thus $|\tilde{a}_{i1}|^2$ is a $\chi_1^2$ random variable, whose cumulative distribution function is denoted as $K(x)$.
Moreover by Lemma \ref{lem:quantileconcentration},  for a small $\epsilon$, one has	$\left|\theta_{\frac{1}{2}-s}(\{|\tilde{a}_{i1}|^2\})- \theta_{\frac{1}{2}-s}(K)\right|<\epsilon$ and $\left|\theta_{\frac{1}{2}+s}(\{|\tilde{a}_{i1}|^2\}) -\theta_{\frac{1}{2}+s}(K)\right|<\epsilon$ with probability $1-2\exp(-cm\epsilon^2)$ and $c$ is a constant around $2\times 0.47^2$ (see Figure~\ref{fig:quantiles}). We note that $\theta_{\frac{1}{2}}(K)=0.455$ and both $\theta_{\frac{1}{2}-s}(K)$ and $\theta_{\frac{1}{2}+s}(K)$ can be arbitrarily close to $\theta_{\frac{1}{2}}(K)$ simultaneously as long as $s$ is small enough (independent of $n$). Thus, one has
\begin{flalign}
\left(\theta_{\frac{1}{2}-s}(K)-\epsilon-c\right)\|\bx\|^2 <\theta_{\frac{1}{2}} (\{y_i\})<\left(\theta_{\frac{1}{2}+s}(K)+\epsilon+c\right)\|\bx\|^2, \label{eq:samplemedianbound}
\end{flalign}
with probability at least $1-\exp(-cm\epsilon^2)$. For the sake of simplicity, we introduce two new notations $\zeta_s:=\theta_{\frac{1}{2}-s}(K)$ and $\zeta^s:=\theta_{\frac{1}{2}+s}(K)$.
Specifically for the instance of $s=0.01$, one has $\zeta_s=0.434$ and $\zeta^s=0.477$. It is easy to see that $\zeta^s-\zeta_s$ can be arbitrarily small if $s$ is small enough.

We next estimate the direction of $\bx$, assuming $\|\bx\|=1$. On the event that \eqref{eq:samplemedianbound} holds, the truncation function has the following bounds,
\begin{flalign*}
&\bone_{\{y_i\le \alpha_y^2 \theta_{1/2} (\{y_i\})/0.455\}}\le \bone_{\left\{y_i\le \alpha_y^2 \left(\zeta^s+\epsilon\right)/0.455\right\}}\le \bone_{\left\{(\ba_i^T\bx)^2\le \alpha_y^2 \left(\zeta^s+\epsilon+c\right)/0.455\right\}}\\
&\bone_{\{y_i\le \alpha_y^2 \theta_{1/2} (\{y_i\})/0.455\}}\ge \bone_{\left\{y_i\le \alpha_y^2 \left(\zeta_s-\epsilon\right)/0.455\right\}}\ge \bone_{\left\{(\ba_i^T\bx)^2\le \alpha_y^2 \left(\zeta_s-\epsilon-c\right)/0.455\right\}}.
\end{flalign*}

On the other hand, denote the support of the outliers as $S$, and we have
\begin{flalign*}
	\bY&= \frac{1}{m} \sum_{i\notin S} \ba_i\ba_i^T \tilde{y}_i \bone_{\{(\ba_i^T\bx)^2\le \alpha_y^2 \theta_{1/2} (\{y_i\})/0.455\}} +\frac{1}{m} \sum_{i\in S} \ba_i\ba_i^T y_i \bone_{\{y_i\le \alpha_y^2 \theta_{1/2} (\{y_i\})/0.455\}}.
\end{flalign*}

Consequently, one can bound $\bY$ as
\begin{flalign*}
\bY_1:=&\frac{1}{m} \sum_{i\notin S} \ba_i\ba_i^T (\ba_i^T\bx)^2 \bone_{\{(\ba_i^T\bx)^2\le \alpha_y^2 (\zeta_s-\epsilon-c)/0.455\}}-c\cdot \frac{1}{m} \sum_{i\notin S} \ba_i\ba_i^T
\preceq \bY\\
 \preceq &\frac{1}{m} \sum_{i\notin S} \ba_i\ba_i^T (\ba_i^T\bx)^2 \bone_{\{(\ba_i^T\bx)^2\le \alpha_y^2 (\zeta^s+\epsilon+c)/0.455\}} +c\cdot \frac{1}{m} \sum_{i\notin S} \ba_i\ba_i^T +\frac{1}{m} \sum_{i\in S} \ba_i\ba_i^T  \alpha_y^2 (\zeta^s+\epsilon+c)/0.455=:\bY_2,
\end{flalign*}
where we have
\begin{flalign}
	\mE[\bY_1]=(1-s)(\beta_1 \bx\bx^T+\beta_2 \bI-c\bI), \quad \mE[\bY_2]=(1-s)(\beta_3 \bx\bx^T +\beta_4 \bI+c \bI)+s\alpha_y^2\frac{(\zeta^s+\epsilon)}{0.455} \bI,
\end{flalign}
with 
\begin{flalign*}
&\beta_1:=\mE\left[\xi^4\bone_{\left\{|\xi|\le \alpha_y\sqrt{ (\zeta_s-\epsilon-c)/0.455}\right\}}\right]-\mE\left[\xi^2\bone_{\left\{|\xi|\le \alpha_y\sqrt{ (\zeta_s-\epsilon-c)/0.455}\right\}}\right]\\
& \beta_2:=\mE\left[\xi^2\bone_{\left\{|\xi|\le \alpha_y\sqrt{ (\zeta_s-\epsilon-c)/0.455}\right\}}\right]\\
&\beta_3:=\mE\left[\xi^4\bone_{\left\{|\xi|\le \alpha_y\sqrt{ (\zeta^s+\epsilon+c)/0.455}\right\}}\right]-\mE\left[\xi^2\bone_{\left\{|\xi|\le \alpha_y\sqrt{ (\zeta^s+\epsilon+c)/0.455}\right\}}\right]\\
&\beta_4:=\mE\left[\xi^2\bone_{\left\{|\xi|\le \alpha_y\sqrt{ (\zeta^s+\epsilon+c)/0.455}\right\}}\right]
\end{flalign*}
where $\xi\sim\cN(0,1)$.


Applying standard results on random matrices with non-isotropic sub-Gaussian rows \cite[equation (5.26)]{Vershynin2012} and noticing that $\ba_i\ba_i^T(\ba_i^T\bx)^2\bone_{\{|\ba_i^T\bx|\le c\}}$ can be rewritten as $\bb_i\bb_i^T$ where $\bb_i:=\ba_i(\ba_i^T\bx)\bone_{\{|\ba_i^T\bx|\le c\}}$ is sub-Gaussian, one can obtain
\begin{flalign}
	\|\bY_1-\mE[\bY_1]\|\le \delta, \quad \|\bY_2-\mE[\bY_2]\|\le \delta
\end{flalign}
with probability $1-\exp(-\Omega(m))$, provided that $m/n$ exceeds some large constant. Furthermore, when $\epsilon$, $c$ and $s$ are sufficiently small, one further has $\|\mE[\bY_1]-\mE[\bY_2]\|\le \delta$.
Putting these together, one has
\begin{flalign}
\|\bY- (1-s)(\beta_1 \bx\bx^T+\beta_2 \bI-c\bI) \|\le 3\delta.
\end{flalign}
Let $\tilde{\bz}^{(0)}$ be the normalized leading eigenvector of $\bY$.
Repeating the same argument as in \cite[Section 7.8]{candes2015phase} and taking $\delta, \epsilon$ to be sufficiently small, one has
\begin{flalign}
	\dist(\tilde{\bz}^{(0)},\bx)\le \tilde{\delta},
\end{flalign}
for a given $\tilde{\delta}>0$, as long as $m/n$ exceeds some large constant.

Furthermore let $\bz^{(0)}=\sqrt{\median{\{y_i\}}/0.455}\tilde{\bz}^{(0)}$ to handle cases $\|\bx\|\neq 1$. By the bound \eqref{eq:samplemedianbound}, one has
\begin{flalign}
\left|\frac{\median(\{y_i\})}{0.455}-\|\bx\|^2\right|\le\max\left\{\left|\frac{\zeta_s-\epsilon-c}{0.455}-1\right|, \left|\frac{\zeta^s+\epsilon+c}{0.455}-1\right|\right\}\|\bx\|^2\le \frac{\zeta^s-\zeta_s+2\epsilon+2c}{0.455}\|\bx\|^2.
\end{flalign}
Thus
\begin{flalign*}
 \dist(\bz^{(0)},\bx)\le \frac{\zeta^s-\zeta_s+2\epsilon+2c}{0.455}\|\bx\|+\tilde{\delta}\|\bx\| \leq \frac{1}{11}\|\bx\|
\end{flalign*}
as long as $s$ and $c$ are small enough constants.
\end{proof}

\section{Supporting Proofs for median-TWF}\label{app:cor:noisefree}


\subsection{Proof of Proposition \ref{prop:mediansandwich}}\label{app:SamMedCon}

We show that the sample median used in the truncation rule concentrates at the level $\|\bz-\bx\|\|\bz\|$. Along the way, we also establish that the sample quantiles around the median are also concentrated at the level $\|\bz-\bx\|\|\bz\|$.

We first show that for a fixed pair $\bz$ and $\bx$, \eqref{eq:median_sandwichTWF} holds with high probability. For simplicity of notation, we let $\bh:=\bz-\bx$. Let $r_i=| (\ba_i^T\bx)^2 - (\ba_i^T\bz)^2 |$. Then $r_i$'s are i.i.d. copies of a random variable $r$, where $r=|(\ba^T\bx)^2- (\ba^T\bz)^2  |$ with the entries of $\ba$ composed of i.i.d. standard Gaussian random variables. Note that the distribution of $r$ is fixed once given $\bh$ and $\bz$. Let $\bx(1)$ denote the first element of a generic vector $\bx$, and $\bx_{-1}$ denote the remaining vector of $\bx$ after eliminating the first element. Let $\bU_h$ be an orthonormal matrix with first row being $\bh^T/\|\bh\|$, $\tilde{\ba}=\bU_h \ba$, and $\tilde{\bz}=\bU_h \bz$. Similarly, define $\bU_{\tilde{z}_{-1}}$ and let $\tilde{\bb}=\bU_{\tilde{z}_{-1}}\tilde{\ba}_{-1}$. Then $\tilde{\ba}(1)$ and $\tilde{\bb}(1)$ are independent standard normal random variables. We further express $r$ as follows.
  \begin{flalign*}
	r&=|(\ba^T\bz)^2-(\ba^T\bx)^2|\\
	&=|(2\ba^T\bz-\ba^T\bh)(\ba^T\bh)|\\
	&=|(2\tilde{\ba}^T\tilde{\bz}-\tilde{\ba}(1)\|\bh\|)(\tilde{\ba}(1)\|\bh\|)|\\
	&=|(2\bh^T\bz-\|\bh\|^2)\tilde{\ba}(1)^2+2(\tilde{\ba}_{-1}^T\tilde{\bz}_{-1})(\tilde{\ba}(1)\|\bh\|)|\\
	&=|(2\bh^T\bz-\|\bh\|^2)\tilde{\ba}(1)^2+2\tilde{\bb}(1)\|\tilde{\bz}_{-1}\|\tilde{\ba}(1)\|\bh\||\\
	&=|(2\bh^T\bz-\|\bh\|^2)\tilde{\ba}(1)^2+2\sqrt{\|\bz\|^2-\tilde{\bz}(1)^2}\tilde{\ba}(1)\tilde{\bb}(1)\|\bh\||\\
	 &=\left|\left(2\frac{\bh^T\bz}{\|\bh\|\|\bz\|}-\frac{\|\bh\|}{\|\bz\|}\right)\tilde{\ba}(1)^2+2\sqrt{1-\left(\frac{\bh^T\bz}{\|\bh\|\|\bz\|}\right)^2}\tilde{\ba}(1)\tilde{\bb}(1)\right|\cdot \|\bh\|\|\bz\|\\
	&=:\left|(2\cos(\omega)-t)\tilde{\ba}(1)^2+2\sqrt{1-\cos^2(\omega)}\tilde{\ba}(1)\tilde{\bb}(1)\right|\cdot\|\bh\|\|\bz\|\\
	&=:|u\tilde{v}|\cdot\|\bh\|\|\bz\|
\end{flalign*}
where $\omega$ is the angle between $\bh$ and $\bz$, and $t=\|\bh\|/\|\bz\|<1/11$. Consequently, $u=\tilde{\ba}(1)\sim\mathcal{N}(0,1)$ and $\tilde{v}=(2\cos(\omega)-t)\tilde{\ba}(1)+2|\sin (\omega)|\tilde{\bb}(1)$ is also a Gaussian random variable with variance $3.6<\Var(\tilde{v})<4$ under the assumption $t<1/11$.

Let $v=\tilde{v}/\sqrt{\Var(\tilde{v})}$, and then $v\sim \mathcal{N}(0,1)$. Furthermore, let $r'=|uv|$. Denote the density function of $r'$ as $\psi_{\rho}(\cdot)$ and the $1/2$-quantile point of $r'$ as $\theta_{1/2}(\psi_{\rho})$. By Lemma \ref{lem:product}, we have
\begin{flalign}
	0.47<\psi_{\rho}(\theta_{1/2})<0.76,\quad 0.348<\theta_{1/2}(\psi_{\rho})<0.455.
\end{flalign}

By Lemma \ref{lem:quantileconcentration}, we have with probability at least $1-2\exp(-c m \epsilon^2)$ (here $c$ is around $2\times0.47^2$),
\begin{flalign}
	0.348-\epsilon<\median (\{r'_i\}_{i=1}^m)<0.455+\epsilon.
\end{flalign}
The same arguments carry over to other quantiles $\theta_{0.49}(\{r'_i\})$ and $\theta_{0.51}(\{r'_i\})$. From Figure. \ref{fig:quantiles}, we observe that for $\rho\in[0,1]$
\begin{flalign}
	0.45<\psi_{\rho}(\theta_{0.49}), \psi_{\rho}(\theta_{0.51})<0.78,\quad 0.336<\theta_{0.49}(\psi_{\rho}),\theta_{0.51}(\psi_{\rho})<0.477
\end{flalign}
and then we have with probability at least  $1-2\exp(-cm\epsilon^2)$ (here $c$ is around $2\times 0.45^2$),
\begin{flalign}
	0.336-\epsilon<\theta_{0.49} (\{r'_m\}),\theta_{0.51}(\{r'_m\})<0.477+\epsilon.
\end{flalign}
Hence, by multiplying by $\sqrt{\Var(\tilde{v})}$,
we have with probability $1-2\exp(-cm\epsilon^2)$,
\begin{flalign}
	(0.65-\epsilon) \|\bz-\bx\|\|\bz\|\le \median\left( \left\{|(\ba_i^T\bz)^2-(\ba_i^T\bx)^2|\right\}\right)\le  (0.91+\epsilon)\|\bz-\bx\|\|\bz\|,\label{eq:medianbeforenet}\\
	(0.63-\epsilon) \|\bz-\bx\|\|\bz\|\le \theta_{0.49},\theta_{0.51} \left(\left\{|(\ba_i^T\bz)^2-(\ba_i^T\bx)^2|\right\}\right)\le (0.95+\epsilon) \|\bz-\bx\|\|\bz\|.\label{eq:quantilebeforenet}
\end{flalign}
We note that, to keep notation simple, $c$ and $\epsilon$ may vary line by line within constant factors.

Up to now, we prove that for any fixed  $\bz$ and $\bx$, the median or neighboring quantiles of $\left\{|(\ba_i^T\bz)^2-(\ba_i^T\bx)^2|\right\}$ are upper and lower bounded by $\|\bz-\bx\|\|\bz\|$ times constant factors. To prove \eqref{eq:median_sandwichTWF} for all $\bz$ and $\bx$ with $\|\bz-\bx\|\le \frac{1}{11} \|\bz\|$, we use the net covering argument. Still we argue for median first and  the same arguments carry over to other quantiles.

To proceed, we restate \eqref{eq:medianbeforenet} as
\begin{flalign}
	(0.65-\epsilon)\le \median\left( \left\{\left|\left(\frac{2(\ba_i^T\bz)}{\|\bz\|}-\frac{\ba_i^T\bh}{\|\bh\|}\frac{\|\bh\|}{\|\bz\|}\right)\frac{\ba_i^T\bh}{\|\bh\|}\right|\right\}\right)\le  (0.91+\epsilon)
\end{flalign}
holds with probability at least $1-2\exp(-cm\epsilon^2)$ for a given pair $\bh, \bz$ satisfying $\|\bh\|/\|\bz\|\le 1/11$.

Let $\tau=\epsilon/(6n+6m)$, let $\mathcal{S}_\tau$ be a $\tau$-net covering the unit sphere, $\mathcal{L}_\tau$ be a $\tau$-net covering a line with length $1/11$, and set
\begin{flalign}
	\mathcal{N}_\tau=\{(\bz_0,\bh_0, t_0): (\bz_0,\bh_0, t_0)\in \mathcal{S}_\tau\times \mathcal{S}_\tau\times \mathcal{L}_\tau\}.
\end{flalign}
One has cardinality bound (i.e., the upper bound on the covering number) $|\mathcal{N}_\tau|\le (1+2/\tau)^{2n}/(11\tau)<(1+2/\tau)^{2n+1}$. Taking the union bound, we have
\begin{flalign}
	(0.65-\epsilon)\le \median\left( \left\{|2(\ba_i^T\bz_0)-(\ba_i^T\bh_0)t_0||\ba_i^T\bh_0|\right\}\right)\le  (0.91+\epsilon), \quad \forall (\bz_0, \bh_0, t_0)\in\cN_\epsilon \label{eq:medianonnet}
\end{flalign}
with probability at least $1-(1+2/\tau)^{2n+1}\exp(-cm\epsilon^2)$.  

We next argue that \eqref{eq:medianonnet} holds with probability $1-c_1\exp(-c_2 m\epsilon^2)$ for some constants $c_1, c_2$ as long as $m\ge c_0 (\epsilon^{-2}\log\epsilon^{-1}) n\log n$ for sufficiently large constant $c_0$. To prove this claim, we first observe
\begin{flalign*}
(1+2/\tau)^{2n+1}\asymp \exp(2n(\log (n+m) +\log 12+\log (1/\epsilon)))\asymp \exp(2n(\log m)).
\end{flalign*}

We note that once $\epsilon$ is chosen, it is fixed in the whole proof and does not scale with $m$ or $n$. For simplicity, assume that $\epsilon<1/e$.  Fix some positive constant $c'<c-c_2$.  It then suffices to show that there exists a large constant $c_0$ such that if $m\ge c_0 (\epsilon^{-2}\log\epsilon^{-1}) n\log n$, then
\begin{flalign}\label{eq:logm}
2n\log m<c'm\epsilon^2.
\end{flalign}
For any fixed $n$, if \eqref{eq:logm} holds for some $m$ and $m>(2/c')\epsilon^{-2}n$, then \eqref{eq:logm} always holds for larger $m$, because
\begin{flalign*}
2n\log (m+1)&=2n\log m+2n(\log (m+1)-\log m)=2n \log m+\frac{2n}{m}\log (1+\frac{1}{m})^m\\
&\le 2n\log m+\frac{2n}{m}
\le c'm\epsilon^2+c'\epsilon^2=c'(m+1)\epsilon^2.
\end{flalign*}
Next, for any $n$, we can always find a constant $c_0$ such that \eqref{eq:logm} holds for $m=c_0(\epsilon^{-2}\log\epsilon^{-1}) n\log n$. Such $c_0$ can be easily found for large $n$. For example, $c_0=4/c'$ is a valid option if 
\begin{equation}\label{eq:largen}
(4/c')(\epsilon^{-2}\log\epsilon^{-1}) n\log n<n^2.
\end{equation}
Moreover, since the number of $n$ that violates \eqref{eq:largen} is finite, the maximum over all such $c_0$ serves the purpose.

Next, one needs to bound $$\left|\median\left( \left\{|2(\ba_i^T\bz_0)-(\ba_i^T\bh_0)t_0||\ba_i^T\bh_0|\right\}\right) - \median\left( \left\{|2(\ba_i^T\bz)-(\ba_i^T\bh)t||\ba_i^T\bh|\right\}\right)\right|$$
for any $\|\bz-\bz_0\|<\tau, \|\bz-\bz_0\|<\tau$ and $\|t-t_0\|<\tau$.

By Lemma \ref{lem:orderstats} and  the inequality $\big| |x|-|y|\big|\le|x-y|$, we have
\begin{flalign*}
&\left|\median\left( \left\{|2(\ba_i^T\bz_0)-(\ba_i^T\bh_0)t_0||\ba_i^T\bh_0|\right\}\right) - \median\left( \left\{|2(\ba_i^T\bz)-(\ba_i^T\bh)t||\ba_i^T\bh|\right\}\right)\right|\\
&\le \max_{i\in [m]} \left|\left( 2(\ba_i^T\bz_0)-(\ba_i^T\bh_0)t_0\right)(\ba_i^T\bh_0) - \left(2(\ba_i^T\bz)-(\ba_i^T\bh)t\right)(\ba_i^T\bh)\right|\\
&\le \max_{i\in [m]} \left|\left( 2(\ba_i^T\bz_0)-(\ba_i^T\bh_0)t_0\right)(\ba_i^T\bh_0) - \left(2(\ba_i^T\bz)-(\ba_i^T\bh)t\right)(\ba_i^T\bh_0)\right|\\
&\quad+ \max_{i\in [m]} \left|\left( 2(\ba_i^T\bz)-(\ba_i^T\bh)t\right)(\ba_i^T\bh_0) - \left(2(\ba_i^T\bz)-(\ba_i^T\bh)t\right)(\ba_i^T\bh)\right|\\
&\le \max_{i\in [m]} \left(\left|2\ba_i^T(\bz_0-\bz)\right|+\left|(\ba_i^T\bh_0)t_0-(\ba_i^T\bh)t\right|\right)\left|\ba_i^T\bh_0\right|+ \max_{i\in [m]} \left|2(\ba_i^T\bz)-(\ba_i^T\bh)t\right||\ba_i^T(\bh_0-\bh)|\\
&\le \max_{i\in[m]}\|\ba_i\|^2(3+t)\tau+\max_{i\in[m]}\|\ba_i\|^2(2+t)\tau\\
&\le \max_{i\in[m]}\|\ba_i\|^2(5+2t)\tau
\end{flalign*}

On the event $E_1:=\left\{\max_{i\in [m]} \|\ba_i\|^2\le m+n\right\}$, one can show that
\begin{flalign}
\left|\median\left( \left\{|2(\ba_i^T\bz_0)-(\ba_i^T\bh_0)t_0||\ba_i^T\bh_0|\right\}\right) - \median\left( \left\{|2(\ba_i^T\bz)-(\ba_i^T\bh)t||\ba_i^T\bh|\right\}\right)\right|<6(m+n)\tau<\epsilon.
\end{flalign}
We claim that $E_1$ holds with probability at least $1-m\exp(-m/8)$ if $m>n$. This can be argued as follows. Note that $\|\ba_i\|^2=\sum_{j=1}^n \ba_i(j)^2$, where $\ba_i(j)$ is the $j$-th element of $\ba_i$. Hence, $\|\ba_i\|^2$ is a sum of $n$ i.i.d. $\chi_1^2$ random variables. Applying the Bernstein-type inequality \cite[Corollary 5.17]{Vershynin2012} and observing that the sub-exponential norm of $\chi_1^2$ is smaller than 2, we have
\begin{flalign}
\bbP\left\{\|\ba_i\|^2\ge m+n\right\} \le \exp(-m/8).
\end{flalign}
Then a union bound concludes the claim.

Further note that \eqref{eq:medianonnet} holds on an event $E_2$, which has probability $1- c_1\exp(-c_2 m \epsilon^2)$ as long as $m\ge c_0(\epsilon^{-2}\log \frac{1}{\epsilon})n\log n$. On the intersection of $E_1$ and $E_2$, inequality for $\theta_{\frac{1}{2}}$ (i.e., median) in \eqref{eq:median_sandwichTWF} holds. 
Such net covering arguments can also carry over to show that inequalities of $\theta_{0.49}$ and $\theta_{0.51}$ in \eqref{eq:median_sandwichTWF} also hold for all $\bx$ and $\bz$ obeying $\|\bx-\bz\|\le \frac{1}{11}\|\bz\|$.

\subsection{Proof of Proposition~\ref{prop:strongconvex}}\label{app:prop:strongconvex}

The proof adapts that of \cite[Proposition 2]{chen2015solving}. We outline the main steps for completeness. Observe that for the noise-free case, $y_i=(\ba_i^T\bx)^2$. We obtain
\begin{flalign}
\nabla \ell_{tr}(\bz)&=\frac{1}{m}\sum_{i=1}^m \frac{(\ba_i^T\bz)^2-(\ba_i^T\bx)^2}{ \ba_i^T\bz}\ba_i \bone_{\cE_1^i\cap \cE_2^i}\nn\\
&=\frac{1}{m}\sum_{i=1}^m 2(\ba_i^T\bh) \ba_i \bone_{\cE_1^i\cap \cE_2^i}-\frac{1}{m}\sum_{i=1}^m\frac{(\ba_i^T\bh)^2}{ \ba_i^T\bz} \ba_i \bone_{\cE_1^i\cap \cE_2^i}.\label{eq:gradtwoterms}
\end{flalign}
One expects the contribution of the second term in \eqref{eq:gradtwoterms} to be small as $\|\bh\|/\|\bz\|$ decreases.

For each $i$, we introduce two new events
\begin{flalign*}
	&\cE_3^i := \{\left|(\ba_i^T\bx)^2-(\ba_i^T\bz)^2\right|\le 0.6 \alpha_h \|\bh\| \cdot |\ba_i^T\bz|\},\\
	&\cE_4^i := \{\left|(\ba_i^T\bx)^2-(\ba_i^T\bz)^2\right|\le 1.0 \alpha_h \|\bh\| \cdot |\ba_i^T\bz|\}.
\end{flalign*}
One the event that Proposition \ref{prop:mediansandwich} holds, the following inclusion property
\begin{flalign}
	\cE_3^i \subseteq \cE_2^i \subseteq \cE_4^i \label{eq:simpleinclusion}
\end{flalign}
is true for all $i$, where $\cE_2^i$ is defined in Algorithm \ref{alg:mtwf}. It is easier to work with these new events because $\cE_3^i$'s (resp. $\cE_4^i$'s) are statistically independent across $i$ for any fixed $\bx$ and $\bz$. To further decouple the quadratic inequalities in $\cE_3^i$ and $\cE_4^i$ into linear inequalities, we introduce two more events and state their properties in the following lemma.
\begin{lemma}[Lemma 3 in \cite{chen2015solving}]\label{le:Devents}
For any $\gamma>0$, define
\begin{flalign}
	\cD_\gamma^i &:= \{\left|(\ba_i^T\bx)^2-(\ba_i^T\bz)^2\right|\le \gamma \|\bh\| |\ba_i^T\bz|\}, \label{eq:eventDi}\\
	\cD_\gamma^{i,1} &:= \left\{ \frac{|\ba_i^T\bh|}{\|\bh\|}\le \gamma  \right\},\label{eq:eventDi1}\\
	\cD_\gamma^{i,2} &:= \left\{\left| \frac{\ba_i^T\bh}{\|\bh\|}-\frac{2\ba_i^T\bz}{\|\bh\|}\right|\le \gamma\right\}. \label{eq:eventDi2}
\end{flalign}
On the event $\cE_{1}^i$ defined in Algorithm \ref{alg:mtwf}, the quadratic inequality specifying $\cD_\gamma^i$ implicates that $\ba_i^T\bh$ belongs to two intervals centered around $0$ and $2\ba_i^T\bz$, respectively, i.e., $\cD_\gamma^{i,1}$ and $\cD_\gamma^{i,2}$. 
The following inclusion property holds
\begin{flalign}
\left(\cD_{\frac{\gamma}{1+\sqrt{2}}}^{i,1}\cap \cE_{1}^i\right)\cup \left(\cD_{\frac{\gamma}{1+\sqrt{2}}}^{i,2}\cap \cE_{1}^i\right)\subseteq \cD_\gamma^i \cap \cE_1^i \subseteq\left(\cD_\gamma^{i,1}\cap \cE_1^i\right)\cup \left(\cD_\gamma^{i,2}\cap \cE_1^i\right). 
\label{eq:maininclusion}
\end{flalign}
\end{lemma}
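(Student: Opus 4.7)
My plan is to exploit the algebraic factorization
\begin{flalign*}
(\ba_i^T\bx)^2-(\ba_i^T\bz)^2 = -(\ba_i^T\bh)\bigl((\ba_i^T\bh)-2(\ba_i^T\bz)\bigr),
\end{flalign*}
so that the quadratic inequality defining $\cD_\gamma^i$ becomes
$|\ba_i^T\bh|\cdot|\ba_i^T\bh-2\ba_i^T\bz|\le \gamma\|\bh\|\,|\ba_i^T\bz|$.
I would then normalize by introducing $\tilde u:=\ba_i^T\bh/\|\bh\|$ and $W:=\ba_i^T\bz/\|\bh\|$, which recasts $\cD_\gamma^i$ as $|\tilde u|\cdot|\tilde u-2W|\le \gamma|W|$, while $\cD_\gamma^{i,1}$ and $\cD_\gamma^{i,2}$ become the two linear bands $|\tilde u|\le\gamma$ and $|\tilde u-2W|\le \gamma$ centered respectively at $0$ and $2W$. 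In these coordinates, the claim is a clean statement about which points $\tilde u\in\mathbb{R}$ can satisfy a given quadratic inequality.

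For the right inclusion, I would argue by contraposition: if $\tilde u$ lies outside both linear bands, i.e.\ $|\tilde u|>\gamma$ and $|\tilde u-2W|>\gamma$, then by the triangle inequality $2|W|=|\tilde u-(\tilde u-2W)|\le|\tilde u|+|\tilde u-2W|$, hence $|W|\le\max\{|\tilde u|,|\tilde u-2W|\}$. Without loss of generality assume the max is $|\tilde u|$; then $\gamma|W|\le \gamma|\tilde u|<|\tilde u|\cdot|\tilde u-2W|$, contradicting $\cD_\gamma^i$. Thus the event $\cD_\gamma^i$ forces $\tilde u$ into one of the two bands, which is precisely the right inclusion. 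Notice this direction did not actually require $\cE_1^i$ and carries it along only to keep the symmetric statement clean.

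For the left inclusion I would handle the two bands separately. If $|\tilde u|\le \gamma/(1+\sqrt2)=\gamma(\sqrt2-1)$, then $|\tilde u|\cdot|\tilde u-2W|\le |\tilde u|^2+2|\tilde u|\,|W|$, and enforcing $|\tilde u|^2+2|\tilde u|\,|W|\le \gamma|W|$ reduces to $|W|(\gamma-2|\tilde u|)\ge |\tilde u|^2$. Plugging in the maximal value $|\tilde u|=\gamma(\sqrt2-1)$ and using the identity $(3-2\sqrt2)(1+\sqrt2)/(\sqrt2-1)=1$ shows the inequality is equivalent to the single clean bound $|W|\ge \gamma$. The symmetric case $|\tilde u-2W|\le \gamma(\sqrt2-1)$ follows identically by writing $\tilde u=2W-\xi$. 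Translating back, $|W|\ge\gamma$ is exactly $|\ba_i^T\bz|\ge\gamma\|\bh\|$, which on $\cE_1^i$ is implied by $\|\bh\|/\|\bz\|\le \alpha_l/\gamma$ — a hypothesis that is in force whenever this lemma is invoked inside Propositions~\ref{prop:strongconvex}--\ref{prop:localsmooth} (since $\gamma$ there is of order $\alpha_h$ and the standing assumption is $\|\bh\|/\|\bz\|\le \alpha_l/\alpha_h$).

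The main obstacle is purely algebraic: pinning down that the constant $1/(1+\sqrt{2})$ is the sharp number that makes the worst-case one-dimensional quadratic inequality $|\tilde u|^2+2|\tilde u|\,|W|\le \gamma|W|$ solvable simultaneously with $|W|\ge \gamma$. Once this optimization-style computation is done, everything else is bookkeeping — the factorization of the quadratic, the triangle-inequality contrapositive, and the sign symmetry that lets us treat both intervals in parallel. No probabilistic input is needed for the lemma itself; it is a deterministic geometric statement about the level set of $(\ba_i^T\bx)^2-(\ba_i^T\bz)^2$ as a function of $\ba_i^T\bh$.
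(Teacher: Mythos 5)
Your proposal is correct, and it is worth pointing out that the paper itself offers no proof of this statement at all: the lemma is simply quoted as Lemma 3 of \cite{chen2015solving}, so the only ``proof'' in the paper is the citation. Your argument supplies exactly the deterministic algebra behind that cited result: the factorization $(\ba_i^T\bx)^2-(\ba_i^T\bz)^2=(\ba_i^T\bh)(\ba_i^T\bh-2\ba_i^T\bz)$ (with the paper's convention $\bh=\bz-\bx$ the sign is $+$ rather than $-$, which is immaterial once absolute values are taken), the normalization by $\|\bh\|$, the contrapositive triangle-inequality argument for the right inclusion (which, as you note, uses no property of $\cE_1^i$), and the worst-case computation showing that $|\ba_i^T\bh|\le \gamma(\sqrt{2}-1)\|\bh\|$ together with $|\ba_i^T\bz|\ge\gamma\|\bh\|$ forces membership in $\cD_\gamma^i$, the factor $1/(1+\sqrt{2})$ being sharp for this step.

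The one point you should make fully explicit is the caveat you already flagged: the left inclusion in \eqref{eq:maininclusion} is \emph{not} a consequence of $\cE_1^i$ alone. On $\cE_1^i$ one only knows $|\ba_i^T\bz|\ge\alpha_l\|\bz\|$, so the bound $|\ba_i^T\bz|\ge\gamma\|\bh\|$ that your computation requires needs in addition $\gamma\|\bh\|\le\alpha_l\|\bz\|$; if $\|\bh\|$ is comparable to $\|\bz\|$ one can choose $\ba_i$ with $|\ba_i^T\bh|$ just below $\gamma(\sqrt 2-1)\|\bh\|$ and $|\ba_i^T\bz|$ small (yet still above $\alpha_l\|\bz\|$) so that $\cD_\gamma^i$ fails, so the literal wording ``on the event $\cE_1^i$'' is incomplete as stated. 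This extra hypothesis is indeed in force wherever the lemma is used: Propositions~\ref{prop:strongconvex} and \ref{prop:localsmooth} assume $\|\bz-\bx\|/\|\bz\|\le\alpha_l/\alpha_h$ and the left inclusion is invoked with $\gamma\le\alpha_h$, so $\gamma\|\bh\|\le\alpha_l\|\bz\|\le|\ba_i^T\bz|$ on $\cE_1^i$. With that condition stated as an explicit hypothesis, your proof is a complete and correct self-contained substitute for the citation.
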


Specifically, following the two inclusion properties \eqref{eq:simpleinclusion} and \eqref{eq:maininclusion}, we have
\begin{flalign}
\cD_{\gamma_3}^{i,1}\cap \cE_{1,\gamma_3}^i\subseteq \cE_3^i\cap\cE_1^i\subseteq \cE_2^i\cap \cE_1^i \subseteq \cE_4^i\cap\cE_1^i\subseteq (\cD_{\gamma_4}^{i,1}\cup \cD_{\gamma_4}^{i,2})\cap \cE_1^i
\end{flalign}
where the parameters $\gamma_3, \gamma_4$ are given by
\begin{flalign*}
	\gamma_3:= 0.248 \alpha_h, \quad \text{and } \quad \gamma_4:=\alpha_h.
\end{flalign*}

Further using the identity \eqref{eq:gradtwoterms}, we have the following lower bound
\begin{flalign}
\left\langle\nabla \ell_{tr}(\bz),\bh\right\rangle & \ge \frac{2}{m}\sum_{i=1}^m (\ba_i^T\bh)^2 \bone_{\cE_{1}^i\cap \cD_{\gamma_3}^{i,1}} -\frac{1}{m}\sum_{i=1}^m \frac{|\ba_i^T\bh|^3}{|\ba_i^T\bz|} \bone_{\cD_{\gamma_4}^{i,1}\cap \cE_1^i}-\frac{1}{m}\sum_{i=1}^m \frac{|\ba_i^T\bh|^3}{|\ba_i^T\bz|} \bone_{\cD_{\gamma_4}^{i,2}\cap \cE_1^i}.
\label{eq:strongconvex3terms}
\end{flalign}
The three terms in \eqref{eq:strongconvex3terms} can be bounded following Lemmas 4, 5, and 6 in \cite{chen2015solving}, which concludes the proof.

\section{Supporting Proofs for Median-RWF}\label{app:cor:noisefreeRWF}

\subsection{Proof of Proposition~\ref{prop:mediansandwichRWF}}\label{app:prop:mediansandwichRWF}
%

Observe that
\begin{flalign*}
||\ba_i^T\bx|- |\ba_i^T\bz| | =\begin{cases}
|\ba_i^T\bh|, &\quad \text{if } (\ba_i^T\bx)(\ba_i^T\bz)\ge 0;\\
|2\ba_i^T\bx+\ba_i^T\bh|, &\quad \text{if }  (\ba_i^T\bx)(\ba_i^T\bz)<0.
\end{cases}
\end{flalign*}
The following lemma states that $\{(\ba_i^T\bx)(\ba_i^T\bz)<0\}$ are rare events when $\|\bx-\bz\|$ is small. Hence, $\median\left(\left\{\left||\ba_i^T\bx|-|\ba_i^T\bz|\right|\right\}_{i=1}^m\right)$ can be viewed as $\median (\{|\ba_i^T\bh|\}_{i=1}^m)$ with a small perturbation. 
\begin{lemma}\label{lem:rareEvent}
	If $m>c_0 n  $, then with probability at least $1-c_1\exp(-c_2  m)$, 
	\begin{flalign}
	\frac{1}{m}\sum_{i=1}^m \bone_{\{(\ba_i^T\bx)(\ba_i^T\bz)<0\}}<0.05
	\end{flalign}
	holds for all $\bz, \bx$ satisfying $\|\bz-\bx\|<\frac{1}{11}\|\bx\|$.
\end{lemma}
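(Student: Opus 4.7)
The plan is to derive a sharp pointwise probability bound and then promote it to a uniform bound via a VC-type or net-covering argument. First, I would fix any admissible pair $(\bx, \bz)$ with $\|\bz - \bx\| < \|\bx\|/11$ and observe that the event $(\ba^T\bx)(\ba^T\bz) < 0$ depends only on the directions $\bu := \bx/\|\bx\|$ and $\bv := \bz/\|\bz\|$. Since $(\ba^T\bu, \ba^T\bv)$ is bivariate standard Gaussian with correlation $\cos\theta$, where $\theta$ is the angle between $\bx$ and $\bz$, the classical formula for Gaussian orthant probabilities gives $\bbP\{(\ba^T\bx)(\ba^T\bz) < 0\} = \theta/\pi$. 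An elementary geometric computation (the minimum distance from $\bx$ to a ray making angle $\theta$ with $\bx$ equals $\|\bx\|\sin\theta$) shows $\sin\theta \leq \|\bz - \bx\|/\|\bx\| < 1/11$, so the pointwise probability is at most $\arcsin(1/11)/\pi < 0.03$.

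To upgrade to a uniform-in-$(\bx,\bz)$ statement, I would note that the function class $\{\ba \mapsto \bone_{\{(\ba^T\bu)(\ba^T\bv) < 0\}} : \bu, \bv \in S^{n-1}\}$ can be written as the symmetric difference (XOR) of two half-space classes, each of VC dimension $n$, and therefore the combined class has VC dimension at most $O(n)$ up to possibly a logarithmic factor. Applying the standard Vapnik-Chervonenkis uniform-convergence inequality with tolerance $\epsilon = 0.02$ would give that, provided $m \geq c_0 n$ with $c_0$ sufficiently large, with probability at least $1 - c_1\exp(-c_2 m)$,
\begin{flalign*}
\sup_{\bu,\bv \in S^{n-1}} \left|\frac{1}{m}\sum_{i=1}^m \bone_{\{(\ba_i^T\bu)(\ba_i^T\bv) < 0\}} - \bbP\{(\ba^T\bu)(\ba^T\bv) < 0\}\right| \leq 0.02.
\end{flalign*}
Combining with the pointwise bound of $0.03$ then yields $\frac{1}{m}\sum_i \bone_{\{(\ba_i^T\bx)(\ba_i^T\bz) < 0\}} < 0.05$ uniformly over admissible $(\bx,\bz)$.

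The main obstacle, in my view, lies in the uniform convergence step. A direct net-covering argument along the lines of Proposition \ref{prop:mediansandwich}---take a $\tau$-net on $S^{n-1} \times S^{n-1}$ with $\tau \asymp 1/\sqrt{n}$, apply Hoeffding on each net pair, and then handle the off-net perturbation---is feasible but requires separately controlling the empirical density of samples near the half-space boundaries $\{|\ba_i^T\bu_0| \lesssim \tau\sqrt{n}\}$, since indicator flips occur only for such boundary samples. This perturbation analysis is delicate because the indicator $\bone_{\{(\ba^T\bu)(\ba^T\bv) < 0\}}$ is discontinuous in $(\bu,\bv)$, and the naive analysis typically introduces an extra $\log n$ factor in the sample complexity; the VC route is cleaner, while either route suffices for the downstream use within Proposition \ref{prop:mediansandwichRWF}.
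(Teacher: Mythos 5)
Your proposal is correct, but it takes a genuinely different route from the paper's. For the pointwise step, the paper does not invoke the exact orthant probability: it sandwiches the indicator between a Lipschitz surrogate $\chi$ (with smoothing parameter $\delta=0.001$) and the thickened event $\{(\ba_i^T\bx)(\ba_i^T\bz)<\delta\}$, and then evaluates $\mE\big[\bone_{\{(\ba_i^T\bx)(\ba_i^T\bz)<\delta\}}\big]$ numerically at the extreme correlation $\rho=0.995$ to obtain the constant $0.045$; your closed-form bound $\arccos(\rho)/\pi\le\arcsin(1/11)/\pi<0.03$ is sharper and avoids numerical integration (just state explicitly that $\|\bz-\bx\|<\|\bx\|/11$ rules out obtuse angles, so $\sin\theta<1/11$ indeed forces $\theta<\arcsin(1/11)$ rather than $\theta>\pi-\arcsin(1/11)$). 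For uniformity, the paper runs an $\epsilon$-net over $S^{n-1}\times S^{n-1}$ with Hoeffding at each net point and controls off-net perturbations by combining the Lipschitz property of $\chi$ with the restricted $\ell_1$ bound $\frac{1}{m}\|\cA(\bM)\|_1\lesssim\|\bM\|_F$ for rank-two matrices from \cite[Lemma 1]{chen2015solving}; this is precisely the boundary-sample issue you flag, and the smoothing is what lets the net argument close with only $m>c_0 n$, matching the lemma's linear sample complexity. Your VC route handles the discontinuous indicators natively: one does not even need a VC-dimension bound for the XOR class, since bounding its growth function by the product $(em/n)^{2n}$ of the two homogeneous-halfspace growth functions and plugging into the standard VC deviation inequality already gives deviation $0.02$ with probability $1-c_1\exp(-c_2 m)$ once $m\ge c_0 n$ for $c_0$ large, with no logarithmic loss. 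So your argument dispenses with the smoothing parameter and the auxiliary rank-two matrix concentration lemma, at the price of importing VC machinery that the paper otherwise does not use; either way the downstream use in Proposition \ref{prop:mediansandwichRWF} is unaffected.
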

\begin{proof}
	See Appendix \ref{app:lem:rareEvent}.
\end{proof}
By Lemma \ref{lem:empiricalmedian} and Lemma \ref{lem:rareEvent}, we have
\begin{flalign}
\theta_{p-0.05}\left(\{|\ba_i^T\bh|\}\right) \le \theta_p\left( \left\{\left||\ba_i^T\bx|-|\ba_i^T\bz|\right|\right\}\right) \le  \theta_{p+0.05}\left(\{|\ba_i^T\bh|\}\right)   \label{eq:RWFmedianbound}
\end{flalign}
for all $\bx$ and $\bz$ satisfying $\|\bx-\bz\|\le \frac{1}{11}\|\bz\|$ with high probability.

For the model \eqref{eq:outliermodel} with a fraction $s$ of outliers, due to Lemma \ref{lem:empiricalmedian}, we have that
\begin{flalign}\label{eq:RWF_median_outliercase}
\theta_{\frac{1}{2}-s}(\{\left||\ba_i^T\bx|-|\ba_i^T\bz|\right|\})\le \theta_{\frac{1}{2}}(\{|\sqrt{y_i}-|\ba_i^T\bz||\})  \le \theta_{\frac{1}{2}+s}(\{\left||\ba_i^T\bx|-|\ba_i^T\bz|\right|\}).
\end{flalign}
Combining with \eqref{eq:RWFmedianbound}, we obtain that
\begin{flalign}\label{eq:RWF_median_outliercase2}
\theta_{0.45-s}(\{|\ba_i^T\bh|\})\le \theta_{\frac{1}{2}}(\{|\sqrt{y_i}-|\ba_i^T\bz||\})  \le \theta_{0.55+s}(\{|\ba_i^T\bh|\}).
\end{flalign}
Next it suffices to show that $\theta_{0.45-s},\theta_{0.55+s}(\{|\ba_i^T\bh|\})$  are on the order of $ \|\bh\|$ for small $s$.

Let $\tilde{a}_i=|\ba_i^T\bh|/\|\bh\|$. Then $\tilde{a}_i$'s are i.i.d. copies of a \emph{folded standard Gaussian} random variable (i.e., $|\xi|$ where $\xi\sim \cN(0,1))$. We use $\phi(\cdot)$ to denote the density of folded standard Gaussian distribution.

For $s=0.01$, we calculate that 
\begin{flalign}
& \phi(\theta_{0.44})=0.67, \quad \phi(\theta_{0.45})=0.67,  \quad \phi(\theta_{0.55})=0.60, \quad \phi(\theta_{0.56})=0.59\\
	&\theta_{0.44}(\phi)=0.58,\quad \theta_{0.45}(\phi)=0.6, \quad \theta_{0.55}(\phi)=0.76, \quad\theta_{0.56}(\phi)=0.78 .
\end{flalign}
By Lemma \ref{lem:quantileconcentration}, the sample quantiles concentrate on population quantiles. Thus, for any fixed pair $(\bx,\bz)$, 
\begin{flalign}
(0.6-\epsilon)\|\bh\|\le \theta_{1/2} (\{\left||\ba_i^T\bx|-|\ba_i^T\bz|\right|\}_{i=1}^m)\le (0.76+\epsilon)\|\bh\|,
\end{flalign}
holds with probability at least $1-2\exp(-c m \epsilon^{-2})$.

Following the argument of net covering similarly to that in Appendix \ref{app:SamMedCon}, the proposition is proved.
\subsection{Proof of Proposition \ref{prop:strongconvexRWF}}\label{app:prop:strongconvexRWF}
The proof adapts the proof of Proposition 2 in \cite{chen2015solving}. We outline the main steps for completeness. Observe that for the noise-free case, $y_i=|\ba_i^T\bx|$. We obtain
\begin{flalign}
\nabla \cR_{tr}(\bz)&=\frac{1}{m}\sum_{i=1}^m\left( (\ba_i^T\bz)-|\ba_i^T\bx|\cdot\frac{ \ba_i^T\bz}{ |\ba_i^T\bz|}\right)\ba_i \bone_{\cT^i}=\frac{1}{m}\sum_{i\notin \cB} (\ba_i^T\bh) \ba_i \bone_{\cT^i}+\frac{1}{m}\sum_{i\in \cB}(\ba_i^T\bz+\ba_i^T\bx) \ba_i \bone_{\cT^i},\label{eq:RWF_gradtwoterms}
\end{flalign}
where $\cB:=\{i:(\ba_i^T\bx)(\ba_i^T\bz)<0\}$. If $\|\bh\|/\|\bx\|$ is small enough, the cardinality of $\cB$ is small and thus one expects the contribution of the second term in \eqref{eq:RWF_gradtwoterms} to be negligible.

We note that events $\cT^i$ are not statistically independent. To remove such dependency, we introduce two new series of events
\begin{flalign}
&\cT_1^i := \{\left||\ba_i^T\bx|-|\ba_i^T\bz|\right|\le 0.5 \alpha'_h \|\bh\| \},\\
&\cT_2^i :=\{\left||\ba_i^T\bx|-|\ba_i^T\bz|\right|\le 0.8 \alpha'_h \|\bh\| \}.
\end{flalign}
Due to Proposition \ref{prop:mediansandwichRWF}, the following inclusion property
\begin{flalign}
\cT_1^i \subseteq \cT^i \subseteq \cT_2^i \label{eq:RWF_simpleinclusion}
\end{flalign}
holds for all $i$, where $\cT^i$ is defined in Algorithm \ref{alg:RWF_mtwf}. It is easier to work with these new events because $\cT_1^i$'s (resp. $\cT_2^i$'s) are statistically independent for any fixed $\bx$ and $\bz$. Because of the inclusion property \eqref{eq:RWF_simpleinclusion}, we have 
\begin{flalign}
\left\langle\nabla \cR_{tr}(\bz),\bh\right\rangle & \ge \frac{1}{m}\sum_{i\notin \cB}(\ba_i^T\bh)^2 \bone_{\cT_{1}^i} -\frac{1}{m}\sum_{i\in \cB}|\ba_i^T\bz+\ba_i^T\bx|\cdot|\ba_i^T\bh| \bone_{\cT_2^i}.
\label{eq:RWF_strongconvex2terms}
\end{flalign}
Under the condition $i \notin \cB$, we have $\cT_{1}^i= \{\left|\ba_i^T\bh\right|\le 0.5 \alpha'_h \|\bh\| \}$. Under the condition $i \in \cB$, we have $\cT_{2}^i= \{\left|\ba_i^T\bx+\ba_i^T\bz\right|\le 0.8 \alpha'_h \|\bh\| \}$. For convenience, we introduce two parameters $\gamma_1=0.5\alpha'_h$ and $\gamma_2=0.8\alpha'_h$.

We next bound the two terms in \eqref{eq:RWF_strongconvex2terms} respectively. For the first term, because of the inclusion $\cB\subseteq\{i: |\ba_i^T\bx|<|\ba_i^T\bh|\}$, we have 
\begin{flalign*}
\frac{1}{m}\sum_{i\notin \cB}(\ba_i^T\bh)^2 \bone_{\cT_{1}^i}&=\frac{1}{m}\sum_{i\notin \cB}(\ba_i^T\bh)^2 \bone_{\{|\ba_i^T\bh|\le \gamma_1\|\bh\|\}}\\
&\ge \frac{1}{m} \sum_{i=1}^m (\ba_i^T\bh)^2 \bone_{\{|\ba_i^T\bh|\le \gamma_1\|\bh\|\}}\bone_{\{|\ba_i^T\bx|\ge |\ba_i^T\bh|\}}\\
&\ge \frac{1}{m} \sum_{i=1}^m (\ba_i^T\bh)^2 \bone_{\{|\ba_i^T\bh|\le \gamma_1\|\bh\|\}}\bone_{\{|\ba_i^T\bx|\ge \gamma_1\|\bh\|\}}.
\end{flalign*}
A simpler version of Lemma 4 in \cite{chen2015solving} gives that if $m>c_0 n$, with probability at least $1-c_1\exp(-c_2 m\epsilon^2)$
\begin{flalign}
\frac{1}{m} \sum_{i=1}^m (\ba_i^T\bh)^2 \bone_{\{|\ba_i^T\bh|\le \gamma_1\|\bh\|\}}\bone_{\{|\ba_i^T\bx|\ge \gamma_1\|\bh\|\}}\ge (1-\zeta'_1-\zeta'_2-\epsilon)\|\bh\|^2 \label{eq:proplower1RWF}
\end{flalign}
holds for all $\bh\in \bbR^n$, where $\zeta'_1:=1-\min\left\{\mE\left[\xi^2\bone_{\{\xi\ge \sqrt{1.01}\gamma_1\frac{\|\bh\|}{\|\bx\|}\}}\right], \mE\left[\bone_{\{\xi\ge \sqrt{1.01}\gamma_1\frac{\|\bh\|}{\|\bx\|}\}}\right]\right\}$ and $\zeta'_2:=\mE\left[\xi^2\bone_{\{|\xi|>\sqrt{0.99}\gamma_1\}}\right]$ for $\xi\sim \cN(0,1)$.

For the second term, we have 
\begin{flalign}
\frac{1}{m}\sum_{i\in \cB}|\ba_i^T\bz+\ba_i^T\bx|\cdot|\ba_i^T\bh| \bone_{\cT_2^i}&\le\gamma_2\|\bh\|\frac{1}{m}\sum_{i\in \cB}|\ba_i^T\bh|\le\gamma_2\|\bh\|\frac{1}{m}\sum_{i=1}^m|\ba_i^T\bh|\bone_{\{|\ba_i^T\bx|<|\ba_i^T\bh|\}},\label{eq:proplower2RWF}
\end{flalign}
where the second inequality is due to the inclusion property $\cB\subseteq \{i: |\ba_i^T\bx|<|\ba_i^T\bh|\}$.
\begin{lemma} \label{lem:RWF_RCupper}
	For any $\epsilon>0$, if $m>c_0 n \epsilon^{-2}\log \epsilon^{-1}$, then with probability at least $1-C \exp(-c_1 \epsilon^2 m)$,
	\begin{flalign}
	\frac{1}{m}\sum_{i=1}^m |\ba_i^T\bh|\cdot \bone_{\{|\ba_i^T\bx|<|\ba_i^T\bh|\}}\le \left(0.12+\epsilon\right)\|\bh\| \label{eq:lemma2upper}
	\end{flalign}
	holds for all non-zero vectors $\bx, \bh\in \bbR^n$ satisfying $\|\bh\|\le \frac{1}{20}\|\bx\|$. Here, $c_0, c_1, C>0$ are some universal constants.
\end{lemma}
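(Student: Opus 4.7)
\emph{Reduction.} By homogeneity in $(\bh,\bx)$ we may assume $\|\bh\|=1$. Set $\hat{\bh}=\bh$, $\hat{\bx}=\bx/\|\bx\|$, and $t:=1/\|\bx\|\in(0,1/20]$. Then the left-hand side becomes $m^{-1}\sum_i |\ba_i^T\hat{\bh}|\bone_{\{|\ba_i^T\hat{\bx}|<t|\ba_i^T\hat{\bh}|\}}$, which is monotonically non-decreasing in $t$. Hence it suffices to prove the bound at the extremal value $t=1/20$, uniformly over $\hat{\bh},\hat{\bx}\in S^{n-1}$.

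\emph{Pointwise expectation bound.} For fixed unit vectors $\hat{\bh},\hat{\bx}$ with correlation $\rho=\hat{\bh}^T\hat{\bx}$, set $u=\ba^T\hat{\bh}$ and $v=\ba^T\hat{\bx}$; these are jointly standard Gaussian with correlation $\rho$. For $|\rho|<1$ we have $v\mid u\sim\cN(\rho u,1-\rho^2)$, and the mass of that Gaussian over $(-t|u|,t|u|)$ is bounded either by the interval length divided by the maximum density, $2t|u|/\sqrt{2\pi(1-\rho^2)}$, or, when $|\rho|>t$, by the same length times the tail-decayed density $\phi\bigl((|\rho|-t)u/\sqrt{1-\rho^2}\bigr)$. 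Integrating against $\phi(u)$ yields
\[\mE\bigl[|u|\bone_{\{|v|<t|u|\}}\bigr]\le \frac{2t}{\sqrt{2\pi}}\cdot\max_{\rho\in(-1,1)}\frac{1-\rho^2}{(1-2|\rho|t+t^2)^{3/2}},\]
which at $t=1/20$ evaluates to at most $0.05$; the degenerate cases $|\rho|=1$ trivially give expectation $0$. In particular, the expectation is bounded uniformly in $(\hat{\bh},\hat{\bx})$ by a constant strictly below $0.12$.

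\emph{Concentration plus $\tau$-net.} For each fixed pair $(\hat{\bh},\hat{\bx})$ the summands $Z_i:=|\ba_i^T\hat{\bh}|\bone_{\{|\ba_i^T\hat{\bx}|<t|\ba_i^T\hat{\bh}|\}}\in[0,|\ba_i^T\hat{\bh}|]$ are sub-Gaussian with an absolute-constant parameter, so Hoeffding's inequality yields $\bbP(|m^{-1}\sum_i Z_i-\mE Z_1|>\epsilon/2)\le 2e^{-c_1 m\epsilon^2}$. Cover $S^{n-1}\times S^{n-1}$ by a $\tau$-net of cardinality at most $(1+2/\tau)^{2n}$ with $\tau$ polynomially small in $m+n$; taking the union bound over this net gives the claim pointwise on the net with failure probability $\le e^{-c_2 m\epsilon^2}$ whenever $m\ge c_0\epsilon^{-2}(\log\epsilon^{-1})\,n\log n$, exactly as in the sample-complexity calculation of Appendix~\ref{app:SamMedCon}.

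\emph{Main obstacle: extending off the net.} The only non-routine step is passing from the $\tau$-net to arbitrary $(\hat{\bh}',\hat{\bx}')$, because $\bone_{\{|\ba_i^T\hat{\bx}|<t|\ba_i^T\hat{\bh}|\}}$ is discontinuous in $(\hat{\bh},\hat{\bx})$. On the high-probability event $E_1:=\{\max_i\|\ba_i\|^2\le m+n\}$ (valid with probability at least $1-me^{-m/8}$), any $(\hat{\bh}',\hat{\bx}')$ within $\tau$ of a net point $(\hat{\bh},\hat{\bx})$ satisfies $|\ba_i^T(\hat{\bh}'-\hat{\bh})|,|\ba_i^T(\hat{\bx}'-\hat{\bx})|\le\tau\sqrt{m+n}$, which yields the sandwich
\[|\ba_i^T\hat{\bh}'|\bone_{\{|\ba_i^T\hat{\bx}'|<t|\ba_i^T\hat{\bh}'|\}}\le \bigl(|\ba_i^T\hat{\bh}|+\tau\sqrt{m+n}\bigr)\bone_{\{|\ba_i^T\hat{\bx}|<t|\ba_i^T\hat{\bh}|+(1+t)\tau\sqrt{m+n}\}}.\]
The sum at the net point with the \emph{enlarged} indicator is controlled by a parallel concentration argument, whose expectation differs from the original by at most $O(\tau\sqrt{m+n}/\sqrt{2\pi})$, since the Gaussian density bounds the probability that $|v|-t|u|$ lies in any interval of length $(1+t)\tau\sqrt{m+n}$. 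Choosing $\tau=\epsilon/[c(m+n)]$ absorbs both this expectation shift and the additive $\tau\sqrt{m+n}$ term into the overall $\epsilon$ tolerance without inflating the sample complexity beyond $n\log n$, completing the proof. This sandwich-with-a-band step is the main technical hurdle; everything else is bookkeeping.
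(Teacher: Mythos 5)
Your argument is sound and reaches the stated bound, but it takes a genuinely different route from the paper's proof. The paper first decouples the indicator through a fixed threshold $\gamma=0.15$, using $\bone_{\{|\ba_i^T\bx|<|\ba_i^T\bh|\}}\le \bone_{\{|\ba_i^T\bx|<\gamma\|\bx\|\}}+\bone_{\{|\ba_i^T\bh|\ge 3\|\bh\|\}}$ (valid since $\|\bh\|\le\frac{1}{20}\|\bx\|$), bounds the two resulting expectations separately by $0.11$ (evaluated numerically as a function of the correlation $\rho$) and $0.01$, and obtains uniformity by replacing each indicator with a Lipschitz surrogate ($\chi_x$, $\chi_h$) combined with the rank-two $\ell_1$-isometry event $\frac{1}{m}\|\cA(\bM)\|_1\le c_2\|\bM\|_F$; because the surrogate makes the empirical process Lipschitz in $(\bx,\bh)$, a net of fixed resolution $\epsilon$ suffices, which is what yields the stated sample complexity $m\gtrsim n\epsilon^{-2}\log\epsilon^{-1}$. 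You instead bound $\mE\bigl[|u|\bone_{\{|v|<t|u|\}}\bigr]$ directly by conditioning $v\mid u\sim\cN(\rho u,1-\rho^2)$, obtaining the fully analytic estimate $\frac{2t}{\sqrt{2\pi}}\cdot\frac{1-\rho^2}{(1-2|\rho|t+t^2)^{3/2}}\le 0.05$ at $t=1/20$, which is tighter than the paper's $0.12$ and avoids any numerical integration; and you handle uniformity with a fine net of resolution $\tau\asymp\epsilon/(m+n)$, the event $\max_i\|\ba_i\|^2\le m+n$, and an indicator-enlargement sandwich, i.e., the technique of Appendix~\ref{app:SamMedCon} rather than the Lipschitz-surrogate device of Appendix~\ref{app:lem:RWF_RCupper}. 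Two caveats. First, the fine net forces a union bound over roughly $\exp(\Theta(n\log m))$ points, so your proof requires $m\gtrsim\epsilon^{-2}(\log\epsilon^{-1})\,n\log n$, an extra $\log n$ beyond the lemma as stated; this is harmless for the paper (Proposition~\ref{prop:strongconvexRWF} already assumes $m\gtrsim n\log n$), but the surrogate route avoids the loss. Second, your justification of the expectation shift for the enlarged indicator, band length times the Gaussian density, only gives $O\bigl(\delta/\sqrt{1-\rho^2}\bigr)$, which is not uniformly $O(\delta)$ as $|\rho|\to 1$; you need to reuse your own tail-decay refinement (the band sits at distance about $(|\rho|-t)|u|$ from the conditional mean $\rho u$), which gives a shift of order $\delta\cdot\frac{\sqrt{1-\rho^2}}{1-2|\rho|t+t^2}=O(\delta)$ uniformly in $\rho$. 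This is a fixable gloss, not a structural gap, since it is the same computation as your main expectation bound.
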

\begin{proof}
	See Appendix \ref{app:lem:RWF_RCupper}.
\end{proof} 
Thus, putting together \eqref{eq:proplower1RWF}, \eqref{eq:proplower2RWF} and Lemma \ref{lem:RWF_RCupper} concludes the proof.

\subsection{Proof of Proposition \ref{prop:localsmoothRWF} } \label{app:prop:localsmoothRWF}
This proof adapts the proof of Lemma 7 in \cite{chen2015solving}.
Denote $v_i:=\left(\ba_i^T\bz-|\ba_i^T\bx|\sgn(\ba_i^T\bz)\right)\bone_{\cT^i}$. Then 
\begin{flalign*}
\nabla\cR_{tr}(\bz)=\frac{1}{m}\bA^T\bv,
\end{flalign*}
where $\bA$ is a matrix with each row being $\ba_i^T$ and $\bv$ is a $m-$dimensional vector with each entry being $v_i$. Thus, for sufficiently large $m/n$, we have
\begin{flalign*}
\left\|\nabla \cR_{tr} (\bz)\right\|=\left\|\frac{1}{m}\bA^T\bv\right\|\le \frac{1}{m}\|\bA\|\cdot \|\bv\|\le (1+\delta)\frac{\|\bv\|}{\sqrt{m}}
\end{flalign*}
where the last inequality is due to the spectral norm bound $\|\bA\|\le \sqrt{m}(1+\delta)$ following from \cite[Theorem 5.32]{Vershynin2012}.

We next bound $\|\bv\|$. Let $\bv=\bv^{(1)}+\bv^{(2)}$, where $v_i^{(1)}=\ba_i^T\bh\bone_{\cT^i\backslash\cB^i}$ and $v_i^{(2)}=(\ba_i^T\bx+\ba_i^T\bz)\bone_{\cT^i\cap \cB^i}$, where $B^{i}:=\{(\ba_i^T\bx)(\ba_i^T\bz)<0\}$. By triangle inequality, we have
$\|\bv\|\le \|\bv^{(1)}\|+\|\bv^{(2)}\|$. Furthermore, given $m>c_0n$, by \cite[Lemma 3.1]{candes2013phaselift} with probability $1-\exp(-cm)$, we have
\begin{flalign*}
\frac{1}{m}\|\bv^{(1)}\|^2=\frac{1}{m}\sum_{i=1}^{m} (\ba_i^T\bh)^2\le(1+\delta)\|\bh\|^2.
\end{flalign*}
By Lemma \ref{lem:rareEvent}, we have with probability $1-C\exp(-c_1 m)$
\begin{flalign*}
\frac{1}{m}\|\bv^{(2)}\|^2\le (0.8 \alpha'_h \|\bh\|)^2\cdot\left(\frac{1}{m}\sum_{i=1}^m \bone_{\{(\ba_i^T\bx)(\ba_i^T\bz)<0\}}\right)\le 0.8\|\bh\|^2
\end{flalign*}
holds, where the last inequality is due to Lemma \ref{lem:rareEvent}.
Hence,
\begin{flalign*}
\frac{\|\bv\|}{\sqrt{m}}\le \left(\sqrt{1+\delta}+\sqrt{0.8}\right) \|\bh\|.
\end{flalign*}
This concludes the proof. 

\subsection{Proof of Lemma \ref{lem:rareEvent}}\label{app:lem:rareEvent}

Denote  correlation $\rho:=\frac{\bz^T\bx}{\|\bz\|\|\bx\|}$. Under the condition $\|\bz-\bx\|\le \frac{1}{11}\|\bx\|$, simple calculation yields $0.995<\rho\le1$. It suffices to show that the result holds with high probability for all $\bx$ and $\bz$ satisfying $\rho>0.995$. Since now  the claim is invariant with the norms of $\bx$ and $\bz$, we assume that both $\bx$ and $\bz$ have unit length without loss of generality.

We first establish the result for any fixed $\bx$ and $\bz$ and then develop a uniform bound by covering net argument in the end. We introduce a Lipschitz function to approximate the indicator function. Define
\begin{flalign*}
	\chi(t):=\begin{cases}
1, &\text{if } t<0;\\
-\frac{1}{\delta}\cdot t+1, & \text{if } 0\le t\le \delta;\\
0, & \text{else};
\end{cases}
\end{flalign*}
and then $\chi(t)$ is a Lipschitz function with Lipschitz constant $\frac{1}{\delta}$. In the following proof, we set $\delta=0.001$. We further have
\begin{flalign}
	\bone_{\{(\ba_i^T\bx)(\ba_i^T\bz)<0\}}\le \chi\left((\ba_i^T\bx)(\ba_i^T\bz)\right) \le  \bone_{\{(\ba_i^T\bx)(\ba_i^T\bz)<\delta\}}.
\end{flalign}
For convenience, we denote $b_{i}:=\ba_i^T\bx$ and $ \tilde{b}_i:=\ba_i^T\bz$. Then $(b_i, \tilde{b}_i)$ takes the jointly Gaussian distribution with mean $\mu=(0,0)^T$ and correlation $\rho$ ($b_{i}$ and $ \tilde{b}_i$ have unit variance). We next estimate the expectation of $\bone_{\{(\ba_i^T\bx)(\ba_i^T\bz)<\delta\}}$ as follows.
\begin{flalign}
	\mE [\bone_{\{(\ba_i^T\bx)(\ba_i^T\bz)<\delta\}}]=\bbP\left\{(\ba_i^T\bx)(\ba_i^T\bz)<\delta\right\} =\iint_{\tau_1\cdot \tau_2<\delta}  f(\tau_1,\tau_2) d\tau_1 d\tau_2 ,
	\end{flalign}
where  $f(\tau_1,\tau_2)$  is the density of the jointly Gaussian random variables $(b_i,\tilde{b}_i)$.  Note that $\mE [\bone_{\{(\ba_i^T\bx)(\ba_i^T\bz)<\delta\}}]$ is  decreasing on $\rho$ and for the case $\rho=0.995$ we calculate $\mE [\bone_{\{(\ba_i^T\bx)(\ba_i^T\bz)<\delta\}}]= 0.045$ numerically. This implies that 
$$\mE [\chi\left((\ba_i^T\bx)(\ba_i^T\bz)\right)]\le 0.045 $$
for $\delta=0.001$.
Furthermore, $\chi\left((\ba_i^T\bx)(\ba_i^T\bz)\right)$ for all $i$ are bounded and hence sub-Gaussian.
By Hoeffding type inequality for sub-Gaussian tail \cite{Vershynin2012}, we have
\begin{flalign}
	\bbP\left[\frac{1}{m}\sum_{i=1}^m\chi\left((\ba_i^T\bx)(\ba_i^T\bz)\right) >\left(0.045+\epsilon\right) \right]<\exp(-cm\epsilon^2),
\end{flalign}
for some universal constant $c$, as long as $\rho\ge 0.995$. 

We have proved so far that the claim holds for fixed $\bx$ and $\bz$. 
We next obtain a uniform bound over all $\bx$ and $\bz$ with unit length. 
Let $\mathcal{N}'_\epsilon$ be an $\epsilon$-net covering the unit sphere  in $\bbR^n$ and set
\begin{flalign}
	\mathcal{N}_\epsilon=\{(\bx_0,\bz_0): (\bx_0,\bz_0)\in \mathcal{N}'_\epsilon\times \mathcal{N}'_\epsilon\}.
\end{flalign}
One has cardinality bound (i.e., the upper bound on the covering number) $|\mathcal{N}_\epsilon|\le (1+2/\epsilon)^{2n}$. Then for any pair $(\bx,\bz)$ with $\|\bx\|=\|\bz\|=1$, there exists a pair $(\bx_0,\bz_0)\in \cN_{\epsilon}$ such that $\|\bx-\bx_0\|\le \epsilon$ and $\|\bz-\bz_0\|\le \epsilon$. Taking the union bound for all the points on the net, we claim that
\begin{flalign}
	\frac{1}{m}\sum_{i=1}^m \chi\left((\ba_i^T\bx_0)(\ba_i^T\bz_0)\right) \le 0.045+\epsilon, \quad \forall (\bx_0, \bz_0)\in \cN_{\epsilon} \label{eq:RWF_Proponnet}
\end{flalign}
holds with probability at least $1-(1+2/\epsilon)^{2n}\exp(-cm\epsilon^2)$.

Since $\chi(t)$ is Lipschitz with constant $1/\delta$, we have
\begin{flalign}
	\left|\chi\left((\ba_i^T\bx)(\ba_i^T\bz)\right)-\chi\left((\ba_i^T\bx_0)(\ba_i^T\bz_0)\right)\right|\le \frac{1}{\delta}\left|(\ba_i^T\bx)(\ba_i^T\bz)-(\ba_i^T\bx_0)(\ba_i^T\bz_0)\right|. \label{eq:RWF_lipschitzcon}
\end{flalign}
Moreover, by \cite[Lemma 1]{chen2015solving},
\begin{flalign}
	\frac{1}{m}\|\cA(\bM)\|_1\le c_2 \|\bM\|_F, \quad \quad \text{for all symmetric rank-2 matrices } \bM\in \bbR^{n\times n}, \label{eq:RWF_eventA}
\end{flalign}
holds with probability at least $1-C\exp(-c_1m)$ as long as $m>c_0 n$ for some constants $C, c_0, c_1, c_2>0$. Consequently, on the event that \eqref{eq:RWF_eventA} holds, we have
\begin{flalign*}
	&\left|\frac{1}{m}\sum_{i=1}^m \chi\left((\ba_i^T\bx)(\ba_i^T\bz)\right)-\frac{1}{m}\sum_{i=1}^m \chi\left((\ba_i^T\bx_0)(\ba_i^T\bz_0)\right)\right|\\
	&\le \frac{1}{m}\sum_{i=1}^m \left|\chi\left((\ba_i^T\bx)(\ba_i^T\bz)\right)-\chi\left((\ba_i^T\bx_0)(\ba_i^T\bz_0)\right)\right|\\
	&\le \frac{1}{\delta}\cdot \frac{1}{m}\|\cA(\bx\bz^T-\bx_0\bz_0^T)\|_1 \quad\quad \text{due to \eqref{eq:RWF_lipschitzcon}}\\
	&\le \frac{1}{\delta} \cdot c_2\|\bx\bz^T-\bx_0\bz_0^T\|_F \quad\quad \text{due to \eqref{eq:RWF_eventA}}\\
	&\le \frac{1}{\delta}\cdot c_2(\|\bx-\bx_0\|\cdot\|\bz\|+\|\bz-\bz_0\|\cdot\|\bx_0\|)\le 2c_3\epsilon/\delta.
\end{flalign*}

On the intersection of events that \eqref{eq:RWF_Proponnet} and \eqref{eq:RWF_eventA} hold, we have
\begin{flalign}
\frac{1}{m}\sum_{i=1}^m \chi\left((\ba_i^T\bx)(\ba_i^T\bz)\right) \le\left(0.045+\epsilon+2c_3\epsilon/\delta\right),
\end{flalign}
for all $\bx$ and $\bz$ with unit length and $\rho\ge 0.995$.  
 Since $\epsilon$ can be arbitrarily small, the proof is completed.

\subsection{Proof of Lemma \ref{lem:RWF_RCupper}} \label{app:lem:RWF_RCupper}

We first observe that for any $\gamma$,
\begin{flalign}
	\bone_{\{|\ba_i^T\bx|<|\ba_i^T\bh|\}}&\le \bone_{\{|\ba_i^T\bx|<\gamma\|\bx\|\}}+\bone_{\{|\ba_i^T\bh|\ge \gamma\|\bx\|\}}\le  \bone_{\{|\ba_i^T\bx|<\gamma\|\bx\|\}}+\bone_{\{|\ba_i^T\bh|\ge 20\gamma\|\bh\|\}} \label{eq:upperb}
\end{flalign}
where the last inequality is due to the assumption $\frac{\|\bh\|}{\|\bx\|}\le \frac{1}{20}$.

To establish the lemma, we set $\gamma=0.15$ and denote $\gamma':=20\gamma=3$.  We next respectively show  that
\begin{flalign}
	\frac{1}{m}\sum_{i=1}^m|\ba_i^T\bh|\bone_{\{|\ba_i^T\bx|<\gamma\|\bx\|\}}\le (0.11+\epsilon)\|\bh\|  \label{eq:summand1}
\end{flalign}
for all $\bx, \bh \in \bbR^n$,  and 
\begin{flalign}
	\frac{1}{m}\sum_{i=1}^m|\ba_i^T\bh|\bone_{\{|\ba_i^T\bh|>\gamma'\|\bh\|\}}\le (0.01+\epsilon)\|\bh\| \label{eq:summand2}
\end{flalign}
for all $\bh\in \bbR^n$.

We first prove \eqref{eq:summand1}. Without loss of generality, we assume that $\bh$ and $\bx$ have unit length. 
We introduce a Lipschitz function to approximate the indicator functions, which is defined as
\begin{flalign*}
	\chi_x(t):=\begin{cases}
1, &\text{if } |t|<\gamma;\\
\frac{1}{\delta}(\gamma-|t|)+1, & \text{if }  \gamma\le |t|\le \gamma+\delta;\\
0, & \text{else}.
\end{cases}
\end{flalign*}
 Then $\chi_x(t)$ is a Lipschitz function with constant $\frac{1}{\delta}$. We further have 
\begin{flalign}
	\bone_{\{|\ba_i^T\bx|<\gamma\}}\le \chi_x(\ba_i^T\bx) \le  \bone_{\{|\ba_i^T\bx|<\gamma+\delta\}}.
\end{flalign}
We first prove bounds for any fixed pair $\bh, \bx$, and then develop a uniform bound later on.

We next estimate the expectation of $|\ba_i^T\bh|\bone_{\{|\ba_i^T\bx|<\gamma+\delta\}}$, 
\begin{flalign}
	\mE [|\ba_i^T\bh|\bone_{\{|\ba_i^T\bx|<\gamma+\delta\}}]  =\iint_{-\infty}^\infty |\tau_1|\bone_{\{|\tau_2|<\gamma+\delta\}}\cdot f(\tau_1,\tau_2) d\tau_1 d\tau_2 ,
	\end{flalign}
where  $f(\tau_1,\tau_2)$  is the density of two jointly Gaussian random variables with correlation $\rho=\frac{\bh^T\bx}{\|\bh\|\|\bx\|}\neq \pm 1$. We then continue to derive
\begin{flalign}
	\mE[|\ba_i^T\bh|\bone_{\{|\ba_i^T\bx|<\gamma+\delta\}}] &= \frac{1}{2\pi\sqrt{1-\rho^2}}\int_{-\infty}^\infty  |\tau_1| \exp\left(-\frac{\tau_1^2}{2}\right) \cdot\int_{-(\gamma+\delta)}^{\gamma+\delta}\exp\left(-\frac{(\tau_2-\rho\tau_1)^2}{2(1-\rho^2)}\right) d\tau_2 d\tau_1 \label{eq:increasefun}\\
	&= \frac{1}{\sqrt{2}\pi}\int_{-\infty}^\infty  |\tau_1| \exp\left(-\frac{\tau_1^2}{2}\right) \cdot\int_{\frac{-\gamma-\delta-\rho\tau_1}{\sqrt{2(1-\rho^2)}}}^{\frac{\gamma+\delta-\rho\tau_1}{\sqrt{2(1-\rho^2)}}}\exp\left(-\tau^2\right) d\tau d\tau_1 \quad\quad\text{by changing variables}\nn\\
		&= \frac{1}{\sqrt{8\pi}}\int_{-\infty}^\infty |\tau_1| \exp\left(-\frac{\tau_1^2}{2}\right) \cdot \left(\erf\left(\frac{\gamma+\delta-\rho\tau_1}{\sqrt{2(1-\rho^2)}}\right)-\erf\left(\frac{-\gamma-\delta-\rho\tau_1}{\sqrt{2(1-\rho^2)}}\right) \right) d\tau_1 \label{eq:hardint}
\end{flalign}

For $|\rho|<1$, $\mE[|\ba_i^T\bh|\bone_{\{|\ba_i^T\bx|<\gamma+\delta\}}]$ is a continuous function of $\rho$. The last integral \eqref{eq:hardint} can be calculated numerically. Figure \ref{fig:hardintsupp} plots $\mE[|\ba_i^T\bh|\bone_{\{|\ba_i^T\bx|<\gamma+\delta\}}]$ for $\gamma=0.15$ and $\delta=0.01$ over $\rho\in (-1,1)$. Furthermore, \eqref{eq:increasefun} indicates that $\mE[|\ba_i^T\bh|\bone_{\{|\ba_i^T\bx|<\gamma+\delta\}}]$ is monotonically increasing with both $\theta$ and $\delta$. Thus, we obtain a universal bound
\begin{flalign}
\mE[|\ba_i^T\bh|\bone_{\{|\ba_i^T\bx|<\gamma+\delta\}}]\le 0.11\|\bh\| \quad \text{for } \gamma<0.15 \text{ and } \delta=0.01,
\end{flalign}
which further implies $\mE[|\ba_i^T\bh|\chi_x(\ba_i^T\bx)]\le 0.11\|\bh\|$ for $ \gamma<0.15$ and $\delta=0.01$.

\begin{figure}[th]
\centering 
\includegraphics[width=3.5in]{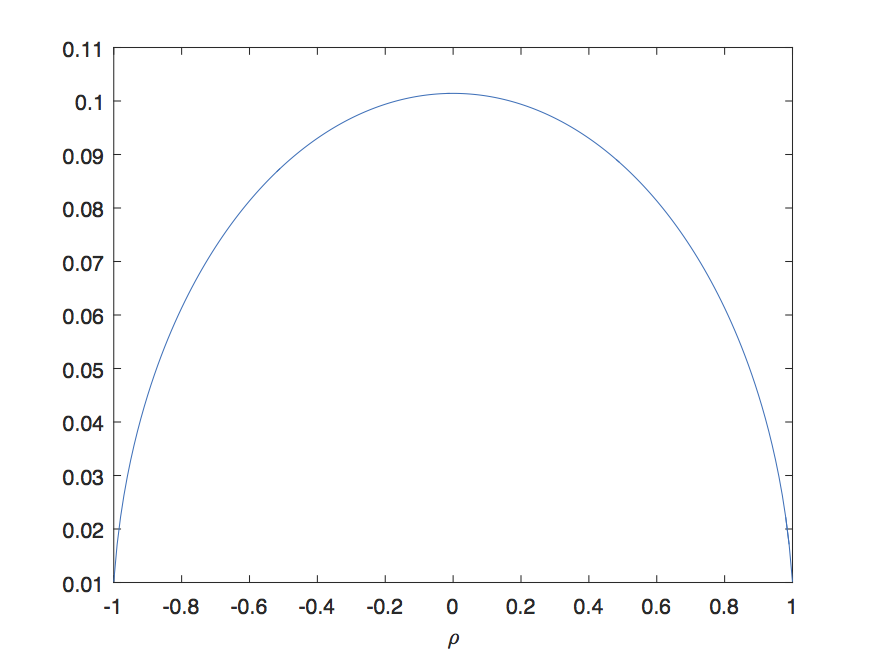}
\caption{$\mE[|\ba_i^T\bh|\bone_{\{|\ba_i^T\bx|<\gamma+\delta\}}]$ with respect to $\rho$}
\label{fig:hardintsupp}
\end{figure}

Furthermore, $|\ba_i^T\bh|\chi_x(\ba_i^T\bx)$'s are sub-Gaussian with sub-Gaussian norm $\cO(\|\bh\|)$.
By the Hoeffding type of sub-Gaussian tail bound \cite{Vershynin2012}, we have
\begin{flalign}
	\cP\left[	\frac{1}{m}\sum_{i=1}^m |\ba_i^T\bh|\chi_x(\ba_i^T\bx) >\left(0.11+\epsilon\right)\|\bh\| \right]<\exp(-cm\epsilon^2),
\end{flalign}
for some universal constant $c$.

We have proved so far that the claim holds for a fixed pair $\bh,\bx$. 
We next obtain a uniform bound over all $\bx$ and $\bh$ with unit length. 
Let $\mathcal{N}'_\epsilon$ be a $\epsilon$-net covering the unit sphere  in $\bbR^n$ and set
\begin{flalign*}
	\mathcal{N}_\epsilon=\{(\bx_0,\bh_0): (\bx_0,\bh_0)\in \mathcal{N}'_\epsilon\times \mathcal{N}'_\epsilon\}.
\end{flalign*}
One has cardinality bound (i.e., the upper bound on the covering number) $|\mathcal{N}_\epsilon|\le (1+2/\epsilon)^{2n}$. Then for any pair $(\bx,\bh)$ with $\|\bx\|=\|\bh\|=1$, there exists a pair $(\bx_0,\bh_0)\in \cN_{\epsilon}$ such that $\|\bx-\bx_0\|\le \epsilon$ and $\|\bh-\bh_0\|\le \epsilon$. Taking the union bound for all the points on the net, one can show
\begin{flalign}
	\frac{1}{m}\sum_{i=1}^m |\ba_i^T\bh_0|\chi_x\left(\ba_i^T\bx_0\right) \le 0.11+\epsilon, \quad \forall (\bx_0, \bh_0)\in \cN_{\epsilon} \label{eq:RWF_upperLemma_onnet}
\end{flalign}
holds with probability at least $1-(1+2/\epsilon)^{2n}\exp(-cm\epsilon^2)$.

Since $\chi_x(t)$ is Lipschitz with constant $1/\delta$, we have the following bound
\begin{flalign}
	\left|\chi_x\left(\ba_i^T\bx\right)-\chi_x\left(\ba_i^T\bx_0\right)\right|\le \frac{1}{\delta}\left|\ba_i^T(\bx-\bx_0)\right|. \label{eq:RWF_upperLemma_lipschitzcon}
\end{flalign}
Consequently, on the event that \eqref{eq:RWF_eventA} holds, we have
\begin{flalign*}
	&\left|\frac{1}{m}\sum_{i=1}^m |\ba_i^T\bh|\chi_x\left(\ba_i^T\bx\right)-\frac{1}{m}\sum_{i=1}^m |\ba_i^T\bh_0|\chi_x\left(\ba_i^T\bx_0\right)\right|\\
	&\le \frac{1}{m}\sum_{i=1}^m \left ||\ba_i^T\bh|\chi_x\left(\ba_i^T\bx\right)-|\ba_i^T\bh_0|\chi_x\left(\ba_i^T\bx_0\right)\right|\\
	&\le \frac{1}{m}\sum_{i=1}^m \left |\ba_i^T(\bh-\bh_0)\right|+\frac{1}{m}\sum_{i=1}^m \frac{1}{\delta}\left |\ba_i^T\bh_0\right|\cdot\left|\ba_i^T\bx-\ba_i^T\bx_0\right| \quad\quad \text{due to \eqref{eq:RWF_upperLemma_lipschitzcon}}\\
	&\le c'_2\|\bh-\bh_0\|+\frac{1}{\delta} \cdot c_2\|\bh_0(\bx-\bx_0)^T\|_F \quad\quad \text{due to \eqref{eq:RWF_eventA}}\\
	&\le c_3\epsilon/\delta.
\end{flalign*}

On the intersection of events that \eqref{eq:RWF_upperLemma_onnet} and \eqref{eq:RWF_eventA} hold, we have
\begin{flalign}
\frac{1}{m}\sum_{i=1}^m |\ba_i^T\bh|\chi_x\left(\ba_i^T\bx_0\right) \le\left(0.11+\epsilon+2c_3\epsilon/\delta\right), \label{eq:RWF_upperLemEq1}
\end{flalign}
for all $\bx$ and $\bh$ with unit length.

We next prove \eqref{eq:summand2}. Without loss of generality, we assume that $\bh$ has unit length. 
We introduce a Lipschitz function to approximate the indicator functions, which is defined as
\begin{flalign*}
	\chi_h(t):=\begin{cases}
|t|, &\text{if } |t|>\gamma';\\
\frac{1}{\delta}(|t|-\gamma')+\gamma', & \text{if }  \gamma'(1-\delta)\le |t|\le \gamma';\\
0, & \text{else}.
\end{cases}
\end{flalign*}
Then, $\chi_h(t)$ is a Lipschitz function with constant $\frac{1}{\delta}$. We further have 
\begin{flalign}
	|\ba_i^T\bh|\bone_{\{|\ba_i^T\bh|>\gamma'\|\bh\|\}}\le \chi_h(\ba_i^T\bh) \le  |\ba_i^T\bh|\bone_{\{|\ba_i^T\bh|>\gamma'(1-\delta)\|\bh\|\}}.
\end{flalign}
We first prove bounds for any fixed $\bh$, and then develop a uniform bound later on.

We next estimate the expectation of $|\ba_i^T\bh|\bone_{\{|\ba_i^T\bh|>\gamma'(1-\delta)\|\bh\|\}}$ as follows:
\begin{flalign}
	\mE [|\ba_i^T\bh|\bone_{\{|\ba_i^T\bh|>\gamma'(1-\delta)\|\bh\|\}}] &=\int_{-\infty}^\infty |\tau|\bone_{\{|\tau|>\gamma'(1-\delta)\}}\cdot f(\tau) d\tau, \nn\\
	&= 2\cdot \frac{1}{\sqrt{2\pi}}\int_{\gamma'(1-\delta)}^\infty  \tau \exp\left(-\frac{\tau^2}{2}\right)  d\tau\nn\\
	&= \sqrt{\frac{2}{\pi}}\exp(-\gamma'^2(1-\delta)^2/2)< 0.01 \quad \quad \text{for } \gamma'= 3, \delta= 0.01,
\end{flalign}
where $f(\tau)$ is the density of the standard Gaussian distrution.
We note that $\mE [|\ba_i^T\bh|\bone_{\{|\ba_i^T\bh|>\gamma'(1-\delta)\|\bh\|\}}] $ is monotonically increasing with $\delta$ and decreasing with $\gamma'$. Furthermore,  $\mE [\chi_h(\ba_i^T\bh)] \le 0.01\|\bh\|$ for $ \gamma'\ge 3$ and $\delta\le 0.01$.

Moreover, $\chi_h(\ba_i^T\bh)$ for all $i$ are sub-Gaussian with sub-Gaussian norm $\cO(\|\bh\|)$.
By the Hoeffding type sub-Gaussian tail bound \cite{Vershynin2012}, we have
\begin{flalign}
	\cP\left[	\frac{1}{m}\sum_{i=1}^m \chi_h(\ba_i^T\bh) >\left(0.01+\epsilon\right)\|\bh\| \right]<\exp(-cm\epsilon^2),
\end{flalign}
for some universal constant $c$.

We have proved so far that the claim holds for a fixed $\bh$. 
We next obtain a uniform bound over all  $\bh$ with unit length. 
Let $\mathcal{N}_\epsilon$ be an $\epsilon$-net covering the unit sphere  in $\bbR^n$. 
One has cardinality bound (i.e., the upper bound on the covering number) $|\mathcal{N}_\epsilon|\le (1+2/\epsilon)^{n}$. Then for any $\bh$ with unit length, there exists a  $\bh_0\in \cN_{\epsilon}$ such that $\|\bh-\bh_0\|\le \epsilon$. Taking the union bound for all the points on the net, one can show
\begin{flalign}
	\frac{1}{m}\sum_{i=1}^m \chi_h(\ba_i^T\bh_0) \le 0.01+\epsilon, \quad \forall \bh_0\in \cN_{\epsilon} \label{eq:RWF_upperLemma_onnet2}
\end{flalign}
holds with probability at least $1-(1+2/\epsilon)^{n}\exp(-cm\epsilon^2)$.

Consequently, we have
\begin{flalign*}
	&\left|\frac{1}{m}\sum_{i=1}^m \chi_h(\ba_i^T\bh)-\frac{1}{m}\sum_{i=1}^m \chi_h(\ba_i^T\bh_0)\right|\\
	&\le \frac{1}{m}\sum_{i=1}^m\left|\chi_h(\ba_i^T\bh)-\chi_h(\ba_i^T\bh_0)\right|\\
	&\le\frac{1}{\delta}\cdot \frac{1}{m}\sum_{i=1}^m \left |\ba_i^T(\bh-\bh_0)\right| \\
	&\le\frac{1}{\delta} c'_2\|\bh-\bh_0\|\le c_3\epsilon/\delta,
\end{flalign*}
where the second inequality is because  $\chi_h(t)$ is Lipschitz  continuous with constant $1/\delta$. 

On the intersection of events that \eqref{eq:RWF_upperLemma_onnet2} and \eqref{eq:RWF_eventA} hold, we have
\begin{flalign}
\frac{1}{m}\sum_{i=1}^m \chi_h(\ba_i^T\bh) \le\left(0.01+\epsilon+c_3\epsilon/\delta\right), \label{eq:RWF_upperLemEq2}
\end{flalign}
for all $\bh$ with unit length.  
 
Putting together \eqref{eq:RWF_upperLemEq1} and \eqref{eq:RWF_upperLemEq2}, and since $\epsilon$ can be arbitrarily small, the proof is completed.

\end{document}